\DeclareMathOperator*{\argmax}{arg\,max}
\newtheorem{theorem}{Theorem}
\newtheorem{definition}{\bf Definition}
\newtheorem{proposition}{\bf Proposition}
\newtheorem{remark}{\bf Remark}
\newtheorem{example}{Example}[section]
\title{DO-IQS: Dynamics-Aware Offline Inverse Q-Learning \\ for Optimal Stopping with Unknown Gain Functions}
\author{%
  Anna Kuchko  \\
  Department of Statistics\\
  University of Warwick\\
  Coventry, UK \\
  \texttt{anna.kuchko@warwick.ac.uk} \\
  % examples of more authors
  % \And
  % Coauthor \\
  % Affiliation \\
  % Address \\
  % \texttt{email} \\
  % \AND
  % Coauthor \\
  % Affiliation \\
  % Address \\
  % \texttt{email} \\
  % \And
  % Coauthor \\
  % Affiliation \\
  % Address \\
  % \texttt{email} \\
  % \And
  % Coauthor \\
  % Affiliation \\
  % Address \\
  % \texttt{email} \\
}
\begin{document}

\maketitle

\begin{abstract}
  We consider the Inverse Optimal Stopping (IOS) problem where, based on stopped expert trajectories, one aims to recover the optimal stopping region through the continuation and stopping gain functions approximation. The uniqueness of the stopping region allows the use of IOS in real-world applications with safety concerns. Although current state-of-the-art inverse reinforcement learning methods recover both a Q-function and the corresponding optimal policy, they fail to account for specific challenges posed by optimal stopping problems. These include data sparsity near the stopping region, the non-Markovian nature of the continuation gain, a proper treatment of boundary conditions, the need for a stable offline approach for risk-sensitive applications, and a lack of a quality evaluation metric. These challenges are addressed with the proposed \textbf{D}ynamics-Aware \textbf{O}ffline \textbf{I}nverse \textbf{Q}-Learning for Optimal \textbf{S}topping (\textbf{DO-IQS}), which incorporates temporal information by approximating the cumulative continuation gain together with the world dynamics and the Q-function without querying to the environment.  In addition, a confidence-based oversampling approach is proposed to treat the data sparsity problem. We demonstrate the performance of our models on real and artificial data including an optimal intervention for the critical events problem.
\end{abstract}

\section{Introduction}
\label{sec:introduction}

Rapidly advancing autonomous systems require efficient and precise detection and anticipation of hazardous situations. The theory of optimal stopping (OS) can be applied to problems like stopping of an autonomous vehicle, identification of a shutdown time for a production system, change point detection, etc. Designing a reward function to solve the OS problems requires a high degree of involvement and an extensive knowledge of the system's dynamics. Recent advances in the area of Inverse Reinforcement Learning (IRL) allow one to learn both a Q-function and the optimal policy directly from demonstrations \cite{garg2021,Ho2016}. For safety-sensitive applications, the world dynamics model can also be learnt offline for greater risk awareness \cite{zeng2023understanding,zeng2023when}. While useful for general optimal control problems, these methods often fail while applied to OS problems, mainly due to sparsity of the expert data near the stopping region, non-Markovian nature of the continuation gain and/or the state-space dynamics. We address these challenges with our proposed Dynamics-Aware Offline Inverse Q-Learning model for OS problems with unknown gain functions (DO-IQS). 
\begin{figure*}[htbp]
\centering
\includegraphics[width=1\linewidth]{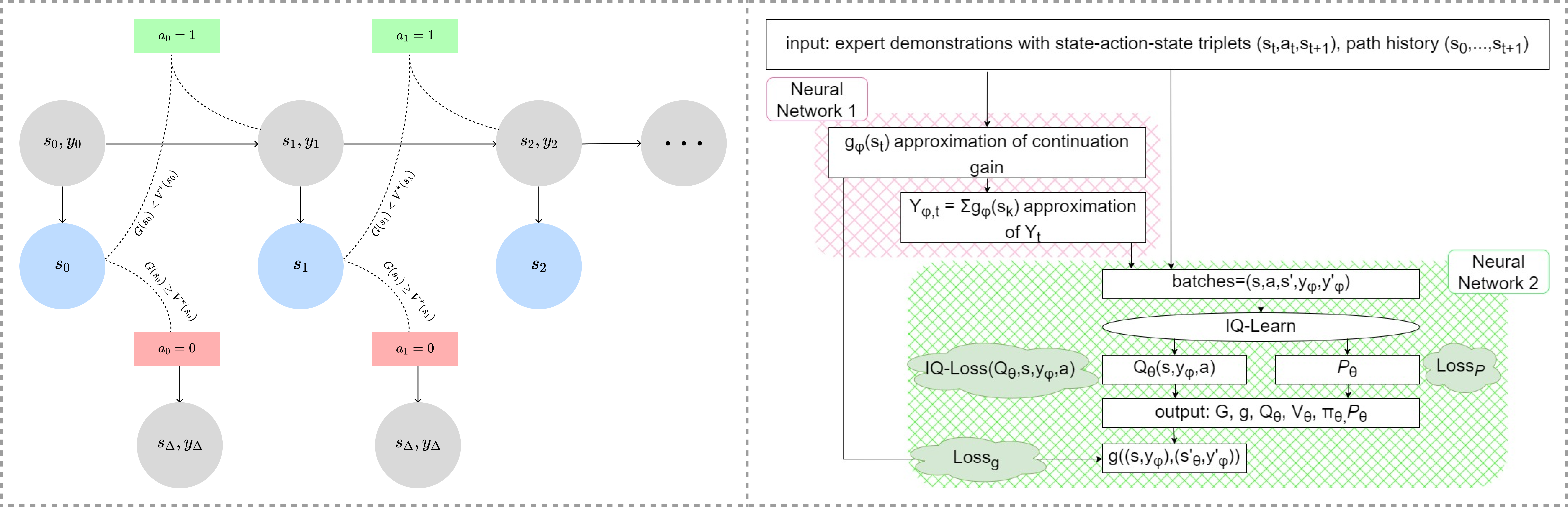} 
\caption{POMDP structure of the OS problem with cumulative continuation gain (left) and DO-IQS model structure (right).}
% \vskip 2pt
\centering
\label{fig:DIQS}
\end{figure*}

\subsection{Outline of contributions}\label{sec:outline}
The literature on the (inverse) OS problems is mostly limited to financial applications and considers very specific examples that overlook a more general setup and do not fully address the main challenges of a practical application. In this paper, we make the following contributions to the theory and practice of IOS problems: 
\begin{itemize}
    \item We formally define a reinforcement learning (RL) formulation of the OS problem accounting for both stopping and continuation gain functions, allowing us to use IRL methods (Definition \ref{def:smdp}).  
    \item An Inverse Optimal Stopping problem definition is presented (Definition \ref{def:ios_irl}). Fitting a stopping surface can be challenging in higher dimensions. Estimating gain functions instead provides higher precision and greater flexibility (\hyperlink{insight:reward_shaping}{Insight 1}). 
    \item We discuss the main challenges that arise in IOS problems, including the scarcity of data near the stopping boundary and the sparsity of stopping decisions caused by a natural class imbalance between continuation and stopping actions (Section \ref{sec:inverse_os}). We propose a solution to the problem via oversampling strategies (\hyperlink{insight:data_sparsity}{Insight 2}).
    \item To allow offline inference (while also correcting the Bellman error of the IQ-Learning algorithm), we estimate the environment model (\hyperlink{insight:dynamics_aware_reward}{Insight 3}). We further develop an approach, which we call a Dynamics-Aware Offline Inverse Q-Learning for Optimal Stopping (DO-IQS, Figure \ref{fig:DIQS}), to incorporate non-Markovian continuation reward learning to (Section \ref{sec:DO-IQS}) and to fight the sparsity problem.
    % \item For the purpose of hyper-parameter tuning and model selection, we propose treating the OS region recovery problem as a classification problem by using classification metrics for imbalanced data. We use Balanced Accuracy as a metric of choice and also report a trade-off between median time-to-event and median event-miss rate (Section \ref{sec:quality_evaluation}).
\end{itemize}
We demonstrate the performance of our approach on OS problems, including an OS of a two-dimensional Brownian motion and learning optimal intervention policies for critical events (Section \ref{sec:experiments}). 

\subsection{Related work}\label{sec:related_work}
The literature on IOS and related problems is scarce and often lacks generality. \citet{kruse2017inverse} considers a problem of finding a way to change (unknown) gain function in a way that makes a given stopping rule optimal and show how such a problem can be analytically solved for a particular class of one-dimensional diffusion processes. We note that this approach lacks generality and is hard to use in higher dimensions. \citet{qiao2013behavior} use Bayesian modification of apprenticeship learning first introduced by \citet{abbeel-ng-apprentice} to solve a standard secretary problem. \citet{alhafez2023lsiq} modify the IQ-Learn algorithm of \citet{garg2021} to account for problems with absorbing states by introducing a fixed target reward function and regressing it towards the absorbing states. Although this approach has some similarities with OS problems, it does not make use of the natural structure of the OS rewards and cannot be applied to more challenging OS examples. An analytical solution to a class of mean-field stopping games is also presented by \citet{huang2022class}. \citet{2023inversebayesos} address an inverse optimal Bayesian stopping problem using a modification of the Max-Entropy IRL (\citet{Ziebart2008}) with application to inverse sequential hypothesis testing and inverse Bayesian search problems over multiple environments. In our approach, we take on a more general set-up that does not require learning in different environments or a manual engineering of feature vectors. This allows us to conduct inference in higher dimensions and for a wider range of OS problems. Moreover, the authors evaluate the results of their inference by comparing the true environment costs to the IRL estimates. Instead, we use classification metrics to evaluate the quality of the resulting stopping region which, unlike the gain functions, is unique. Another recent work by \citet{2022oti} studies an OS approach to optimal time to intervention (OTI) problems, where the stopping gain function takes the form of a hazard rate process and is recovered using deep recurrent survival analysis. We show that our approach can also be applied to inverse OTI problems without requiring threshold tuning, imposing constraints on the shape of the hazard rate, and avoiding an error made by the survival model.

\section{Background and problem setup}\label{sec:background}
\subsection{Optimal stopping: general formulation}\label{sec:optimal_stopping_general}
Let \(S  = \{s_t\}_{t\geq 0} \in \mathbb{R}^d\) be a time-homogeneous Markov state-space process defined on a probability space \( (\Omega, \mathcal{F}, \mathbb{P}_s, s\in S) \) equipped with filtration \((\mathcal{F}_t)_{t\geq 0}\) generated by the history of observed states up to time \(t\). A stopping time \(\tau:\Omega\rightarrow \{0,1,...,\infty\}\) is a random variable \( \tau \in \{0,1,2,...,\infty\}\). Given a \textit{continuation gain function} \(g:\mathbb{R}^d\rightarrow \mathbb{R}\) and a \textit{stopping gain function} \(G:\mathbb{R}^d \rightarrow \mathbb{R}\) a value function defines the expected cumulative future payoff, if the state process is stopped at the time \(\tau\):
\begin{equation}\label{optimal_stopping_value_function}
    V^{\tau}(s) = \mathbb{E}_s\left[\sum_{t=0}^{\tau-1}\gamma^t g(S_t)\mathds{1}_{\{ \tau \geq 1\}}  + \gamma^{\tau}G(S_{\tau})\right],
\end{equation}
where \(\gamma\in \left[0,1\right]\) is a discount factor. The corresponding \textit{optimal stopping problem} consists in finding an optimal stopping time \(\tau^{\star}\) maximising the value function, i.e. \(V^{\tau^{\star}}(s) = \sup_{\tau}V^{\tau}(s)\). By the general theory of OS \cite{Tsitsiklis1999}, \(V^{\tau^{\star}}=V^{\star}\), which is a fixed point of the dynamic programming operator \(T\): \(V^{\star} = TV^{\star}\), where \(TV = max \{G, g+\gamma PV\}\), and \(P\) is a transition operator \(\mathbb{E}\left[V(s')|\mathcal{F}_t\right] = \int V(y)P(s, dy) = (PV)(s)\). Then the \textit{optimal stopping region} can be defined as a set of states:
\begin{equation}\label{eq:stopping_region}
    \begin{aligned}
    D^{\star} = \{s| G(s)\geq V^{\star}(s)\}
    = \{s| G(s)\geq g(s)+\gamma\mathbb{E}_{s'} V^{\star}(s')\},
    \end{aligned}
\end{equation}
with the \textit{optimal stopping time} being the shortest stopping time when the process hits the stopping region \(\tau^{\star}=\min\{t\geq 0|s_t\in D^{\star}\}\).

\subsection{Optimal stopping via reinforcement learning} \label{sec:opt_st_via_rl}
To apply SOTA methods to the OS, we first formulate it as an RL problem. Consider a problem related to the OS problem that we define in terms of a Stopped Markov Decision Process. \par 
\begin{definition}[\bf{SMDP}]\label{def:smdp}
A Stopped Markov Decision Process (SMDP) is a tuple\\ \(M_{\Delta} = (S_{\Delta}, A,\mathcal{P}, g, G, \gamma)\), where \(\mathcal{S}_{\Delta} := \{s\}\cup \{\Delta\}\) is a Borel set of states augmented with a set of cemetery points \(\{\Delta\}\). \(A:\mathcal{S}_{\Delta}\rightarrow A\) is a mapping on \(\mathcal{S}_{\Delta}\) such that \(A\subseteq \mathcal{A}\) is a non-empty set of admissible binary actions \(\mathcal{A} = \{0,1\}\) where the decision to stop is associated with \(a=0\) and the decision to continue with \(a=1\). The history of observations up to time \(n\geq0\) is \(o_n:=(s_k,a_k)_{0\leq k\leq n}\) defined on the space \(O_n:=(\mathcal{S}_{\Delta}\times \mathcal{A})^n\times \mathcal{S}_{\Delta}\). \(g,G: \mathcal{S}_\Delta\times\mathcal{A} \rightarrow \overline{\mathbb{R}}\) are the continuation and stopping gain functions, respectively, with \(\overline{\mathbb{R}} = \mathbb{R}\cup\{-\infty, +\infty\}\). For convenience, the rewards are assumed to be bounded and Borel measurable (discounted MDPs with countable state space and unbounded gains can be turned into analogous problems with bounded gains \cite{Wal1980}). \(\gamma\in(0,1)\) is a discount factor. The MDP dynamic is given by a family of well-defined probabilities:
    \begin{equation}
    \mathcal{P}(B|s,a) =
    \begin{cases}
      \mathbb{P}_s(s_1\in B) \text{ if } s\in\mathcal{S}, a=1, \\
      \delta_{\Delta}(B)\text{ otherwise,}\\
    \end{cases}\
\end{equation}\par
for any \(B\subseteq\mathcal{S}_{\Delta}\), and where we assume that for \(s\in\mathcal{S}_{\Delta}\), \(\delta_s\) is the Dirac measure with the concentration in \(s\).
\end{definition}

For this SMDP, a Markov deterministic policy \(\pi:\mathcal{S}_{\Delta}\rightarrow A\) is defined by a set of Dirac measures \(\pi:\mathcal{S}_{\Delta}\rightarrow\{\delta_0,\delta_1\}\), such that \(\pi(\cdot|\Delta)=\delta_0(\cdot)\). Any such policy, together with an initial states distribution \(p_0\), yields a pair of processes \((S_t^{\dag}, A_t)_{t\geq0}\) on a probability space \((\Omega^{\dag},\mathcal{F}^{\dag},\mathbb{P}^{\pi}_{p_0})\). Introduce a composite reward function:
\begin{equation}\label{eq:stopping_reward}
r(s,a) = g(s)a+G(s)(1-a),    
\end{equation}
for \(s\in\mathcal{S}_{\Delta},a\in \mathcal{A}\) and the corresponding value function:
\begin{equation}\label{equation:opt_st_rl}
    {V^\pi}(s)=\mathbb{E}^{\pi}_s\left[\sum_{t=0}^{\infty}\gamma^t r(S_t^{\dag},A_t)\right],
\end{equation}

where we write the expectation with respect to \(\mathbb{P}^{\pi}_{p_0}\) as \(\mathbb{E}^{\pi}_{p_0}\) (\(\mathbb{P}^{\pi}_s\) and \(\mathbb{E}^{\pi}_s\) for \(p_0=\delta_s\), respectively). This form of reward function will be central in translating an OS problem into a universal RL problem.\par 
Following the definitions above, for the bounded reward functions there exists \cite{Harrison1972} an optimal stationary deterministic policy solving 
\begin{equation}\label{eq:opt_st_rl_optimality}
    \begin{aligned}
    {V}^{\pi^\star}(s)=\sup_{\pi\in\Pi}\mathbb{E}^{\pi}_s\left[\sum_{t=0}^{\infty}\gamma^t r(S_t^{\dag},A_t)\right] 
    =\mathbb{E}^{\pi^{\star}}_s\left[\sum_{t=0}^{\infty}\gamma^t r(S^{\dag}_t,A_t)\right].    
    \end{aligned}
\end{equation}
\begin{proposition}\label{proposition:opt_st_rl_equivalence}
\(V^{\tau^\star}(s)={V}^{\pi^\star}(s)\) are equivalent.
\end{proposition}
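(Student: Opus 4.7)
The plan is to exhibit a value-preserving correspondence between stopping times in the OS formulation and stationary deterministic policies in the SMDP, and then take suprema on both sides. This reduces the proposition to an algebraic identity on expected cumulative rewards.

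First, I would fix $s\in\mathcal{S}$ and, for any policy $\pi\in\Pi$, define the induced stopping time $\tau_{\pi}:=\min\{t\geq 0 : A_t = 0\}$ (with the convention $\inf\emptyset = \infty$). By construction of the transition kernel $\mathcal{P}$, once $a=0$ is chosen or the trajectory enters $\{\Delta\}$, the process is absorbed at the cemetery, and subsequent actions are forced to $a=0$ by $\pi(\cdot\mid\Delta)=\delta_0$. Under the natural convention that $g$ and $G$ vanish on $\{\Delta\}$ (which the excerpt's boundedness/Borel setup permits and is implicit in turning an OS into an absorbing MDP), the tail contribution after time $\tau_{\pi}$ is zero, and I can expand
\begin{equation*}
V^{\pi}(s)=\mathbb{E}^{\pi}_s\!\left[\sum_{t=0}^{\tau_{\pi}-1}\gamma^{t} g(S^{\dag}_t) \mathds{1}_{\{\tau_{\pi}\geq 1\}} + \gamma^{\tau_{\pi}} G(S^{\dag}_{\tau_{\pi}})\right],
\end{equation*}
using $r(s,1)=g(s)$ and $r(s,0)=G(s)$ from \eqref{eq:stopping_reward}. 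Since prior to $\tau_{\pi}$ the process $S^{\dag}$ evolves according to $\mathbb{P}_s$, this expectation coincides with $V^{\tau_{\pi}}(s)$ from \eqref{optimal_stopping_value_function}.

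Conversely, for any $(\mathcal{F}_t)$-stopping time $\tau$ I would construct a (possibly history-dependent) policy $\pi_{\tau}$ by setting $\pi_{\tau}(o_t)=\delta_1$ if $\tau>t$ and $\delta_0$ otherwise; measurability follows from $\{\tau>t\}\in\mathcal{F}_t$. The same expansion then gives $V^{\pi_{\tau}}(s)=V^{\tau}(s)$. Taking suprema over $\tau$ and over $\pi$, and invoking the existence of an optimal stationary deterministic policy attaining $V^{\pi^{\star}}$ guaranteed after \eqref{eq:opt_st_rl_optimality}, together with the classical existence of an optimal (Markov) stopping time $\tau^{\star}$ attaining $V^{\tau^{\star}}$ from the general OS theory, yields the two-sided inequality and hence the equality.

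The main obstacle I anticipate is a clean handling of the cemetery state and of history-dependent stopping rules. Specifically, one must ensure (i) that $r$ evaluated at $\Delta$ contributes nothing after absorption, so that the infinite-horizon RL sum truncates at $\tau_{\pi}$ exactly as in \eqref{optimal_stopping_value_function}, and (ii) that enlarging the policy class to history-dependent policies does not increase the supremum beyond what stationary Markov policies achieve, which is where one leans on the standard dynamic programming principle for absorbing MDPs (the same argument that underlies $V^{\star}=TV^{\star}$ in Section~\ref{sec:optimal_stopping_general}). These are standard but would need to be stated carefully for the equivalence to be rigorous.
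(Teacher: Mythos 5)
Your proof is correct, and its core computation is the same as the paper's: expand the SMDP value as a sum that pays $g$ before the induced stopping time, $G$ at it, and nothing afterwards (using $r(s,1)=g(s)$, $r(s,0)=G(s)$ from \eqref{eq:stopping_reward} and the zero-reward convention at $\{\Delta\}$, which the paper indeed imposes), so that the infinite-horizon sum in \eqref{equation:opt_st_rl} truncates to the stopped payoff \eqref{optimal_stopping_value_function}. Where you differ is in the overall architecture. The paper's proof is one-directional and constructive: it fixes the specific stationary policy that stops exactly on the OS stopping region $D_\Delta$, verifies via the Debut theorem that the corresponding hitting time is a legitimate stopping time, and then evaluates $V^{\pi^\star}$ by a three-case decomposition of $\mathbb{E}^{\pi^\star}_{s_0}[r(s_t,a_t)]$ according to $t<\tau_{D_\Delta}$, $t=\tau_{D_\Delta}$, $t>\tau_{D_\Delta}$, concluding it equals $V^{\star}$; the reverse inequality (that no SMDP policy can exceed the OS value) is left implicit in the identification of this policy as optimal. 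Your two-sided correspondence, mapping each policy to the stopping time $\tau_\pi=\min\{t:A_t=0\}$ and each stopping time to a policy, followed by taking suprema, makes both inequalities explicit and is in that sense more complete; the price is that you lean on external results (existence of an optimal stationary deterministic policy after \eqref{eq:opt_st_rl_optimality}, sufficiency of Markov over history-dependent policies, and existence of an optimal stopping time). Note that your item (ii) can be avoided entirely: since OS theory gives that the optimal stopping time is the hitting time of $D^\star$ in \eqref{eq:stopping_region}, the converse direction only ever needs that one stopping time, whose induced policy is already stationary Markov, which is precisely the shortcut the paper takes. A minor point to tighten if you write this up fully: for stochastic policies $\tau_\pi$ is a randomized stopping time, so either restrict to the deterministic class the paper works with or add a word on why randomization does not increase the supremum.
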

% \begin{proof}[Sketch of proof] 
% Proof through coupling of \(S_t\) and \(S^{\dag}_t\) and showing that the stopping time under the optimal policy in the RL problem is equal to the hitting time of the optimal stopping region in the OS problem. See the supplementary materials for details. 
% \end{proof}
\textit{Scetch of proof} 
Proof through coupling of \(S_t\) and \(S^{\dag}_t\) and showing that the stopping time under the optimal policy in the RL problem is equal to the hitting time of the optimal stopping region in the OS problem. See the supplementary materials for details. \par 

In what follows, we will focus on SMDPs and write \(S\) instead of \(S^{\dag}\) for brevity.\par
It is often useful to consider a state-action \(Q\)-function \(Q^{\pi}\in \mathbb{R}^{|\mathcal{S}|\times|\mathcal{A}|}\), representing the value of a policy \(\pi\) that started in a state \(s\) and executed an action \(a\), defined as \(Q^{\pi}(s,a) = \mathbb{E}^{\pi}_{s_0=s,a_0=a}\left[\sum_{t=0}^{\infty}\gamma^tr(s_t,a_t)\right] = r(s,a)+\gamma V^{\pi}(s')\).

\subsection{Stochastic policies and soft-Q learning}\label{sec:stochastic_policies}
We first note that the optimal deterministic policy can be represented by a set of Dirac-delta functions, assigning a probability mass to one of the optimal actions. \cite{Bertsekas1995} showed that for any MDP there exists a stationary deterministic policy that is optimal. In this work, we assume that all the policies are (time-)stationary, i.e. the probability distribution over actions does not change over time. The OS policy defined by a set of Delta-functions conditioned on the continuation and stopping region is equivalent to a greedy policy with respect to the \(Q\)-function, i.e.
\begin{equation}
\begin{aligned}
    \pi^{\star}(a|s) = \begin{cases}
     \delta_1(a):s\in C^{\star} \\
     \delta_0(a):s\in D^{\star}_{\Delta}\\
    \end{cases} 
    =\delta\left(a=\argmax_a Q^{\pi^{\star}}(s,a)\right).
\end{aligned}
\end{equation}

Here we write \(D^\star_\Delta = D^\star \cup \{\Delta\}\) to define the optimal stopping region augmented with the cemetery states, and \(C^\star= \{s|G(s) < V^\star(s)\}\) to define the optimal continuation region. Since many SOTA algorithms for (inverse) RL are entropy based, we define a stochastic exploratory policy using the Boltzmann distribution (or softargmax function), which, in limit, converges to the optimal deterministic Delta-policy:
\begin{equation}\label{eq:limit_of_stochastic_policy}
    \begin{aligned}
        \lim_{\epsilon\rightarrow 0}\pi_{\epsilon}(a|s) = \lim_{\epsilon\rightarrow 0}\frac{\exp\{Q(s,a)/\epsilon\}}{\sum_{a'\in\mathcal{A}}\exp\{Q(s,a')/\epsilon\}}
        =\delta\left(a=\argmax_{a'}Q(s,a')\right),
    \end{aligned}
\end{equation}
where \(\epsilon\) is the energy parameter defining the level of the stochasticity of the Boltzmann policy. See supplementary material for details.\par 
This stochastic policy turns out to be an optimal policy under maximum entropy RL \cite{zhou2017entropy}. We define the soft Bellman operator \(\mathcal{B}^{\pi_{\epsilon}} :\mathbb{R}^{\mathcal{S}\times\mathcal{A}}\rightarrow\mathbb{R}^{\mathcal{S}\times\mathcal{A}}\) acting on the set of soft Q-functions: \((\mathcal{B}^{\pi_{\epsilon}}Q)(s,a)=r(s,a)+\gamma\mathbb{E}_{s'\sim\mathcal{P}(\cdot|s,a)}V^{\pi_{\epsilon}}(s')\), where\\ \(V^{\pi_{\epsilon}}(s)=\mathbb{E}_{a\sim\pi_{\epsilon}(\cdot|s)}\left[Q(s,a)-\epsilon\log\pi_{\epsilon}(a|s)\right]=\sigma_{a}^\epsilon Q(s,a)\) and\\ \(\sigma_{a}^\epsilon Q(s,a)=\epsilon\log\sum_{a'}\exp\{Q(s',a')/\epsilon\}\) is a softmax function with the temperature parameter \(\epsilon\). The soft Bellman operator is a contraction mapping of the soft Q-function, i.e. \(\mathcal{B}^{\pi_{\epsilon}} Q=Q\) \cite{haarnoja2018soft}. In what follows, we write \(H(\pi)=\mathbb{E}_{\pi}\left[-\log \pi(a|s)\right]\) to define the discounted causal entropy and \(\rho_\pi(s,a)=\pi(a|s)\sum_{t=0}^{\infty}\gamma^tP(s_t=s|\pi)\) to define an occupancy measure (state visitation counts) of a policy \(\pi\).\par In the case of an OS, we call the stochastic policy (\ref{eq:limit_of_stochastic_policy}) an \textbf{\(\epsilon\)-optimal stopping policy} and the corresponding V- and Q-functions \textbf{soft value functions}. This results in near-optimal stopping times.
\begin{definition}[\bf{\(\epsilon\)-optimal stopping time}]
We say that the stopping time is near-optimal, or \(\epsilon\)-optimal if it is a solution to the problem (\ref{equation:opt_st_rl}) under an \(\epsilon\)-optimal stopping policy, i.e.\(\tau^{\star}_{\epsilon} = \inf\{t\geq0 |s_t\in D^{\star}_{\Delta,\epsilon}\}\),
    and 
    \begin{equation}\label{eq:epsilon_stopping_region}
        D^{\star}_{\Delta,\epsilon}=\{s|G(s)\geq V^{\pi^{\star}_{\epsilon}}(s)\}.
    \end{equation}
\end{definition}
For soft-Q learning, noting that the softargmax function (Boltzmann distribution) is the gradient of log-sum-exp function, for \(\epsilon\rightarrow 0\) the bound on the value function, and hence on the stopping region, can be made as tight as required: \(V^{\pi^\star}<V^{\pi^\star_\epsilon}\leq V^{\pi^\star}+\epsilon\log(2)\).\par 

\subsection{Inverse reinforcement learning}\label{sec:irl}
An inverse reinforcement learning (IRL) aims to recover a reward function \(r\in\mathcal{R}\) and a corresponding optimal policy \(\pi\in\Pi\) based on a set of expert observations \(O_t\) produced by an expert policy \(\pi_E\). \par 
\textbf{Maximum entropy IRL.} In case of Max Entropy IRL \cite{Ziebart2008,Ziebart2010,Ho2016}, the objective is defined as follows:
\begin{equation}\label{eq: max_entropy_irl}
    \begin{aligned}
        {}&L(\pi, r)= d_\psi(\rho,\rho_E)-H(\pi)
        =\mathbb{E}_{\rho_E}\left[r(s,a)\right]-\mathbb{E}_\rho\left[r(s,a)\right]-H(\pi)-\psi(r),\\
        {}&\max_{r\in\mathcal{R}}\min_{\pi\in\Pi} L(\pi,r),
    \end{aligned}
\end{equation}
where \(\psi:\mathbb{R}^{S\times\mathcal{A}}\rightarrow\mathbb{R}\) is a convex reward regulariser, \(\rho_E\) is an expert's occupancy measure and \(d_\psi=\psi^*(\rho_E-\rho)\) is a statistical distance between two occupancy measures with \(\psi^*\) being a convex conjugate of \(\psi\).\par 
\textbf{Inverse soft-Q learning.} In recently introduced Inverse soft-Q learning approach (IQ-Learn) \cite{garg2021} the authors suggested projecting the problem (\ref{eq: max_entropy_irl}) onto the Q-policy space. They first introduce an inverse Bellman operator \((\mathcal{T} ^\pi Q)(s,a)=Q(s,a)-\gamma\mathbb{E}_{s'\sim\mathcal{P}(\cdot|s,a)}V^\pi(s')\),
and then use the following objective function:
\begin{equation}\label{eq:iq_learn_optimisation}
    \begin{aligned}
        {}&\mathcal{J}(Q)= \mathbb{E}_{\rho_E}\left[\phi\left(Q(s,a)-\gamma\mathbb{E}_{s'\sim\mathcal{P}(\cdot|s,a)}\sigma_{a'}^\epsilon \left(Q(s',a')\right)\right)\right]
        -(1-\gamma)\mathbb{E}_{\rho_0}\left[\sigma_{a}^\epsilon \left(Q(s_0,a)\right)\right],\\{}&\max_{Q\in\Omega}\mathcal{J}(Q) = \max_{r\in\mathcal{R}}\min_{\pi\in\Pi} L(\pi,r).
    \end{aligned}
\end{equation}
IQ-Learn allows recovering the Q-function corresponding to the expert's behaviour, which in turn gives both a reward function \(r(s,a) = Q(s,a)-\gamma \mathbb{E}_{s'\sim\mathcal{P}(\cdot|s,a)}V^{\pi_\epsilon}(s')\) and the corresponding stochastic policy \(\pi_\epsilon\) as in (\ref{eq:limit_of_stochastic_policy}). This will be beneficial for recovering the stopping region in IOS using the stopping rule (\ref{eq:stopping_region}).

\section{Inverse optimal stopping with unknown gain functions}\label{sec:inverse_os}
In the most general sense, the \textit{Inverse Optimal Stopping (IOS)} problem is concerned with finding a continuation reward function \(g(\cdot)\) and a stopping reward function \(G(\cdot)\), given an OS time \(\tau^{\star}\) or an optimal value function evaluated at the OS time \(V^{\star}\). This problem is ill-posed and its solution is often intractable. Instead, we define the IOS problem using the IRL formulation.

% \begin{figure}[tbp]
% \centering
% \includegraphics[width=0.75\linewidth]{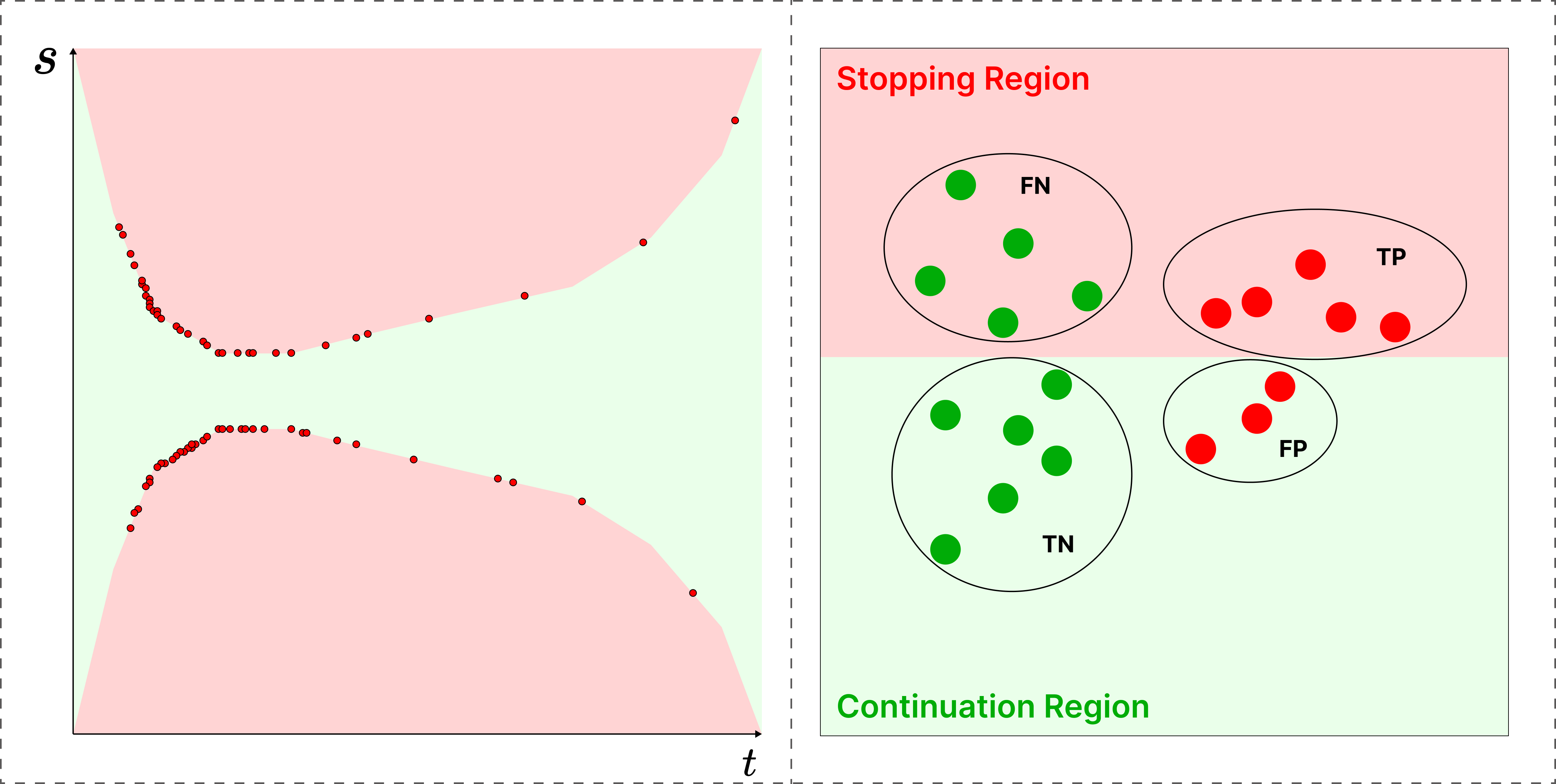}
%   \caption{\textbf{Left}: Stopping with sparse observations. Continuation and stopping region are represented by green and red areas respectively. Red dots: expert stopping decisions. \textbf{Right}: OS True Negatives (TN), True Positives (TP), False Negatives (FN) and False Positives (FP).}
% \label{fig:sparse_stopping_region}
% \end{figure}

\begin{definition}[\bf{Inverse optimal stopping problem: IRL formulation}]\label{def:ios_irl}
Assume that the realisations \((S_t,A_t)_{t=0}^\infty\) of SMDP as defined in \ref{def:smdp} are given and can be characterised by some unknown optimal policy \(\pi^\star\). The problem of finding the continuation and stopping gain functions \(g(\cdot)\) and \(G(\cdot)\) explaining \(\pi^\star\) is called an Inverse Optimal Stopping problem.
\end{definition}

Although the reward function in IOS problems is non-unique, unlike general IRL problems, OS have a \textit{unique stopping region}. This means that an inverse algorithm should be able to find this unambiguous stopping region, which provides clear benefits for safety-sensitive applications. Here, we aim to fully utilise the OS rule (\ref{eq:stopping_region}) which allows us to choose an action by comparing the Q-functions evaluated for the respective actions. Our RL formulation (\ref{def:smdp}) of the OS problem allows using an IQ-Learn algorithm (\ref{eq:iq_learn_optimisation}), while preserving the structure of the original problem. We address the main challenges associated with the IOS problem with the following insights.\par

\hypertarget{insight:reward_shaping}{\textbf{Insight 1: IOS as a reward shaping problem}}. Although IQ-Learn and other SOTA methods for IRL show a high positive correlation with the ground-truth rewards \cite{garg2021} (some of them recovering the true rewards up to a constant, e.g. GAIL \cite{Ho2016}), most of them suffer from a covariance shift problem and considerably high uncertainty in the reward estimation. Incorporating a model-based approach into the IRL model and shaping the reward in a more informed way helps to reduce the uncertainty (see, e.g. \cite{Ho2016,yue2023clare}). \par 
\textit{Our solution:} 
We suggest to: \textbf{(a)} Iteratively update both Q-function and the model of dynamics in a bi-level constrained optimisation manner similar to \cite{zeng2023understanding} and \cite{pmlr-v51-herman16}. This means that at each iteration we update our beliefs about the environment dynamics model \(\hat{\mathcal{P}}\) to fit the observed dynamics \(\mathcal{P}\) under the current Q-model and, similarly, update \(\hat{Q}\) to fit the approximated dynamics model. For details, see Section \ref{sec:offline_iq_learn_for_ios}; \textbf{(b)} Utilise the reward form suggested in (\ref{eq:stopping_reward}) to incorporate the stopping and continuation gain functions into the learning process to ensure consistency with the OS update (\ref{eq:stopping_region}).\par 

\setlength{\textfloatsep}{0pt}

\begin{wrapfigure}{r}{0.5\linewidth}
\vskip -10pt
\centering
\includegraphics[width=\linewidth]{images/sparsity_plus_confusion.png}
  \caption{\textbf{Left}: OS  with sparse observations. Red and green dots represent expert stopping and continuation decisions. \textbf{Right}: True Negatives (TN), True Positives (TP), False Negatives (FN) and False Positives (FP).}
\label{fig:sparse_stopping_region}
\vskip -15pt
\end{wrapfigure}
\hypertarget{insight:data_sparsity}{\textbf{Insight 2: Data sparsity near and on the stopping boundary}}. First, there is a natural data sparsity caused by the class imbalance between stopping and continuation actions. For example, for a stopped expert dataset with \(N\) paths (hence \(N\) stopping actions) and \(M\) observations in total, the ratio between stopping and continuation actions is \(\frac{N}{M-N}\) with \(M\gg N\). Second, in certain cases, only a few trajectories will reach further side of the stopping boundary at later times (e.g., Figure \ref{fig:sparse_stopping_region}).

\textit{Our solution:}  We first propose to apply oversampling strategies where the oversampling error is leveraged by IRL (see Sections \ref{sec:oversampling_approach} for details). Since this only works in Markovian case, we show how approximating a cumulative continuation gain improves the performance of the algorithm in the presence of sparsity. We will describe the latter approach in the next insight. \par 

\hypertarget{insight:dynamics_aware_reward}{\textbf{Insight 3: Cumulative continuation gain for dynamics-aware reward shaping}}. 
Looking back to the RL formulation (\ref{equation:opt_st_rl}), we could notice that the problem cannot be studied in a Markovian way, since at each step the stopping decision is made if: \(G(s_t)\geq \sum_{t'\leq t, t\geq0}\gamma^{t'}g(s_t)+\gamma^{t+1}\mathbb{E}_{s_{t+1}\sim\mathcal{P}(\cdot|s_t,a_t)}V^{\star}(s_{t+1})\), turning the original RL problem into a problem with non-Markovian rewards.\par 
\textit{Our solution:} To make this problem Markovian we augment the state-space to include \(Y_t = g(s_t) + \sum_{t' < t}\gamma^{t '}g(S_{t'})\), \(t\geq 0\). The resulting triplet of processes \((Y_t,S_t, A_t)\) is Markovian. We develop a model containing a constraint optimisation on three levels through an iterative update of the Q-function \(\hat{Q}(\cdot,\cdot)\), the environment-dynamics model \(\hat{\mathcal{P}}\), and the model of the cumulative continuation gain \(\hat{Y}_t\), allowing us to incorporate a non-Markovian component into the Q-function approximation (Section \ref{sec:DO-IQS}). \par 

\hypertarget{insight:cemetery_states}{\textbf{Insight 4: Boundary conditions}}. The problem of learning a stopping gain function \(G(\cdot)\) is similar to learning the reward for absorbing states as in \cite{alhafez2023lsiq}. Setting the value of the absorbing states to zero results in termination bias \cite{kostrikov2018discriminatoractorcritic}, when the agent might be more prone to stopping immediately, or never stopping at all, depending on the sign of the continuation reward.\par
\textit{Our solution:}
In OS this problem is naturally addressed by the stopping gain function \(G(\cdot)\), and the action ``stop" sends the agent to the set of cemetery states, where the value is zero, \(V^\pi(s_\Delta)=0\). Hence, the function \(G(\cdot)\) can be thought of as a reward, accumulated during the time spent in the absorbing state. This approach allows us to avoid setting a fixed target as is done in \cite{alhafez2023lsiq}, while still ensuring that there is no termination bias through well-defined boundary conditions.\par 

\section{Algorithm}\label{sec:algorithm}

% \vskip 5pt
  
\subsection{Offline IQ-learning for IOS}\label{sec:offline_iq_learn_for_ios}
There are two ways to estimate the OS region offline. The first approach uses pure IQ-learning where only the current state \(s\) is required: \(\hat{D}_{\epsilon}=\{s|\hat{G}(s)\geq \hat{V}(s)\}\), where the dynamics is learnt explicitly through the Q-function \cite{garg2021}. A more precise estimate for the world dynamics would allow one to reduce the Bellman error of the IQ-Learning algorithm and correspondingly to increase its stability.\par
To address this problem, we present an approach inspired by bi-level constrained optimisation. First, the Q-model is updated in a conservative way to fit the current beliefs about the world dynamics, and then the estimate of the environment dynamics is updated to fit the current model of the Q-function. This allows minimising the covariance shift in the learnt world model, and improving the sharpness of the reward estimates in case of imperfect demonstrations (see, for example, \cite{xu22l,liu2023adv} and \cite{zeng2023when}). Learning an environment model is crucial for estimating stopping region offline from demonstrations as in (\ref{eq:epsilon_stopping_region}). Often a crude estimate of \(\hat{\mathcal{P}}\) that is aware of the value function can be used instead of more complex approaches such as MLE \cite{farahmand2017}. This requires a dynamics loss function \(Loss_{\mathcal{P}}\) that minimises the Bellman error \cite{fujimoto2022i}. Although the original IRL objective in case of \(\chi^2\)-regularisation minimises a squared soft-Bellman error \cite{alhafez2023lsiq}, it still relies on the expert state transitions. Looking at the Bellman error definition: \(\delta_Q(s,a) = Q(s,a) - r(s,a)+\mathbb{E}_{s'\sim\mathcal{P}}[\gamma V(s')]\) we could notice that replacing the true environment dynamics \(\mathcal{P}\) by an estimate \(\hat{\mathcal{P}}\) increases \(\delta_Q\) by \(\gamma\sum_{s'\in\mathcal{S}}\left(\hat{\mathcal{P}}(s'|s,a)-\mathcal{P}(s'|s,a)\right)V(s')\). Although the choice of a particular \(Loss_\mathcal{P}
\) will change the degree of the Bellman error, updating it in a bi-level constrained optimisation way together with \(\hat{Q}\) allows us to incorporate information about the value function into the dynamics model.\par
% To recover the OS region we apply the OS rule (\ref{eq:stopping_region}) to the Q-function estimates: \(D_{\epsilon,\theta}= \{s|G_{\epsilon,\theta}(s)\geq V_{\theta}^{\pi_\epsilon}(s)\}= \{s|Q_{\epsilon,\theta}(s,a=0)\geq Q_{\epsilon,\theta}(s,a=1)\}\).
% \vskip -15pt
\subsection{Oversampling approach to data sparsity in IOS}\label{sec:oversampling_approach}
Sparse and small datasets often pose a challenge for neural networks functional approximation. A conventional way to tackle this problem is to employ various oversampling strategies by simulating artificial stopping and/or continuation points for dataset enrichment. Here we make an assumption about the geometrical properties of the stopping and continuation regions, i.e. the proximity of the states from the respective regions to each other. We propose to view IOS with oversampling as a problem of learning from imperfect demonstrations \cite{wu2019,ning2022}, where we could assign a certain confidence score to each of the observations based on the probability that these points come from the stopping region set. The expert stopping points then will have the highest confidence score, whereas artificial observations are assigned a confidence score. Then the expert occupancy measure becomes a mixture of optimal and non-optimal measures \(\rho_{E_{CS}}(s,a) = \rho_E(s,a) + \alpha(s,a)\rho_{fake}(s,a)\), where \(\alpha(s,a)\in \left[0,1\right]\) is the confidence score and \(\alpha(s,a)=1\) for \(\forall (s,a)\in O_E \). The objective function can then be expressed as \(\mathcal{J}_{CS}(Q) = \mathcal{J}(Q) + \mathcal{J}_{fake}(Q)\), where
\begin{equation}\label{eq:iq_learn_optimisation_fake}
    \begin{aligned}
        \mathcal{J}_{fake}(Q)
        {}&=\mathbb{E}_{\rho_{fake}}\left[\alpha(s,a)\phi\left(Q(s,a)\right.\right.
        \left.\left. -\gamma\mathbb{E}_{s'\sim\mathcal{P}(\cdot|s,a)}\sigma_{a'}^\epsilon \left(Q(s',a')\right)\right)\right]\\
        {}&-(1-\gamma)\mathbb{E}_{\rho_{fake}}\left[\alpha(s,a)\left(V^{\pi}(s) - \gamma V^{\pi}(s')\right)\right].
    \end{aligned}
\end{equation}
(Noting that the second term of (\ref{eq:iq_learn_optimisation}) in an offline learning case can be sufficiently approximated through bootstrapping expert demonstrations with \(\mathbb{E}_{(s,a,s')\sim expert}\left[V^{\pi}(s) - \gamma V^{\pi}(s')\right]\) \cite{garg2021}). 
In this work, we use an SMOTE oversampling technique \cite{Chawla-2002}, and its confidence-score-based modification, which we call CS-SMOTE. We note that the SMOTE oversampling approach has the property of producing artificial minority samples whose expected value converges to the expected value of true minority samples as the sample size
tends to infinity \cite{Kamalov2024AsymptoticBO}.
\begin{algorithm}[!t]
    \centering
    \caption{DO-IQS}
    \label{alg:doiqs}
    \begin{algorithmic}[1]
        \Require{Stopped expert observation history \(O_{T_m}^M=\{s_{T_m},a_{T_m}\}^{m=0:M-1}\), \(\forall 0\leq m\leq M-1\); a  discount factor \(\gamma\in (0,1]\); a number of training epochs \(E>0\); a batch size \(B>0\); a \(softmax\) scaling parameter \(\epsilon>0\); a zero-valued cemetery state \(s_\Delta=\boldsymbol{0}\) }
\Ensure{Approximate Q-function \(\hat{Q}\), continuation gain \(\hat{g}\), and the environment model \(\mathcal{\hat{P}}\)}
\State Initialise a Q-function \(Q_{\epsilon,\theta}\), an environment dynamics model \(\mathcal{P_\theta}\), and a continuation gain function \(g_\phi\)
\State Pre-process \(O_{T_m}^M\) using Algorithm \ref{alg:iqs-preprocess}
\For{epoch \(e\) from \(0\) to \(E-1\)}:
    \State Sample a batch of triples \((s_l,a_l,s'_l)\) of size \(B\)  and their corresponding states history \(h_t^m 
 = \{s_0^m,s_1^m,...s_t^m\}\), where \(s_t^m\) corresponds to \(s_l\) in the stacked process
    \State Do a forward pass for \(g_\phi(s_k^m) \forall s_k^m\in h_{t+1}^m\) and estimate \(y_{l} = \sum_{k=0}^t\gamma^k g_\phi(s_k^m)\), \(y'_l = \sum_{k=0}^{t+1}\gamma^k g_\phi(s_k^m)\) and \(y^{-}_l = \sum_{k=0}^{t-1}\gamma^k g_\phi(s_k^m)\) 
    \State Compose a batch of augmented states \((\tilde{s}_l,a_l,\tilde{s}'_l)\), where \(\tilde{s}_l=(s_l,y_l)\) and \(\tilde{s}'_l=(s'_l,y'_l)\)
    \State (DO-IQS-LB only) Bootstrap augmented states belonging to the stopping region in the batch and add them to the original batch.
    \State For a modified batch \((\tilde{s}_l,a_l,\tilde{s}'_l)\) do a forward pass for \(\mathcal{P}_\theta\) and for \(Q_{\epsilon,\theta}\)
    \State Estimate \(g(\tilde{s}_l) = Q_{\epsilon,\theta}(\tilde{s}_l,a=1)-\gamma\sum_{\tilde{s}_l'\in\tilde{\mathcal{S}}}\mathcal{P}_\theta(\tilde{s}_l'|\tilde{s}_l,a=1) V_{\epsilon,\theta}(\tilde{s}_l') - y_l^{-}\)
    \State Update \(\mathcal{P}_\theta\) using \(Loss_\mathcal{P}\), \(Q_{\epsilon,\theta}\) using (\ref{eq:iq_learn_optimisation}), and \(g_\phi\) using \(Loss_g\)
\EndFor
\end{algorithmic}
\end{algorithm}
% \vskip 5pt
% \end{figure}
% \end{minipage}

% \end{wrapfigure}

\subsection{DO-IQS}\label{sec:DO-IQS}
% \vskip -15pt
In higher dimensions, it might no longer be possible to perform oversampling (or the approximation error for the Q-function becomes significantly bigger). Moreover, many real-world problems have non-Markovian dynamics. We solve this problem by adding an additional process \(Y\) as described in Section \ref{sec:inverse_os}. This requires estimation of the continuation function \(g\) in parallel with the main algorithm, since it now becomes a part of the input into \(Q_{\epsilon,\theta}\) and \(\mathcal{P}_\theta\) (Algorithm \ref{alg:doiqs}) in a form of an augmented state \(\tilde{s}=(s,y); \tilde{s}\in \tilde{S}=\mathcal{S}\times\mathcal{Y}\), where \(y = \sum_{k=0}^{t}g_\phi(s_k)\) and \(y' = y+g_\phi(s_{t+1})\) To avoid updating \(g_\phi\) for all the states in the history, we update it only for the current observation in the batch by minimising \(Loss_g\) (e.g. \(L_2\) loss) between \(g\) recovered through IQ-Learning algorithm \(g(\tilde{s}) = Q_{\epsilon,\theta}(\tilde{s},a=1)-\gamma \sum_{\tilde{s}'\in\mathcal{\tilde{S}}}\mathcal{P}_\theta(\tilde{s}'|s,a=1) V_{\epsilon,\theta}(\tilde{s}')-y^{-}\), where \(y^{-} = \sum_{k=0}^{t-1}g_\phi(s_k)\), and the approximate cumulative gain function \(g_\phi\).  We note that due to the sparsity of stopping decisions, the DO-IQS model still suffers from low convergence. We propose applying local (batch-wise) bootstrapping of the augmented states \(\tilde{s}\) belonging to the stopping set (line 7 in Algorithm \ref{alg:doiqs}).
% \vskip -15pt
\subsection{Quality evaluation}\label{sec:quality_evaluation}
% \vskip -15pt
In the IRL literature, the quality of the recovered Q-function / policy is often assessed by measuring their performance in terms of the true accumulated reward. In practice, querying to the environment is often not available. Hence, a metric to assess the performance of the IRL algorithm is needed, which could also be used during model tuning. We propose to view the IOS as a classification problem, labelling the states as belonging to the stopping or continuation region. This requires a choice of goal-specific metrics suitable for imbalanced data (see Figure \ref{fig:sparse_stopping_region}, right). In case of applying OS to critical events prevention, one could consider a trade-off between a mean time-to-event (m-TTE) and a mean event-miss rate (m-EMR). To represent this trade-off one might use one of the classification metrics which account for imbalanced datasets while not favouring neither True Negatives (continuation actions in case of OS) nor True Positives (stopping actions). We propose using the Balanced Accuracy score as a metric of choice due to its ability to treat positive and negative observations in an equal way.

\setlength{\textfloatsep}{0pt}
\vskip 5pt
\begin{figure*}[!b]
    % \captionstyle{centerlast}
    % \vspace{-5pt}
    \begin{minipage}[l]{0.25\linewidth}
        \centering
        \includegraphics[trim={0 0 0 2.7cm}, clip, width=1\linewidth]{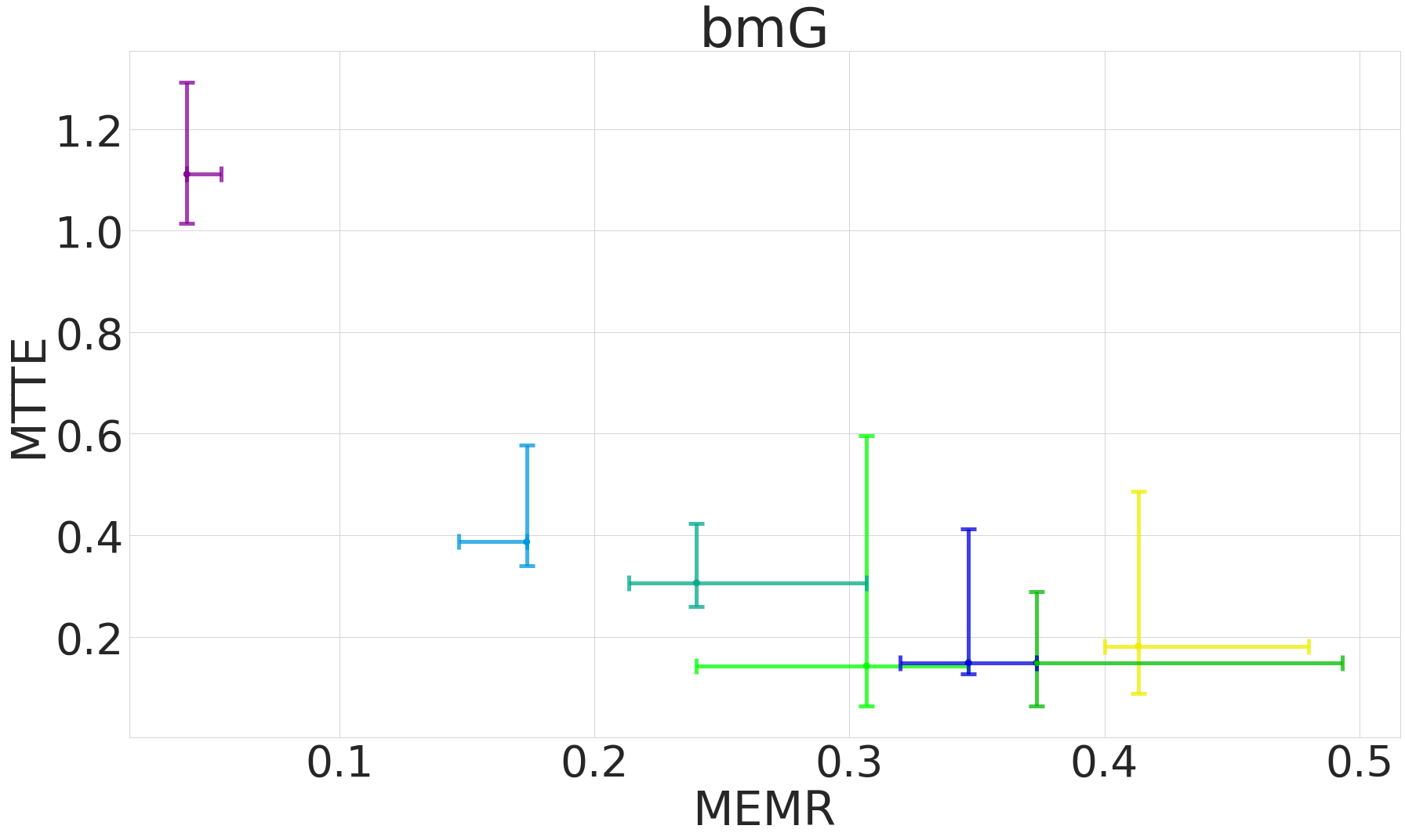}
        \captionsetup{labelformat=empty}
    \end{minipage}\hfill
    \begin{minipage}[l]{0.25\linewidth}
        \centering
        \includegraphics[trim={0 0 0 2.7cm}, clip, width=1\linewidth]{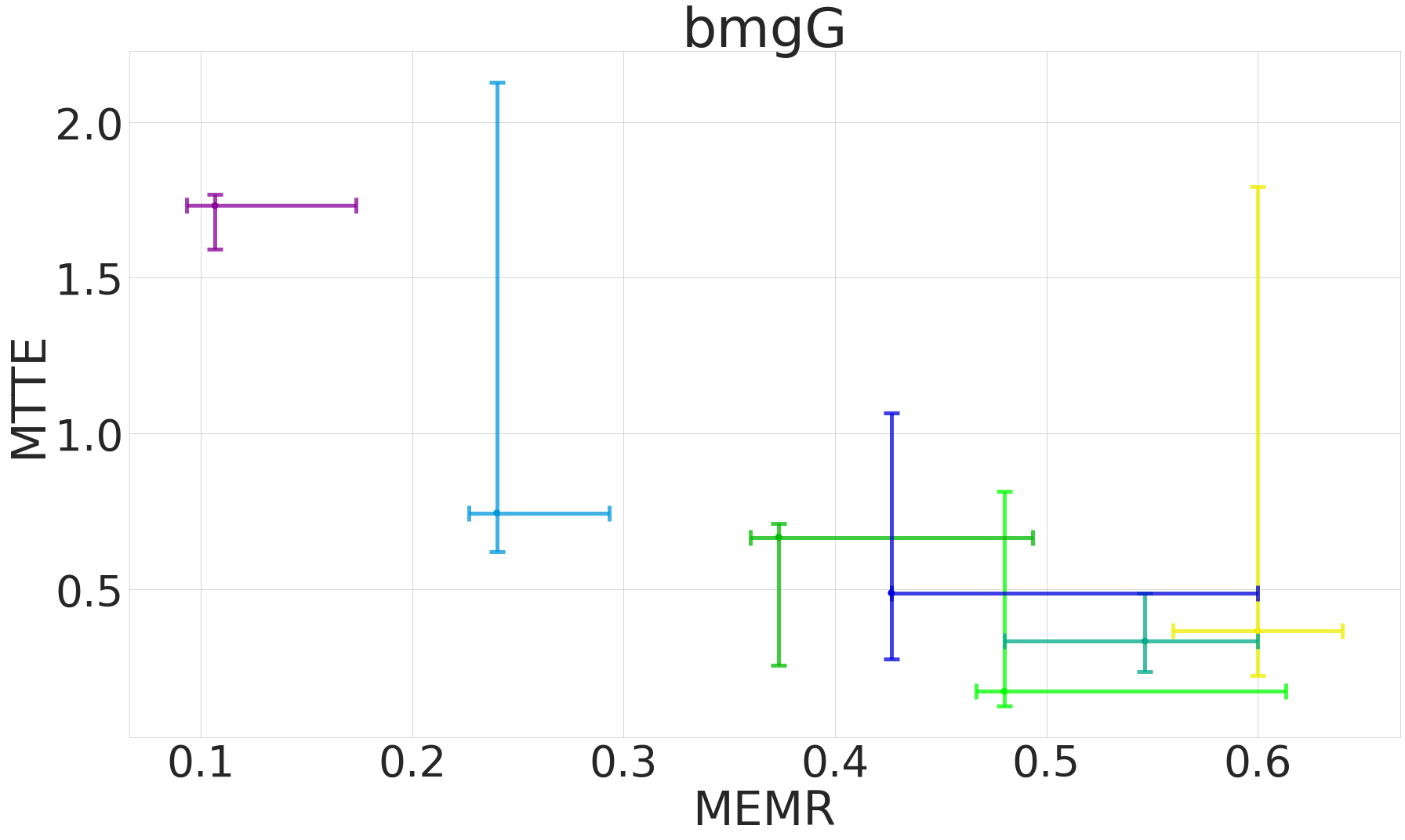}
        \captionsetup{labelformat=empty}
    \end{minipage}\hfill
    \begin{minipage}[l]{0.25\linewidth}
        \centering
        \includegraphics[trim={0 0 0 2.7cm}, clip, width=1\linewidth]{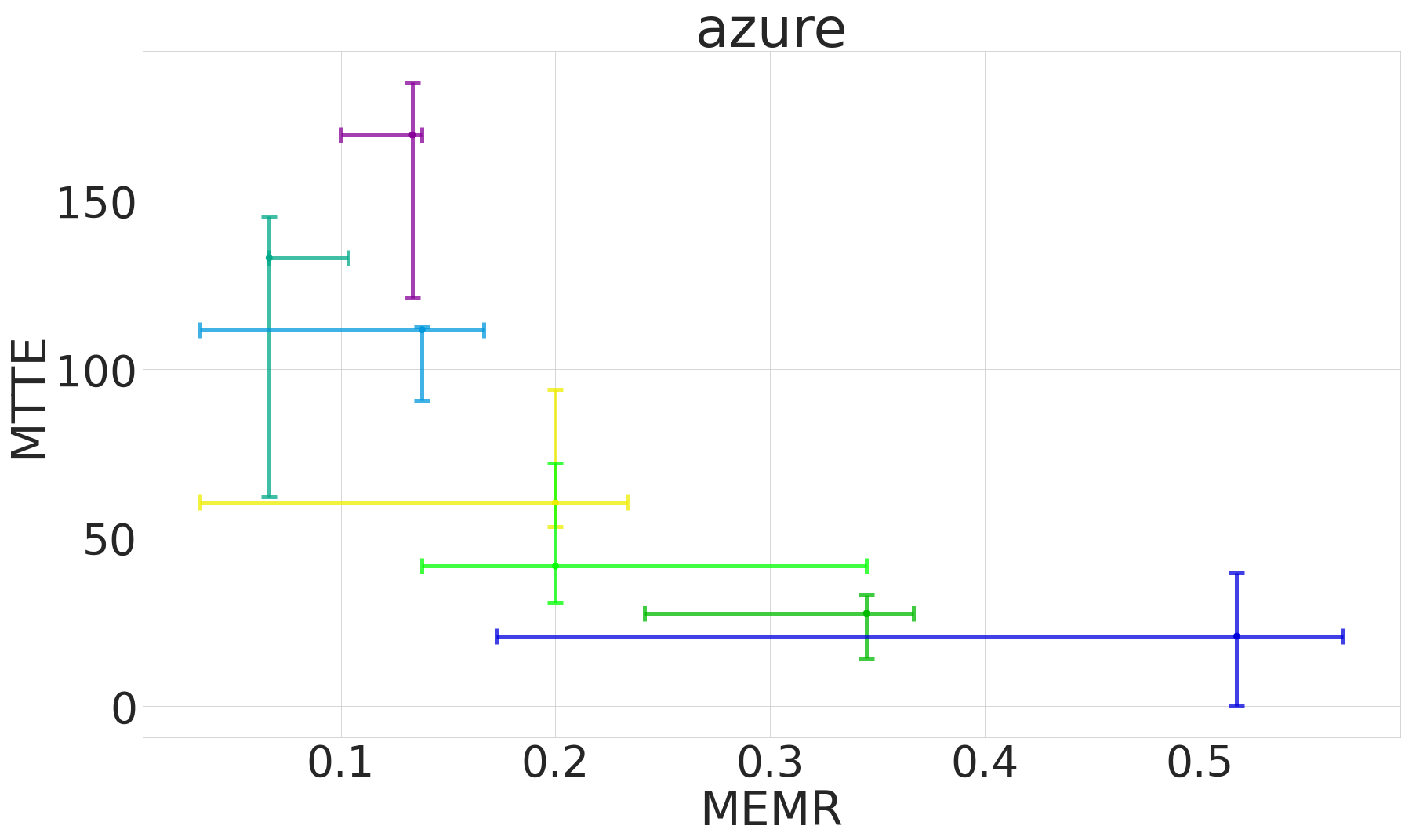}
        \captionsetup{labelformat=empty}
    \end{minipage}\hfill
    \begin{minipage}[l]{0.25\linewidth}
        \centering
        \includegraphics[trim={0 0 0 2.7cm}, clip, width=1\linewidth]{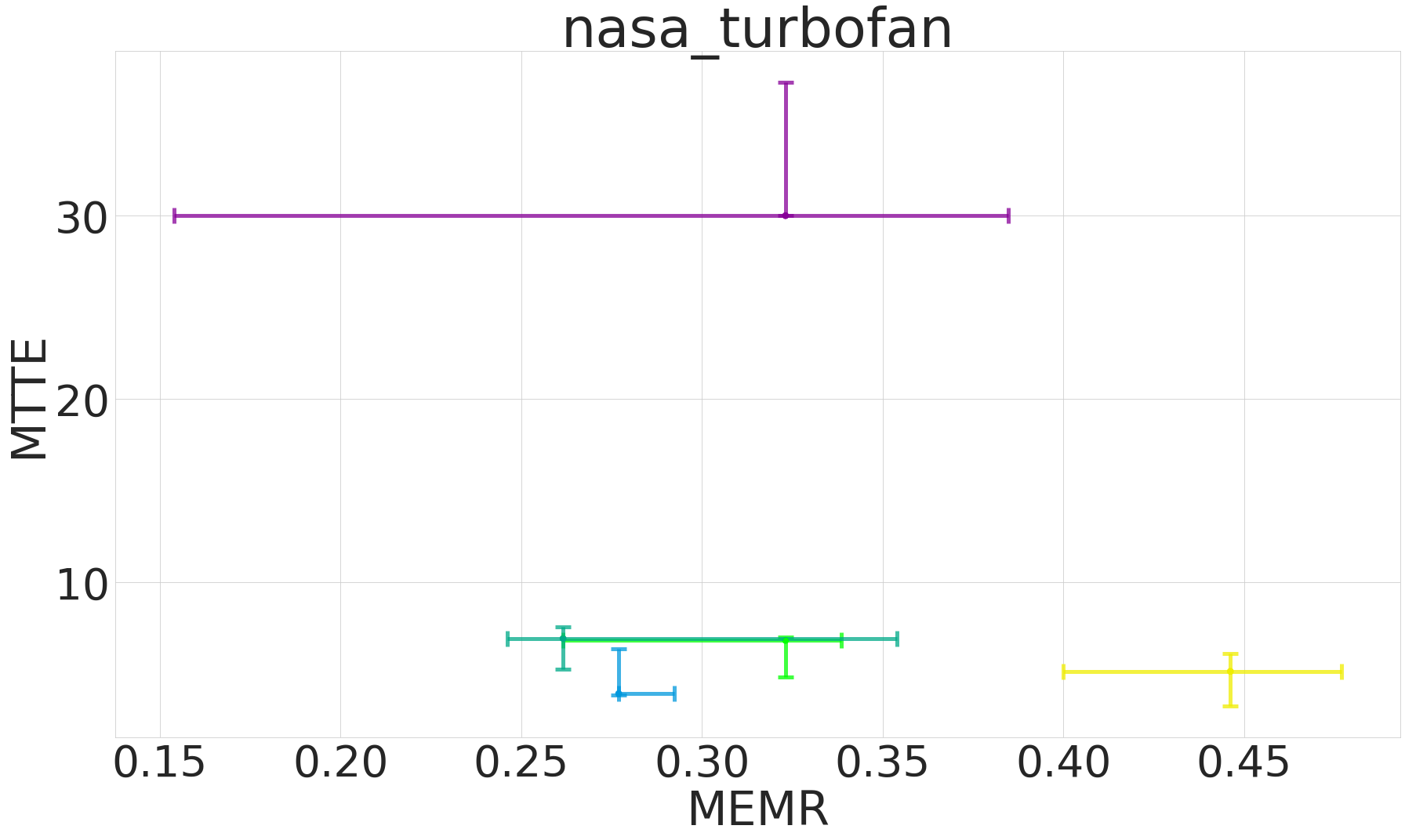}
        \captionsetup{labelformat=empty}
    \end{minipage}\hfill
    
    % \captionstyle{centerlast}
    \begin{minipage}[l]{0.25\linewidth}
        \centering
        \includegraphics[trim={0 13cm 0 0.6cm}, clip, width=1\linewidth]{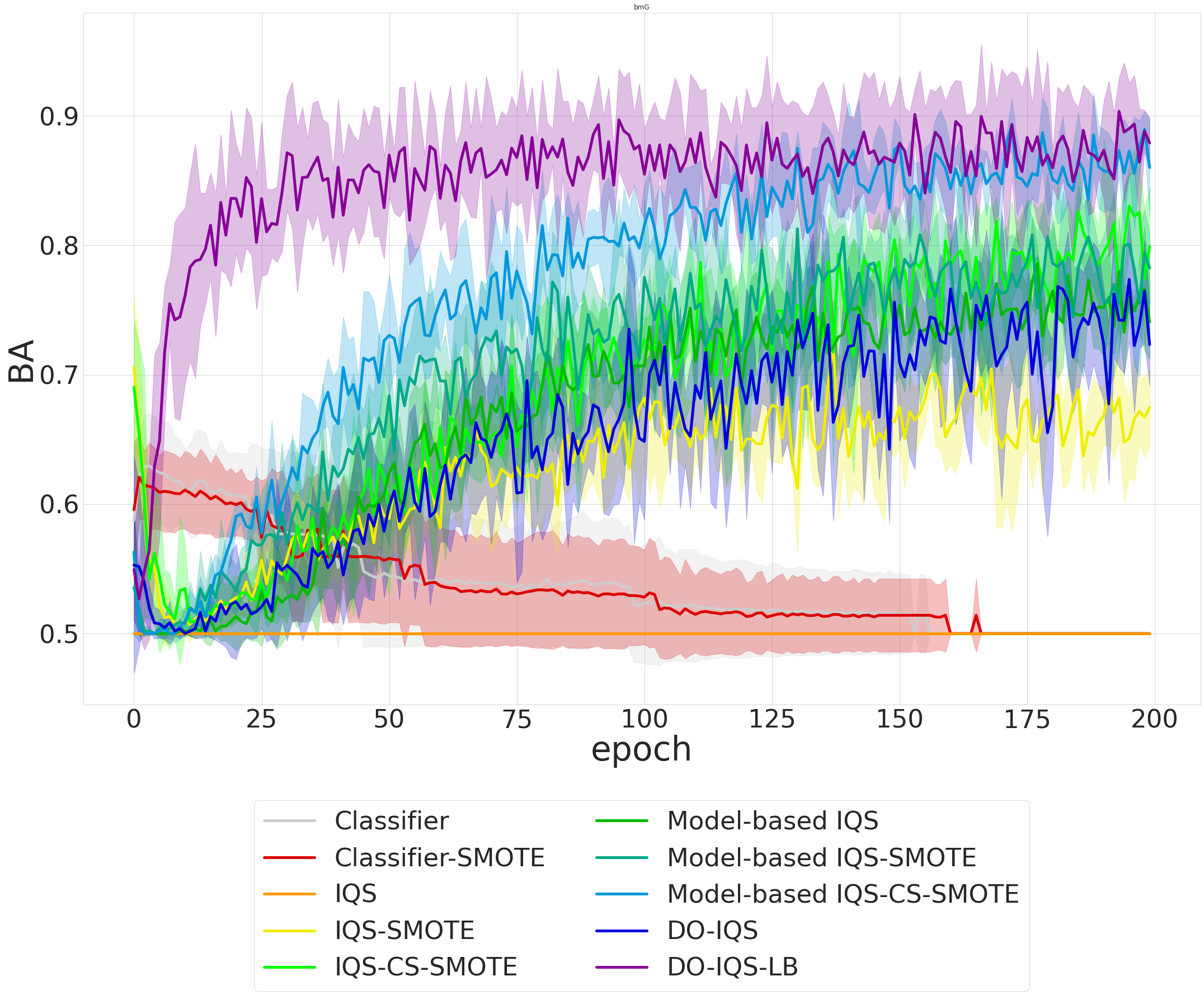}
        \captionsetup{labelformat=empty}
        \caption{bmG}
    \end{minipage}\hfill
    \begin{minipage}[l]{0.25\linewidth}
        \centering
        \includegraphics[trim={0 13cm 0 0.6cm}, clip, width=1\linewidth]{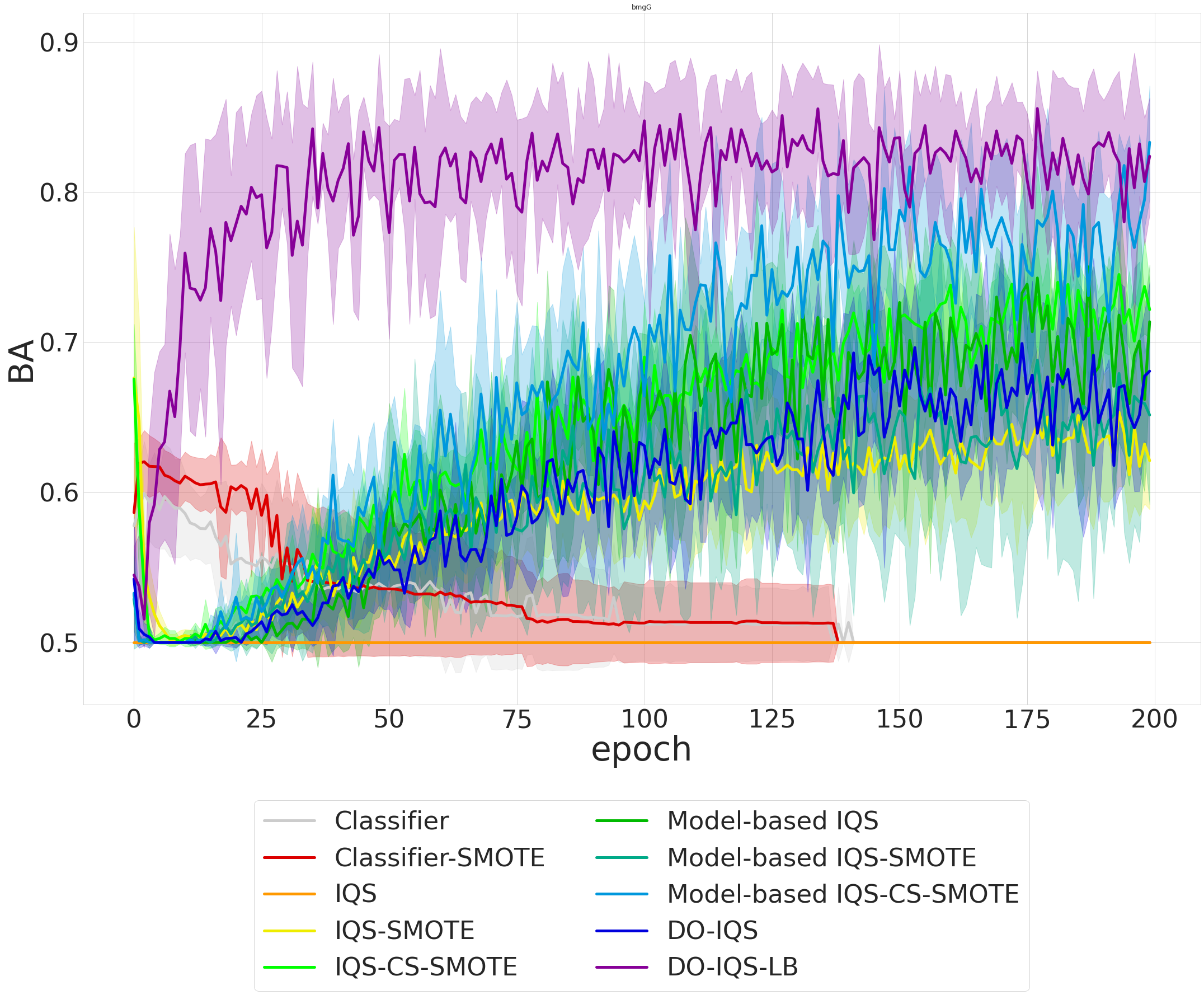}
        \captionsetup{labelformat=empty}
        \caption{bmgG}
    \end{minipage}\hfill
    \begin{minipage}[l]{0.25\linewidth}
        \centering
        \includegraphics[trim={0 13cm 0 0.6cm}, clip, width=1\linewidth]{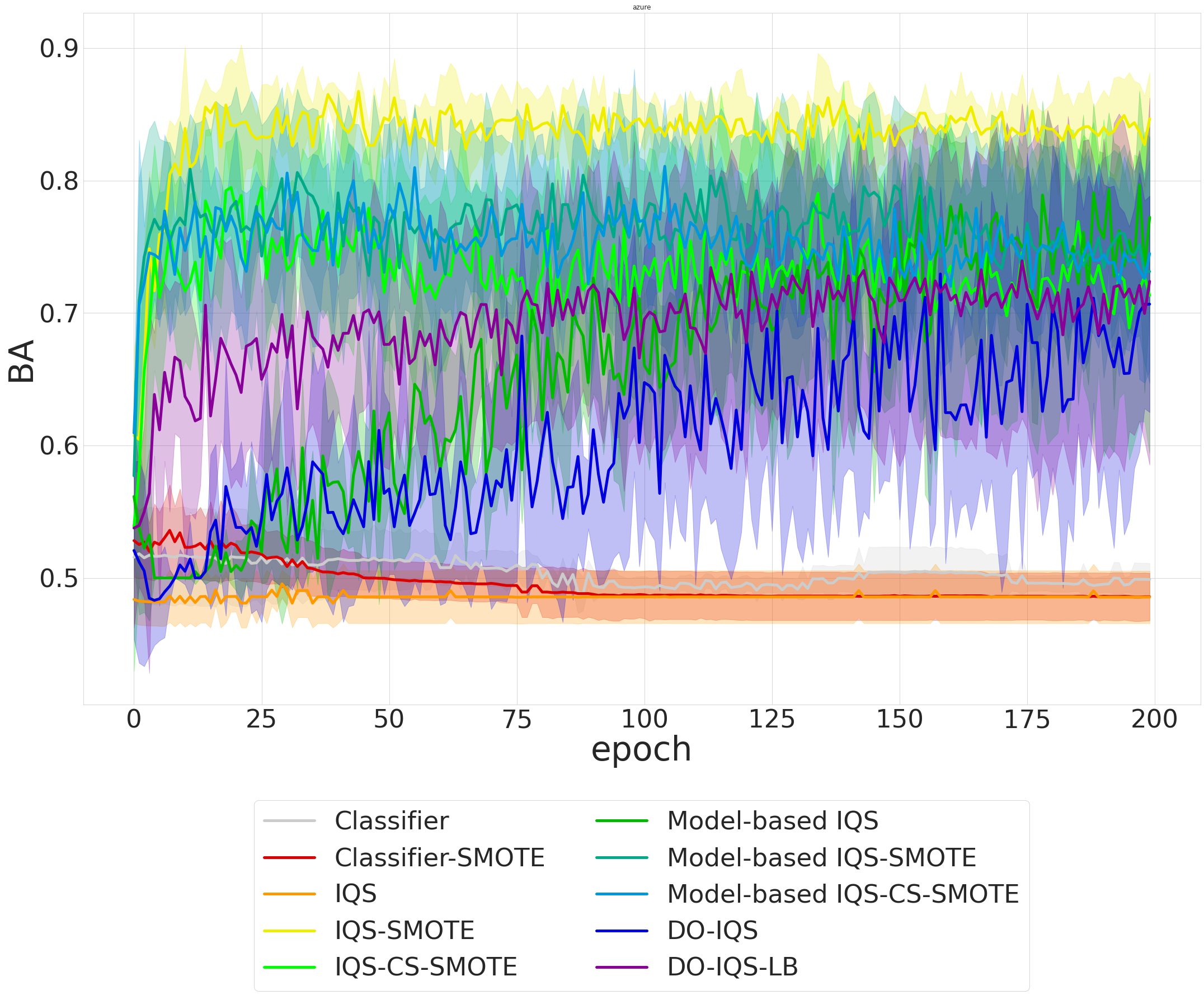}
        \captionsetup{labelformat=empty}
        \caption{Azure}
    \end{minipage}\hfill
    \begin{minipage}[l]{0.25\linewidth}
        \centering
        \includegraphics[trim={0 13cm 0 0.6cm}, clip, width=1\linewidth]{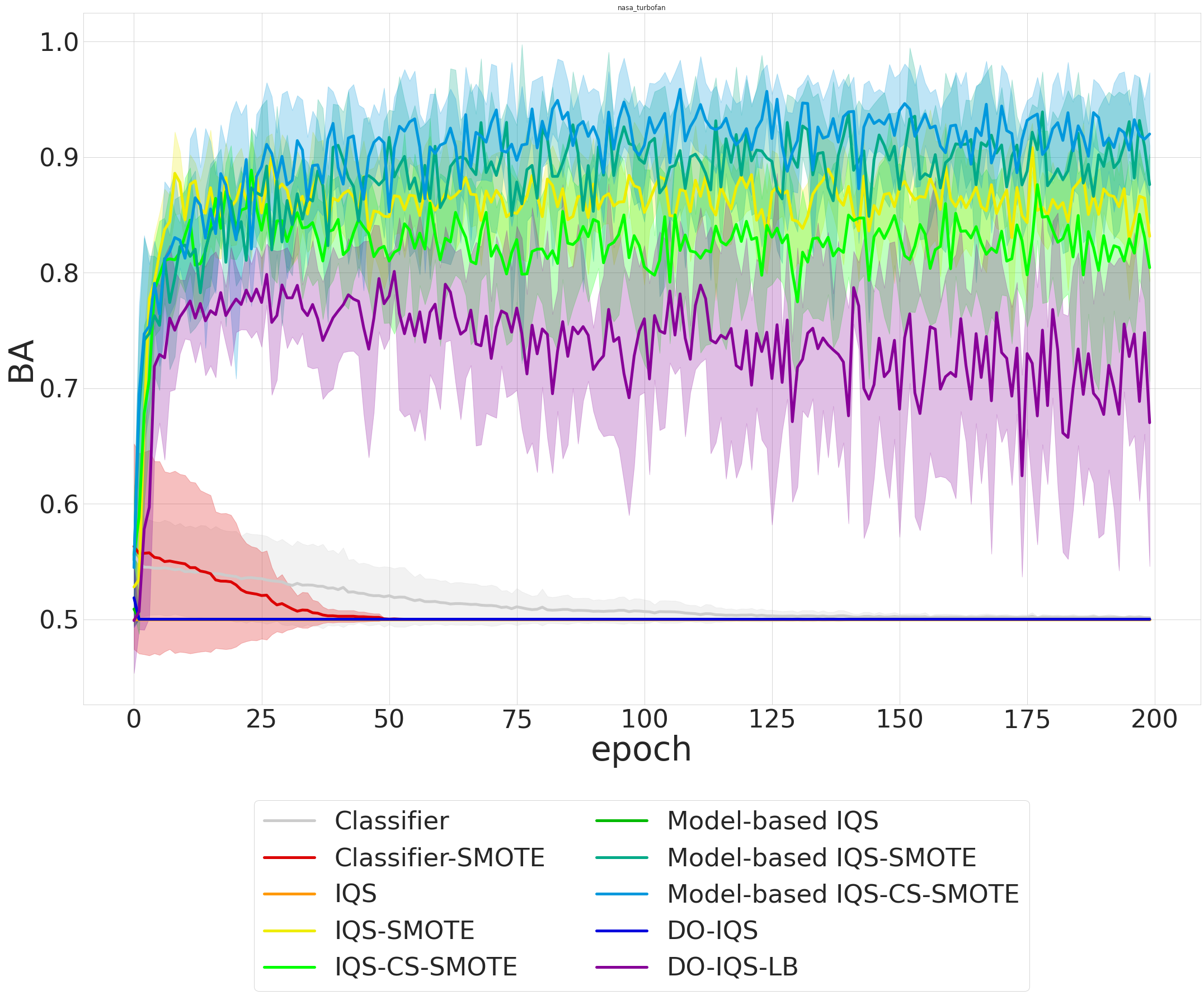}
        \captionsetup{labelformat=empty}
        \caption{Turbofan}
    \end{minipage}\hfill
    \begin{minipage}[l]{1\linewidth}
        \centering
        \includegraphics[trim={0 0 0 0}, clip, width=1\linewidth]{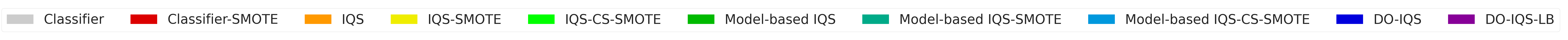}
        \captionsetup{labelformat=empty}
        \caption{}
    \end{minipage}
    \vspace{-20pt}
    \setcounter{figure}{2}
    \caption{\textbf{Top}: Median m-EMR to m-TTE trade-off (median values with 25-75 IQR error bars). \textbf{Bottom}: Average balanced accuracy (solid lines) with a standard deviation (shaded area) over 200 training epoches}
    \label{fig:results}
% \vskip -15pt 

\end{figure*}

\section{Experiments}\label{sec:experiments}
% \subsection{Environments Description and Results}\label{sec:environments}
\textbf{2D Brownian motion with sparse data}\label{sec:2dbm}. We simulate a 2D Brownian motion \(\mathbb{P}\left(s_1'|s_0=s\right) = N\left(\begin{matrix}s'\left[0\right] \\ s'\left[1\right]\end{matrix}|0,\sqrt{\Delta}\begin{pmatrix}1 & 0 \\ 0 & 1\end{pmatrix}\right)ds'\), where \(\Delta t=\frac{1}{50}\), and \(s[i]\) is and the {i}-th element of \(s\). The stopping and continuation gains are set to \(G(s)=s[0]^2+s[1]^2\) and \(g(s)=\left(5\times\mathds{1}_{|s|<1}-400\times\mathds{1}_{|s|\geq 1}\right)\times \Delta t\), where \(|s| = \sqrt{s[0]^2 + s[1]^2}\). We consider two examples. In the first, expert stopping decisions are simulated using the stopping gain function \(G\) only. In the second example, both the stopping gain \(G\) and the cumulative continuation gain \(g\) forming the \(y\) process are used. We simulate 175 paths and hold out \(30\% \) of those for validation and model selection. We also simulate 75 paths for testing. For each simulated path, the expert optimal stopping times are found using a backward induction algorithm for OS. \par 

\textbf{Optimal time to intervention via IOS with real data }\label{sec:oti}. OTI problem naturally translates into the OS problem over a hazard rate process \cite{2022oti}. The (empirical) hazard rate is defined by minimising both the mean residual time to event (m-TTE) and the mean event miss rate (m-EMR) using survival analysis. This approach requires manual tuning of the desired risk-tolerance. Instead, we approximate the risk function using IOS by setting the expert stopping time to be at the time (or as close as required) to the hazardous event. We tested our approach on Azure Predictive Maintenance Guide Data \cite{azuremodel2017} and NASA Turbofan Engine Failure Data \cite{Saxena2008} to show how IOS can be used to solve the corresponding OTI problem. The datasets were down-sampled to every 10 hours. \par
\textbf{Results}. We compare the convergence results of the models presented as well as a binary classifier (with the same ANN structure as the IQ-Learn algorithm used in this paper) with and without SMOTE oversampling as a baseline. Figure \ref{fig:results} summarises the results of the simulation in terms of the m-EMR and m-TTE trade-off, as well as the convergence of the algorithms in terms of balanced accuracy over 200 training epochs. We also report the mean and errors of balanced accuracy in Table \ref{tbl: ba_results}. DO-IQS-LB outperforms all the other algorithms for the 2D BM examples, providing a clear advantage in the case of the example with continuation gain.  For the OTI example, the best performance was achieved by Model-based IQS and Model-based IQS-SMOTE. DO-IQS-LB achieves average performance on these datasets with a slower convergence, while still outperforming the baseline. Our approach achieves similar or better results compared to the results reported in \cite{2022oti}. Further ablation studies are presented in the appendix. \par 
% \begin{figure}
\begin{table}[!t]
\centering
\caption{Balanced accuracy (mean \(\pm\) two standard deviations) with top-3 values in bold}
\vskip 5pt
\begin{tabular}{lcccc}
\hline
                          & bmG         & bmgG        & azure       & turbofan   \\
\hline
 Classifier               & 0.6164±0.05 & 0.5936±0.05 & 0.535±0.09  & 0.5479±0.09 \\
 Classifier-SMOTE         & 0.6175±0.06 & 0.5924±0.04 & 0.5397±0.09 & 0.5616±0.18 \\
 IQS                      & 0.5000±0.00 & 0.5000±0.00 & 0.4917±0.04 & 0.5000±0.00 \\
 IQS-SMOTE                & 0.7610±0.07 & 0.6651±0.05 & \textbf{0.8141±0.05} & 0.8685±0.08 \\
 IQS-CS-SMOTE             & 0.8438±0.07 & 0.7433±0.10 & \textbf{0.8163±0.12} & \textbf{0.9136±0.07}  \\
 Model-based IQS          & 0.7794±0.07 & \textbf{0.7663±0.10} & \textbf{0.8245±0.13} & 0.5000±0.00 \\
 Model-based IQS-SMOTE    & \textbf{0.8586±0.06} & 0.7337±0.09 & 0.7960±0.09 & \textbf{0.9185±0.12}  \\
 Model-based IQS-CS-SMOTE & \textbf{0.8926±0.08} & \textbf{0.8279±0.05} & 0.7783±0.07 & \textbf{0.9055±0.06} \\
 DO-IQS                   & 0.8257±0.04 & 0.7355±0.09 & 0.7681±0.27 & 0.5000±0.00  \\
 DO-IQS-LB                & \textbf{0.9164±0.02} & \textbf{0.8552±0.05} & 0.7391±0.15 & 0.8250±0.10 \\
\hline
\end{tabular}
\label{tbl: ba_results}
\vskip 2pt
\end{table}
% \end{figure}
% \subsection{Computational Setup, Training Times and Runtime Complexity}\label{sec:compute}
We run the experiments on a VM with 2 vCPUs, 4.0 GiB. Each model is trained for 250 epoches with a batch size of 128 and the best model is selected based on the validation set balanced accuracy score. Training each model took an average of 6 min (12.5 min for DO-IQS and DO-IQS-LB), and the inference step took 0.003 sec/observation. See the appendix for computational complexity analysis and additional experiments.
% We report the median values of the scores with the 25-75 IQR error bars over five runs with different random seeds. The results for the basic IQ-Learn algorithm are not presented due to their poor performance.\par 
% The approximate complexity (one forward pass) of the basic IQ-Learn algorithm is \( O(n_s \cdot n + n_l \cdot n^3 + 2n^2 + n_l \cdot n + n + 2)\) and \(O(4n_s\cdot n + 5n + 2n^4+2n^2)\) for the DO-IQS model (where \(n_s\) is the number of features in the input, \(n\) is the dimension of the ANN layers assumed to be equal across layers, and \(n_l\) is the number of layers in the network; we also assume equal structure of the \(g_\phi\) and \(Q_\theta\) networks, taking the sample size 1). See the supplementary materials for details. 

\section{Discussion and outlook}\label{sec:discussion_and_outlook}
In this paper we develop a set of methods to solve the Inverse Optimal Stopping problem by coupling the original problem to a related RL problem in a way, allowing to apply SOTA methods for IRL. While the conceptual IOS challenges are addressed, some more advances methods for confidence scores evaluations (e.g. semi-supervised classification, assigning confidence scores to expert demonstrations, etc.), environment dynamics estimation (e.g. Bayesian methods, GANs, etc.) and a more efficient way to approximate continuation function are needed. We note that making use of the binary structure of the OS problem and applying (``naive") oversampling techniques substantially improved the performance of the baseline IQ-Learning algorithm even in a more complex case of OTI problems on real data. It is possible to use more principled methods of oversampling with corrections for potential non-convex shape of the stopping region and allowing oversampling of diverse types of data (e.g., generative adversarial minority oversampling \cite{mullick2020}). Although the choice of the IQ-Learning algorithm as a basis for the approach was mostly motivated by the easiness of Q-function recovery in an offline setup, it would be an interesting research direction to compare the performance and stability of other IRL algorithms coupled with IOS.

\bibliography{export}
\newpage 
\appendix
\section{Appendix A: Propositions and proofs}\label{appendix:propositions_and_proofs}
In this section, we present supplementary definitions, propositions, and proofs.
\subsection{Proof of Proposition 1}

\begin{proof}
To show this we define the first hitting time of the stopping region \(D_{\Delta}\) as 
\begin{equation}\label{eq:opthittingtime}
  \tau_{D_{\Delta}} = \inf\{t\geq 0|s_t\in D_{\Delta}\}. 
\end{equation}
with the stopping and continuation regions associated to the original OS problem as follows:\par
\begin{equation}
    \begin{aligned}
        {}&C = \{s_t | G(s_t) \leq V^{\star}(s_t)\},\\
        {}&D_{\Delta} = S\setminus C \cup \{\Delta\},
    \end{aligned}
\end{equation}
and \(D_{\Delta}\) is the stopping region augmented with the set of ``cemetery'' states.\par
We first can show that the first hitting time of the stopping region \(D_{\Delta}\) defined with (\ref{eq:opthittingtime}) above is also an optimal stopping time using the Debut theorem.\par

\begin{theorem}[Debut Theorem]
Let X be an adapted right-continuous stochastic process defined on a filtered probability space. If D is a stopping set which is Borel-measurable, then:\par
\begin{equation}
\tau_{D} = \inf\{t\geq 0| X_t\in D \}
\end{equation}
is a stopping time.
\end{theorem}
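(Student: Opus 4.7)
The plan is to prove the Debut theorem by the classical three-step reduction from nice sets to arbitrary Borel sets, making the filtration hypotheses explicit. Throughout I assume, as is standard, that $(\mathcal{F}_t)_{t\geq 0}$ satisfies the usual conditions (right-continuity and $\mathbb{P}$-completeness); without these the result for general Borel $D$ fails, as is well known.

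First I would handle open $D$. Using right-continuity of the paths of $X$, I would establish
\begin{equation*}
\{\tau_D<t\} \;=\; \bigcup_{q\in\mathbb{Q}\cap[0,t)}\{X_q\in D\}.
\end{equation*}
The inclusion $\supseteq$ is immediate; for $\subseteq$, if $X_s(\omega)\in D$ with $s<t$ and $D$ open, right-continuity places $X_q(\omega)$ in $D$ for all rationals $q$ in some small interval $[s,s+\delta)\cap[0,t)$. Each event on the right lies in $\mathcal{F}_q\subseteq\mathcal{F}_t$, so $\{\tau_D<t\}\in\mathcal{F}_t$, and right-continuity of the filtration gives $\{\tau_D\leq t\}=\bigcap_{n\geq 1}\{\tau_D<t+1/n\}\in\mathcal{F}_t$.

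Next I would treat closed $D$ by approximating with the open $\varepsilon$-neighbourhoods $D_n=\{x:d(x,D)<1/n\}$. Each $\tau_{D_n}$ is a stopping time by the open case. A standard argument using right-continuity of $X$ together with closedness of $D$ identifies $\tau_D=\sup_n \tau_{D_n}$; since a countable supremum of stopping times is a stopping time, this case follows.

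The main obstacle is the general Borel case. Writing
\begin{equation*}
\{\tau_D<t\} \;=\; \pi_\Omega\bigl(\{(\omega,s)\in\Omega\times[0,t):\, X_s(\omega)\in D\}\bigr),
\end{equation*}
one recognises $\{\tau_D<t\}$ as the projection of a product-measurable (in fact Borel) set, which is in general only $\mathbb{P}$-analytic rather than Borel. The resolution is Choquet's capacitability theorem: the class of analytic sets is capacitable for the outer measure $\mathbb{P}^*$, so every analytic set belongs to the $\mathbb{P}$-completion of $\mathcal{F}_t$, which under the usual conditions coincides with $\mathcal{F}_t$ itself. Concluding $\{\tau_D\leq t\}\in\mathcal{F}_t$ then proceeds as in the open case. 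This capacity-theoretic step is the technical heart of the argument; the open and closed reductions above are comparatively routine.
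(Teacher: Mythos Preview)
The paper does not prove the Debut theorem. It is merely stated in Appendix~A and invoked as a classical tool inside the proof of Proposition~1; there is no argument on the paper's side to compare your proposal against.

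On the merits of your outline: step three is the standard route and is correct in spirit. You should, however, make explicit that right-continuity together with adaptedness yields \emph{progressive measurability} of $X$, which is what guarantees that the set $\{(\omega,s)\in\Omega\times[0,t):X_s(\omega)\in D\}$ lies in $\mathcal{F}_t\otimes\mathcal{B}([0,t))$; the Measurable Projection Theorem (equivalently, capacitability of analytic sets) then places its projection in the $\mathbb{P}$-completion of $\mathcal{F}_t$, and the usual conditions finish the job. Steps one and two are illustrative warm-ups but are not logically required once step three is in place. In fact your closed-set identity $\tau_D=\sup_n\tau_{D_n}$ is delicate under right-continuity alone: the approximating hitting times $\tau_{D_n}$ can increase strictly to their supremum, forcing you to control a left limit of $X$ where you have assumed no regularity. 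As written that step has a small gap, but it is harmless since step three subsumes closed $D$ anyway. Your insistence on the usual conditions is correct and essential; without completeness the Borel case genuinely fails.
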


Here we assume that the MC evolves with jumps (allowing us to formulate it as a discrete MDP). So if \(\tau^{\star} = \tau_{D_{\Delta}}\) is the first hitting time of \(D_{\Delta}\), we assume that there exists a hard (instantaneous) killing time \(
\tau_{\Delta} = \tau_{D_{\Delta}}+1\) at which the process enters the set of cemetery states, i.e. \(S_t\in\{\Delta\},\forall t\geq \tau_{\Delta}\).

Hence the OS problem can be expressed as
\begin{equation*}\label{equation:opt_st_rl_explicit}
\begin{aligned}
    {}&V^{\star}(s_0) =  \mathbb{E}_{s_0}\left[\sum_{t=0}^{\tau_{D_{\Delta}} - 1}\gamma^t g(s_t)\mathds{1}_{\{\tau_{D_{\Delta}} \geq 1\}} + \gamma^{\tau_{D_{\Delta}}} G(s_{\tau_{D_{\Delta}}}) \right].
\end{aligned}
\end{equation*}

We now assume that \(s_0\in C\) and \(\tilde{s}_0=s_0 \in C\) with \(C=S\setminus D\). We also define an optimal policy as
    \begin{equation}
        \pi^{\star} (\cdot|s_t) =
        \begin{cases}
          \delta_1(\cdot):s_t\in C \\
          \delta_0(\cdot): s_t\in D_{\Delta}\\
        \end{cases}\
\end{equation}
and express the sum in (\ref{equation:opt_st_rl_explicit}) as:
\begin{equation*}
    \begin{aligned}
        {}&V^{\pi^\star}(s_0) = \mathbb{E}_{s_0}^{\pi^{\star}}\left[\sum_{t=0}^{\tau_{D_{\Delta}} - 1}\gamma^t r(s_t, a_t)\mathds{1}_{\{\tau_{D_{\Delta}} \geq 1\}} + \gamma^{\tau_{D_{\Delta}}}r(s_{\tau_{D_{\Delta}}}, a_{\tau_{D_{\Delta}}}) + \sum_{t= \tau_{D_{\Delta}}+1}^{\infty} \gamma^t r(s_t,a_t) \right]\\
        {}&=\mathbb{E}_{s_0}^{\pi^{\star}}\left[\sum_{t=0}^{\tau_{D_{\Delta}} - 1}\gamma^t r(s_t, a_t)\mathds{1}_{\{\tau_{D_{\Delta}} \geq 1\}}\right]  + \mathbb{E}_{s_0}^{\pi^{\star}}\left[\gamma^{\tau_{D_{\Delta}}}r(s_{\tau_{D_{\Delta}}}, a_{\tau_{D_{\Delta}}})\right] + \mathbb{E}_{s_0}^{\pi^{\star}}\left[\sum_{t= \tau_{D_{\Delta}}+1}^{\infty} \gamma^t r(s_t,a_t) \right].
    \end{aligned}
\end{equation*}

We note that 
\begin{equation*}
    \begin{aligned}
        {}&\mathbb{E}_{s_0}^{\pi^{\star}}\left[r(s_t,a_t)\right]\\
        {}&= p_0(s_0)\sum_{a_0}\pi^{\star}(a_0|s_0) \sum_{s_1}\mathbb{P}_{s_0,a_0}(s_1) \sum_{a_1}\pi^{\star}(a_1|s_1) \sum_{s_2}\mathbb{P}_{s_1,a_1}(s_2)\\
        {}&... \sum_{s_t}\mathbb{P}_{s_{t-1},a_{t-1}}(s_t) \sum_{a_t}\pi^{\star}(a_t|s_t)r(s_t,a_t)\\
        {}&= p_0(s_0)\sum_{a_0}\pi^{\star}(a_0|s_0)\left(\prod_{k=1}^t \sum_{s_{k}}\mathbb{P}_{s_{k-1},a_{k-1}}(s_{k}) \sum_{a_k}\pi^{\star}(a_k|s_k)\right) r(s_t,a_t).
    \end{aligned}
\end{equation*}
We now consider the following tree cases:
\textbf{1}: If \(t<\tau_{D_{\Delta}}\), then \(s_t \in C\) and \(\pi^{\star}(a_t|s_t)=\delta_1(a_t)\). Then \(\pi^{\star}(a_t=1|s_t)=1, \pi^{\star}(a_t=0|s_t)=0\) and \(\mathbb{E}_{s_0}^{\pi^{\star}}\left[r(s_t,a_t)\right]=\mathbb{E}_{s_0}^{\pi^{\star}}\left[g(s_t)\right]\), \(\forall t<\tau_{D_{\Delta}}\);
    
\textbf{2}: If \(t=\tau_{D_{\Delta}}\), then
    \begin{equation*}
        \begin{cases}
            s_{k}\in C, \pi^{\star}(a_k|s_k)=\delta_1(a_k) \text{ and } \pi^{\star}(a_k=1|s_k)=1, \pi^{\star}(a_k=0|s_k)=0 \forall k<\tau_{D_{\Delta}};\\
            s_t \in D_{\Delta}, \pi^{\star}(a_t|s_t)=\delta_0(a_t)\text{ and } \pi^{\star}(a_t=1|s_t)=0, \pi^{\star}(a_t=0|s_t)=1 for t=\tau_{D_{\Delta}}.
        \end{cases}
    \end{equation*}
    
    Then  \(\mathbb{E}_{s_0}^{\pi^{\star}}\left[r(s_t,a_t)\right]=\mathbb{E}_{s_0}^{\pi^{\star}}\left[G(s_t)\right]\), for \(t=\tau_{D_{\Delta}}\);
    
\textbf{3}: Likewise since \(t>\tau_{D_{\Delta}}\) is equivalent to \(t\geq \tau_{\Delta}\) we have that \(s_t\in\{\Delta\}\) and \(\pi^{\star}(a_t|s_t)=\delta_0(a_t)\). We also know that \(g(s_t) = G(s_t) = 0\) for \(s_t\in \{\Delta\}\). Then \(\pi^{\star}(a_t=1|s_t)=0, \pi^{\star}(a_t=0|s_t)=1\) and \(\mathbb{E}_{s_0}^{\pi^{\star}}\left[r(s_t,a_t)\right]=\mathbb{E}_{s_0}^{\pi^{\star}}\left[G(\Delta)\right] = 0\), for \(t>\tau_{D_{\Delta}}\).

Putting it all together we obtain that \(V^{\tau^{\star}}\) and \(V^{\pi^\star}\) are equivalent:
\begin{equation*}
    \begin{aligned}
        {}&{V}^{\pi^\star}(s_0) = \mathbb{E}_{s_0}\left[\sum_{t=0}^{\tau_{D_{\Delta}} - 1}\gamma^t g(s_t)\mathds{1}_{\{\tau_{D_{\Delta}} \geq 1\}} + \gamma^{\tau_{D_{\Delta}}}G(s_{\tau_{D_{\Delta}}}) + \sum_{t= \tau_{D_{\Delta}}+1}^{\infty} \gamma^t G(\Delta) \right]\\
        {}& = \mathbb{E}_{s_0}\left[\sum_{t=0}^{\tau_{D_{\Delta}} - 1}\gamma^t g(s_t)\mathds{1}_{\{\tau_{D_{\Delta}} \geq 1\}} + \gamma^{\tau_{D_{\Delta}}}G(s_{\tau_{D_{\Delta}}}) \right]\\
        {}&= V^{\star}(s_0).
    \end{aligned}
\end{equation*}
\end{proof}

\subsection{Proposition: deterministic stopping policy}
\begin{proposition}[Deterministic stopping policy]\label{proposition:deterministic_stopping_policy}
The optimal stopping policy defined by a set of Delta-functions conditioned on the continuation and stopping region is equivalent to a greedy policy with respect to the \(Q\)-function, i.e.
\begin{equation}
\begin{aligned}
    \pi^{\star}(a|s) = \begin{cases}
     \delta_1(a):s\in C^{\star} \\
     \delta_0(a):s\in D^{\star}_{\Delta}\\
    \end{cases} =\delta\left(a=\argmax_a Q^{\pi^{\star}}(s,a)\right).
\end{aligned}
\end{equation}
\end{proposition}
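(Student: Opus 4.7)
The plan is to compute the two Q-values $Q^{\pi^\star}(s,0)$ and $Q^{\pi^\star}(s,1)$ explicitly using the composite reward (\ref{eq:stopping_reward}) and the SMDP dynamics, then compare them region by region using the Bellman optimality equation $V^{\pi^\star}(s) = \max_{a\in\mathcal{A}} Q^{\pi^\star}(s,a)$.

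First, I would unpack the Q-function. From (\ref{eq:stopping_reward}) we have $r(s,0)=G(s)$ and $r(s,1)=g(s)$. Under $a=0$ the SMDP dynamics of Definition \ref{def:smdp} send the process deterministically to the cemetery set $\{\Delta\}$, on which $V^{\pi^\star}(\Delta)=0$ by construction (any policy collects zero reward from $\Delta$ onward, since $g(\Delta)=G(\Delta)=0$ and the process is trapped). Hence
\begin{equation*}
    Q^{\pi^\star}(s,0) = G(s), \qquad
    Q^{\pi^\star}(s,1) = g(s) + \gamma\,\mathbb{E}_{s'\sim\mathcal{P}(\cdot|s,1)}\bigl[V^{\pi^\star}(s')\bigr].
\end{equation*}

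Next I would do a three-way case analysis on $s$. If $s\in D^\star$, the defining inequality (\ref{eq:stopping_region}) together with $V^{\pi^\star}=V^\star$ (Proposition \ref{proposition:opt_st_rl_equivalence}) gives $G(s)\geq g(s)+\gamma\mathbb{E}_{s'}V^{\pi^\star}(s')$, i.e.\ $Q^{\pi^\star}(s,0)\geq Q^{\pi^\star}(s,1)$, so the greedy action is $a=0$, matching $\delta_0$. If $s\in C^\star$, then $G(s)<V^{\pi^\star}(s)$; combining with the Bellman identity $V^{\pi^\star}(s)=\max\{Q^{\pi^\star}(s,0),Q^{\pi^\star}(s,1)\}=\max\{G(s),Q^{\pi^\star}(s,1)\}$ forces $V^{\pi^\star}(s)=Q^{\pi^\star}(s,1)>G(s)=Q^{\pi^\star}(s,0)$, so the greedy action is $a=1$, matching $\delta_1$. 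Finally for $s\in\{\Delta\}$, both actions yield zero return and transition back to $\Delta$, so the $\arg\max$ is not uniquely defined; the convention $\pi^\star(\cdot|\Delta)=\delta_0(\cdot)$ from Definition \ref{def:smdp} breaks the tie consistently with the right-hand side of the proposition.

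The only delicate point is the handling of the boundary $\partial D^\star = \{s:G(s)=V^{\pi^\star}(s)\}$ and the cemetery tie, where $Q^{\pi^\star}(s,0)=Q^{\pi^\star}(s,1)$ so the $\arg\max$ is set-valued. I would resolve this by adopting the standard OS tie-breaking rule (stop on the boundary), which is already baked into the definition $D^\star=\{s:G(s)\geq V^\star(s)\}$ in (\ref{eq:stopping_region}); with this convention the identification $\delta(a=\arg\max_a Q^{\pi^\star}(s,a))$ agrees with the piecewise Dirac policy on all of $\mathcal{S}_\Delta$, completing the argument. This is the only obstacle to a completely routine derivation.
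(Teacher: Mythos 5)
Your proposal is correct and follows essentially the same route as the paper's proof: both reduce the comparison to $Q^{\pi^\star}(s,0)=G(s)$ versus $Q^{\pi^\star}(s,1)=g(s)+\gamma\mathbb{E}_{s'}V^{\star}(s')$ via the composite reward and the zero-valued cemetery states, and then match the greedy action to the stopping/continuation regions using $V^{\star}(s)=\max_a Q^{\pi^\star}(s,a)$. Your explicit treatment of the tie on $\{s:G(s)=V^{\star}(s)\}$ and at $\Delta$ is in fact more careful than the paper's chain of set identities, which glosses over the tie-breaking convention.
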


\begin{proof}
Assume that there exists an optimal deterministic greedy policy defined by
    \begin{equation*}
        \delta\left(a=\argmax_a Q^{\pi^{\star}}(s,a)\right).
    \end{equation*}
In case of a binary action-space we can write the above in terms of two Delta-functions
\begin{equation*}
    \begin{aligned}
    {}&\pi^{\star}(a|s)=
    \begin{cases}
        \delta(a=0 \text{ if } Q^{\pi^{\star}}(s,0)=\max_a Q^{\pi^{\star}}(s,a))\\
        \delta(a=1 \text{ if } Q^{\pi^{\star}}(s,1)=\max_a Q^{\pi^{\star}}(s,a))\\
    \end{cases}\\
    {}&=\begin{cases}
        \delta_0(a):s\in\{s|r(s,0)+\gamma V^{\star}(s') = V^{\star}(s)\}\\
        \delta_1(a):s\in\{s|r(s,1)+\gamma V^{\star}(s')=V^{\star}(s)\}\\
    \end{cases}\\
    {}&=\begin{cases}
        \delta_0(a):s\in\{s|G(s)= \max\{g(s)+\gamma V^{\star}(s'), G(s)\}\}\\
        \delta_1(a):s\in\{s|g(s)+\gamma V^{\star}(s')=\max\{g(s)+\gamma V^{\star}(s'), G(s)\}\}\\
    \end{cases}\\
    {}&=\begin{cases}
        \delta_0(a):s\in\{s|G(s)\leq g(s)+\gamma V^{\star}(s')\}\\
        \delta_1(a):s\in\{s|G(s)>g(s)+\gamma V^{\star}(s')\}\\
    \end{cases}\\
    {}&=\begin{cases}
        \delta_0(a):s\in\{s|G(s)\leq V^{\star}(s)\}\\
        \delta_1(a):s\in\{s|G(s)>V^{\star}(s)\}\\
    \end{cases}\\
    {}&= \begin{cases}
     \delta_1(a):s\in C^{\star} \\
     \delta_0(a):s\in D^{\star}_{\Delta},\\
    \end{cases}
\end{aligned}
\end{equation*}
where we used the definitions of the \(Q\)- and \(V\)-functions and stopping and continuation regions and the fact that for a deterministic policy \(V^{\pi}(s)=Q^{\pi}(s,\pi(s))\).
\end{proof}
\subsection{Convergence of a stochastic policy}
\begin{proposition}[Convergence of the Boltzmann distribution to the Delta distribution]
 Define Boltzmann distribution as 
 \begin{equation*}
    s_{\epsilon}(f(x)) = \frac{\exp\{f(x)/\epsilon\}}{\sum_{x'\in\mathcal{X}}\exp\{f(x')/\epsilon\}}.
\end{equation*}
 In the limit 
 \begin{equation*}
    s_{\epsilon}(f(x))\rightarrow\delta(x=\argmax_{x'}f(x'))
 \end{equation*}
 as \(\epsilon\rightarrow 0\).
\end{proposition}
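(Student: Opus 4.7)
The plan is to prove the convergence pointwise by exploiting the standard log-sum-exp trick: normalise both numerator and denominator by the maximum value of $f$ so that all exponents become non-positive, and then take $\epsilon \to 0^+$. Let $M = \max_{x' \in \mathcal{X}} f(x')$ and let $\mathcal{X}^\star = \{x' \in \mathcal{X} : f(x') = M\}$ denote the argmax set (not assumed to be a singleton). Rewriting,
\begin{equation*}
    s_\epsilon(f(x)) \;=\; \frac{\exp\{(f(x)-M)/\epsilon\}}{\sum_{x' \in \mathcal{X}} \exp\{(f(x')-M)/\epsilon\}},
\end{equation*}
so every exponent is $\leq 0$, with equality iff $x \in \mathcal{X}^\star$.

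The first step is to treat the generic case where the argmax is unique, $\mathcal{X}^\star = \{x^\star\}$. For $x = x^\star$ the numerator equals $1$, while for each $x' \neq x^\star$ the exponent $(f(x')-M)/\epsilon \to -\infty$ as $\epsilon \to 0^+$, so the corresponding term in the denominator vanishes. Hence the denominator tends to $1$, giving $s_\epsilon(f(x^\star)) \to 1$. For any $x \neq x^\star$ the numerator itself tends to $0$ while the denominator stays bounded below by $1$, so $s_\epsilon(f(x)) \to 0$. This is exactly pointwise convergence to the indicator $\mathds{1}_{\{x = x^\star\}}$, i.e.\ to $\delta(x = \arg\max_{x'} f(x'))$ as claimed.

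The second step handles ties. If $|\mathcal{X}^\star| = k > 1$, the same normalisation shows that the denominator tends to $k$, so $s_\epsilon(f(x)) \to 1/k$ for each $x \in \mathcal{X}^\star$ and $0$ otherwise; the limit is then the uniform distribution on the argmax set, which is the natural interpretation of $\delta(a = \arg\max Q)$ when the argmax is multivalued. For the paper's use case in equation (\ref{eq:limit_of_stochastic_policy}) one can either assume uniqueness generically (ties occur on a measure-zero subset of the state space for continuous $Q$), or simply report the uniform tie-breaking as the limiting object.

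The main technical obstacle is ensuring the pointwise argument is valid when $\mathcal{X}$ is infinite. In that case one needs either that $f$ attains its supremum on $\mathcal{X}$ (which holds automatically for the binary action space $\mathcal{A} = \{0,1\}$ used throughout the paper), or a uniform tail bound of the form $\sum_{x' : f(x') < M - \eta} \exp\{(f(x') - M)/\epsilon\} \to 0$ for some $\eta > 0$, which follows from dominated convergence provided $\sum_{x'} \exp\{f(x')/\epsilon_0\} < \infty$ for some reference $\epsilon_0 > 0$. Since in our SMDP setting $\mathcal{A}$ is finite, this subtlety does not arise and the pointwise argument above is complete.
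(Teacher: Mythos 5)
Your proof is correct and follows essentially the same route as the paper's: normalise the numerator and denominator by $\exp\{f(x^\star)/\epsilon\}$ so all exponents are non-positive, then take the pointwise limit as $\epsilon\rightarrow 0$, obtaining $1$ at the maximiser and $0$ elsewhere. Your additional treatment of ties and of infinite $\mathcal{X}$ goes slightly beyond the paper, which simply assumes a unique supremum (harmless here since $\mathcal{A}=\{0,1\}$), but the core argument is the same.
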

\begin{proof}
Assume that the function \(f(x)\) has a unique supremum and define \(f(x^{\star}) = \sup_{x\in X}f(x)\). We first divide \(s_{\epsilon}(f(x))\) by \(\frac{\exp\{-f(x^{\star})/\epsilon\}}{\exp\{-f(x^{\star})/\epsilon\}}\) to obtain

\begin{equation*}
    \lim_{\epsilon\rightarrow 0}s_{\epsilon}(f(x)) = \lim_{\epsilon\rightarrow 0}\frac{\exp\{f(x)/\epsilon\}}{\sum_{x'\in\mathcal{X}}\exp\{f(x')/\epsilon\}} \frac{\exp\{-f(x^{\star})/\epsilon\}}{\exp\{-f(x^{\star})/\epsilon\}} = \lim_{\epsilon\rightarrow 0}\frac{\exp\{\frac{f(x)-f(x^{\star})}{\epsilon}\}}{\sum_{x'\in\mathcal{X}}\exp\{\frac{f(x')-f(x^{\star})}{\epsilon}\}}.
\end{equation*}
Define \(\Delta(x) = f(x)-f(x^{\star}) <0\) if \(f(x)\neq f(x^{\star})\) and \(\Delta^{\star} (x)= f(x)-f(x^{\star})=0\) if \(f(x)=f(x^{\star})\). Then
\begin{equation*}
    \lim_{\epsilon\rightarrow 0} s_{\epsilon}(f(x))=
    \begin{cases}
        \lim_{\epsilon \rightarrow 0} \frac{\exp\{\Delta (x)/\epsilon \}}{\sum_{x'\neq \mathcal{X}}\exp\{\Delta(x')/\epsilon\}+1} \text{ if \(f(x)\neq f(x^{\star})\)}\\
        \lim_{\epsilon \rightarrow 0} \frac{1}{\sum_{x'\neq \mathcal{X}}\exp\{\Delta(x')/\epsilon\}+1} \text{ if \(f(x)=f(x^{\star})\)}.
    \end{cases}
\end{equation*}
Then
\begin{equation*}
\begin{aligned}
    {}&\lim_{\epsilon\rightarrow 0} s_{\epsilon}(f(x))=
    \begin{cases}
        \frac{0}{0+1} \text{ if \(f(x)\neq f(x^{\star})\)}\\
        \frac{1}{0+1} \text{ if \(f(x)=f(x^{\star})\)}
    \end{cases}
    =\begin{cases}
        0 \text{ if \(f(x)\neq f(x^{\star})\)}\\
        1 \text{ if \(f(x)=f(x^{\star})\)}
    \end{cases}\\
    {}&=\delta\left(x=\argmax_{x'}f(x')\right).
\end{aligned}
\end{equation*}
In other words, the softargmax function \(s_\epsilon(\cdot)\) converges to the argmax function \(\argmax(\cdot)\) as the scaling parameter goes to zero.
\end{proof}

% \subsection{Connection to POMDPs}
% We note a connection of the OS problem with the extra $Y$ variable to Partially Observable Markov Decision Processes (POMDP). To show that we present a Figure \ref{fig:pomdp}. Here the observable part of the process consists of the environment states \(s_t\), while the expert makes decision to stop or continue based on full observations containing a component \(y_t\) providing them with the full information on the underlying dynamics.
% \begin{figure}\label{fig:pomdp}
%     \centering
%     \includegraphics[width=1\linewidth]{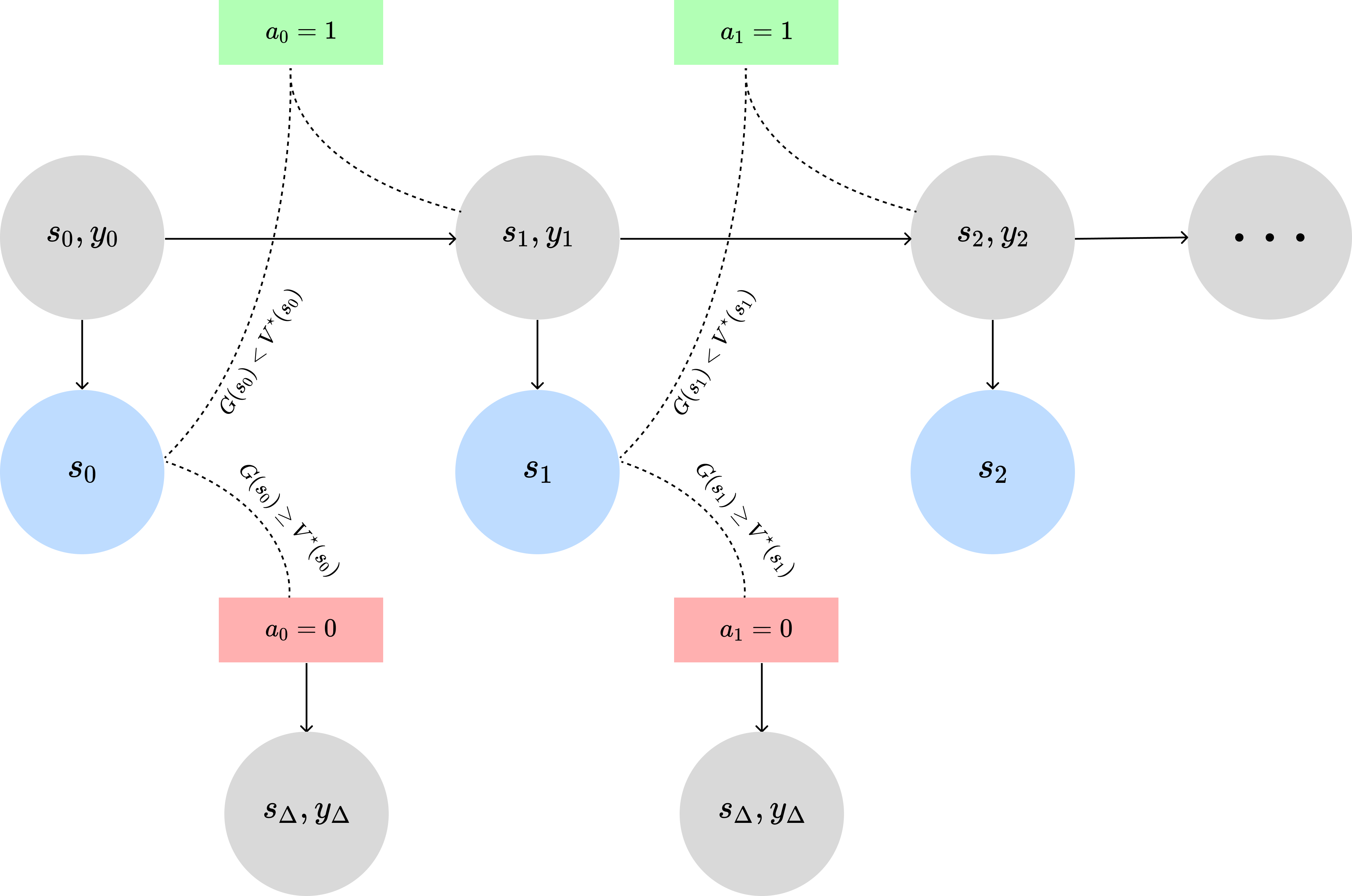}
%     \caption{OS with non-Markovian component as a POMDP problem}
%     \label{fig:pomdp}
% \end{figure}
\subsection{Stopping states and cemetery states}

For practical implementation, proper treatment of stopping and cemetery states is crucial. \par 
\begin{remark}
  The OS value function \(V^{\pi_{stopping}}\) with the reward \(r_{stopping}(s,a)=g(s)a+G(s)(1-a), \forall s\neq s_\Delta\) and \(r_{stopping}(s_\Delta)=0\) is equivalent to the value function for the MDP with the absorbing states \(V^{\pi_{absorbing}}\) (with the rewards \(r(s,a)\) and \(r(s_\Delta,\cdot) \neq 0\)).   
\end{remark}
To see that consider the following form of the value function for the problem with absorbing states:
\begin{equation*}
\begin{aligned}
    {}&V^{\pi_{absorbing}}(s) \\
    {}&=\mathbb{E}_s\left[\sum_{t=0}^T \gamma^t r(s_t,a_t) + \sum_{t=T+1}^\infty\gamma^t r(s_\Delta,\cdot)\right]\\
    {}&= \mathbb{E}_s\left[\sum_{t=0}^{T-1} \gamma^t r(s_t,a_t) +\gamma^Tr(s_T, a_T)+ \sum_{t=T+1}^\infty\gamma^t r(s_\Delta,\cdot)\right]\\
    {}& =\mathbb{E}_s\left[\sum_{t=0}^{T-1} \gamma^t r(s_t,a_t) + \gamma^Tr_\Delta(s_T,a_T)\right],
\end{aligned}
    \end{equation*}
where \(r_\Delta(s_T,a_T)=r(s_T,a_T)+const_\Delta = r(s_T,a_T)+\sum_{t=T+1}^\infty\gamma^tr(s_\Delta,\cdot)\), since \(r(s_\Delta,\cdot)\) and \(\gamma\) are constants. This is our stopping reward \(G(\cdot)\).\par 
\begin{figure}
\centering
\includegraphics[width=0.5\linewidth]{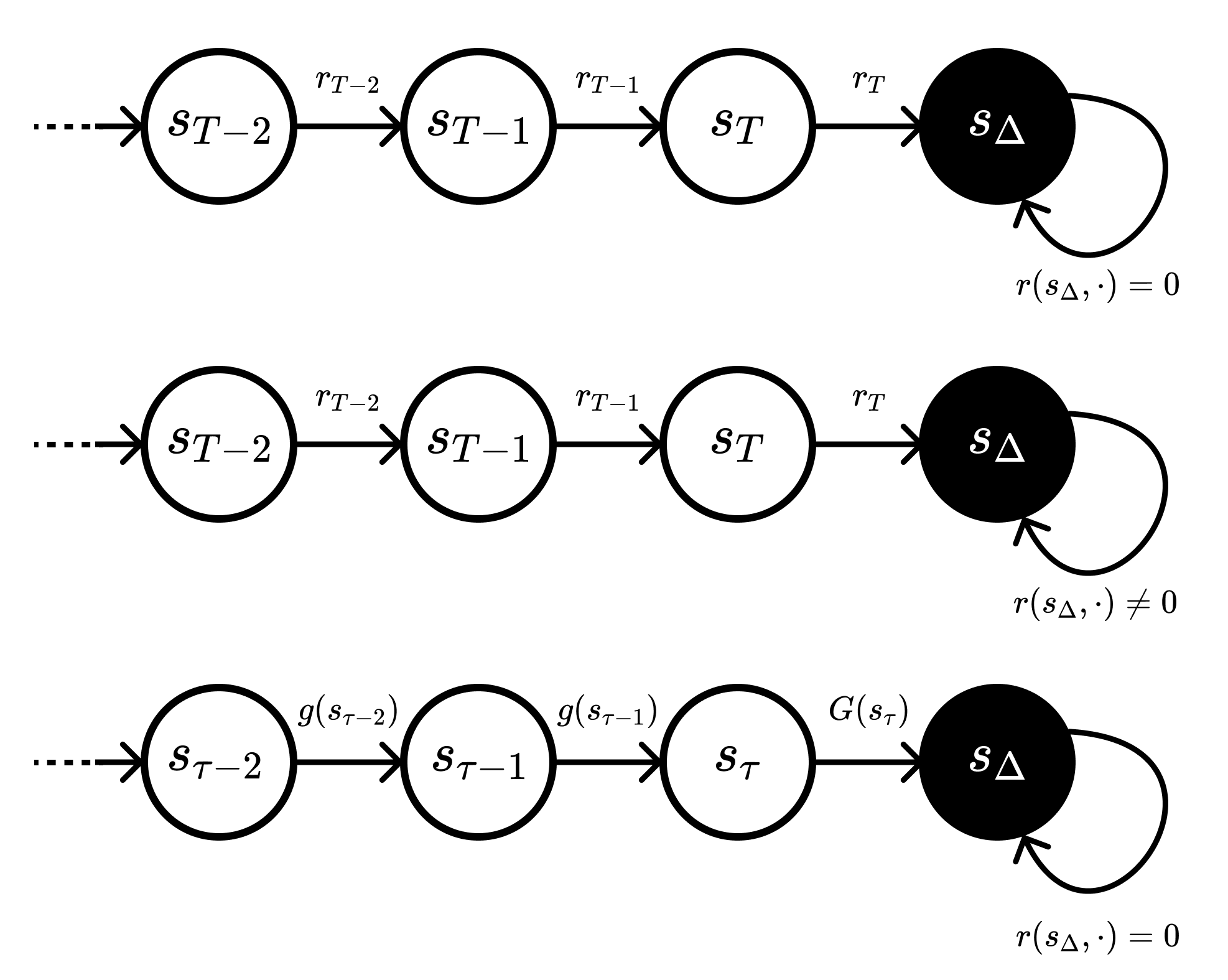}
  \caption{Top-to-bottom: a standard MDP with zero reward in absorbing states; MDP with non-zero reward in absorbing states; SMDP with stopping and continuation gains and zero-reward absorbing states.}
\label{fig:non-stationary mdp and optimal stopping}
\end{figure}
In case of the optimal stopping problem, we also deal with episodic MDPs, where the last observed state at a stopping time \(\tau\) is followed by a transition into the absorbing state \(s_\Delta\) with zero rewards.
\begin{equation*}
    \begin{aligned}
        V^{\pi_{absorbing}}(s){}& =\mathbb{E}_s\left[\sum_{t=0}^{T-1} \gamma^t r(s_t,a_t) + \gamma^Tr_\Delta(s_T,a_T)\right] \\
        {}& =\mathbb{E}_s\left[\sum_{t=0}^{\tau-1} \gamma^t r(s_t,a_t) + \gamma^\tau r_\Delta(s_\tau,a_\tau)\right] \\
        {}& =\mathbb{E}_s\left[\sum_{t=0}^{\tau-1} \gamma^t g(s_t) + \gamma^\tau G(s_\tau)\right]\\
        {}&=V^{\pi_{stopping}}(s)
    \end{aligned}
\end{equation*}

To address the practical implementation of the IOS algorithm, we consider the following boundary conditions: \textbf{(1)} The stopping action $a=0$ sends the agent into a set of cemetery states with probability one, which can be thought of as entering an absorbing state. \textbf{(2)} The last state in a path is followed by cemetery states, i.e., once the agent reaches the final state of the path, they enter the set of cemetery states with probability one. \textbf{(3)} Cemetery states are absorbing. Once an agent enters a cemetery state, they stay there forever, i.e. any action performed from a cemetery state will transition the agent into another cemetery state with probability one. \textbf{(4)} Both rewards and, hence, the V- and Q-function in the cemetery states are zero, that is, $(Q(s_\Delta, a)=0$, $V^\pi(s_\Delta)=0$ and $r(s_\Delta,a_\Delta)=0)$. This is reflected in the pre-processing stage as described in Algorithm \ref{alg:iqs-preprocess}.

\section{Appendix B: Model and implementation details}
This section provides a more detailed description of the model architecture and implementation details. 

\subsection{Model implementation}
A more detailed structure of the Neural Network used to implement the DO-IQS algorithm is presented in Figure \ref{fig:fcnn_structure}.
\setlength{\textfloatsep}{0pt}
% \begin{wrapfigure}{C}{1\textwidth}
% \vskip -18pt
    % \begin{minipage}{1\textwidth}
\begin{algorithm}[!tbp]
    % \small
    \caption{IQS data pre-processing procedure}
    \label{alg:iqs-preprocess}
    \begin{algorithmic}[1]
    \Require{Stopped expert observation history \(O_{T_m}^M=\{s_{T_m},a_{T_m}\}^{m=0:M-1}\), \(\forall 0\leq m\leq M-1\); a zero-valued cemetery state \(s_\Delta=\boldsymbol{0}\) }
\Ensure{Expert training dataset}
\State Compose two processes by stacking \(M\) paths together \(S_{T_m}^M = (s_t^m)_{0\leq t\leq T_m-1}^{0\leq m \leq M-1}\), and \(A_{T_m}^M = (a_t^m)_{0\leq t\leq T_m-1}^{0\leq m \leq M-1}\) and set \(L\) to be its length
\State Initialise a shifted state process \(S'_L\) with zeros
\For{\(l\) from \(0\) to \(L-1\)}
\State Set \(s'_l=s_\Delta\) if \(a_l=0\), else set \(s'_l = s_{l+1}\)
\EndFor
\end{algorithmic}
\end{algorithm}
The algorithm \ref{alg:st_region_via_bi_iqs} shows how to recover the OS region using the approximate Q-function. \par 
In practice the IOS framework is implemented as follows. Assume that an expert has access to a reward \(r_E\) and follows their (optimal or sub-optimal) policy \(\pi_E\), corresponding to the reward maximisation problem: \(\pi_E=\argmax_{\pi}V^\pi_{r_E}(\cdot)\), where \(V_{\chi}^\pi(s)=\mathbb{E}_s[\sum_{t=0}^\infty \gamma^t \chi(s_t)]\). We assume that for any \(\pi_E\) we could specify a reward function \(r_E\), s.t. the optimality principle holds. This assumption is necessary for working with real world data, where it is often not possible to know the structure of the expert's decision making process. We note that this assumption is not particularly restrictive and can be viewed as matching a stopping region to the expert's behaviour, and matching a reward function to the stopping region.\par 
The expert also has access to the environment to perform a certain number of independent trials by following their policy \(\pi_E\). This produces sequences of states \((s_0,s_1,s_2,...,s_{T_k})_{k=0,...,K}\), where K is the number of trials and \(T_k\) is the duration of the k-th trial. Since we are working with OS problems, that is, we know that the only two types of admissible actions are \(a=0\) (stop) and \(a=1\) (continue). Hence, the expert's trajectories can be accompanied by corresponding action sequences: \((a_0,a_1,...,a_{T_k})_{k=0,...,T}\). \par 
Noting the structure of the OS problem one can identify the type of these actions exactly, i.e. we know that the expert during the trial \(k\) transitioned for \(T_k\) time steps with the state \(s_{T_k}\) being the last observed. This means that all the actions of the expert up to state \(s_{T_k}\) (i.e. \((s_0,s_1,...,s_{T_k-1})\) were continuation actions, and the stopping action was chosen in state \(s_{T_k}\), after which the cemetery state set \(\{s_\Delta\}\) was entered.\par 
In this way we obtain a more familiar representation of the expert dataset  used for IRL procedure:
\begin{equation*}
\begin{aligned}
    D_E = {}& (s_0,a_0=1,s_1,a_1=1,...,\\
    {}& s_{T_K-1},a_{T_K-1}=1, s_{T_k}, a_{T_k}=0, s_{\Delta}, a_{\Delta}=0, s_{\Delta}, a_{\Delta}=0,...).
\end{aligned}
\end{equation*}
The next step is to transform the dataset to be suitable for training IOS models. For that, we compliment the original \(D_E\) with its shifted version pairing current state-action pairs with their one-step-lookahead counterparts. Note that we are not concerned with the values of the cemetery states \(s_\Delta\), since in the practical implementation of the algorithm they disappear in the Bellman equation through the use of an indicator function checking if the action at the current state is \(a=0\): \(\mathds{1}_{a=0}=(1-a)\). This trick ensures that the value of the cemetery states is always zero.
% \begin{figure}\label{alg:bi-iqs}
%     \centering
%     \includegraphics[width=1\linewidth]{images/2nets (2).jpg}
%     \caption{DO-IQS model structure}
%     \label{fig:DO-IQS_structure}
% \end{figure}
\begin{algorithm}[!htbp]
    % \small
    \caption{Stopping region recovery using Model-Based IQS}
    \label{alg:st_region_via_bi_iqs}
    \begin{algorithmic}[1]
    \Require{Unstopped state process \(S_T^M\); an approximation of the stopping Q-function \(Q_{\epsilon,\theta}\) and the environment model \(\mathcal{P}_\theta\); a  discount factor \(\gamma\in (0,1]\);}
\Ensure{A stopping region \(D_{\epsilon,\theta}=\{s|G(s)\geq g(s) + \gamma \mathcal{P}_\theta(s'|s,a=1)V_{\epsilon,\theta}(s')\}\)}
\For{state \(s_t^m\) in \(S_T^M\)}
\If{\(Q_{\epsilon,\theta}(s_t^m,a=0)\geq Q_{\epsilon,\theta}(s_t^m,a=1)\)}
\State Add \(s_t^m\) to \(D_{\epsilon,\theta}\)
\EndIf
\EndFor
\end{algorithmic}
\vskip 5pt
\end{algorithm}
\subsection{Hyper-parameters selection details}
We use the following hyper-parameter setup throughout the models and examples:
\begin{itemize}
    \item The learning rates for the Q-function, environment dynamics \(P\) and the continuation gain \(g\) were set to  0.01 decaying exponentially after each epoch with the factor 0.9999. We do not fine tune learning rates specifically for each model to avoid overfitting to the validation set;
    \item \(\epsilon=0.1\) with the multiplier factor of 0.9999 applied at each epoch to decrease the value of the temperature parameter over the course of training;
    \item N-neighbours for the SMOTE oversampling algorithm: 12;
    \item The initial confidence score is set to be \(\alpha=0.99\) and then decreased after each epoch exponentially through multiplying it by a decay factor 0.95;
    \item Discount factor \(\gamma=0.99\);
    \item Batch size: 128;
    \item Number of training epochs: 200.
\end{itemize}
In this work, we are taking a naive approach to dynamics modelling by predicting the next state \(s'\) (and not the distribution over the possible next states) to show how integrating even a simple environment model results in a substantial improvement in the modelling results. \par 
An example of the neural network structure used to approximate the dynamics of states together with the Q-function in the model-based IQS approach is presented in Figure \ref{fig:fcnn_structure}.
\begin{figure}\label{fig:ann_doiqs}
    \centering
    \includegraphics[width=0.75\linewidth]{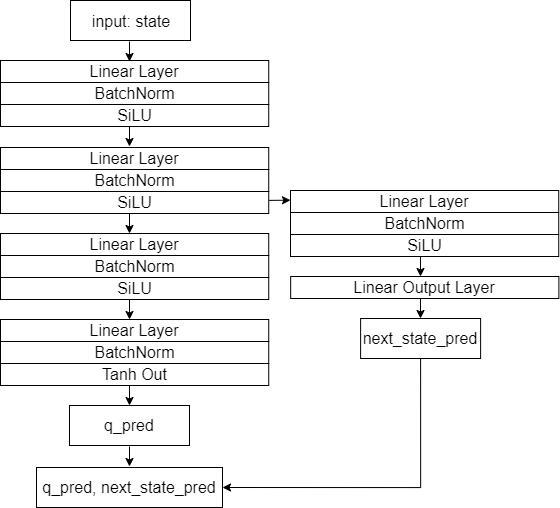}
    \caption{ANN structure for the Model-based IQS}
    \label{fig:fcnn_structure}
\end{figure}

% \subsection{Computational complexity analysis}
% To evaluate how our modifications to the baseline algorithm influence the runtime, we conduct a brief time complexity analysis.\par 
% We first note that since functional approximation in the current work is conducted using Artificial Neural Networks (ANN), time complexity will greatly depend on the number of parameters in the ANNs. The following presents some heuristics about each model runtime for a generic feed-forward network
\subsection{Computational complexity analysis}
The time complexity of any machine learning algorithm depends highly on a few factors, including the training dataset size, the complexity of the functional approximator (e.g. depth and width in case of the neural networks), hardware specifications, etc. Here we give some rough estimates of the runtime complexity in terms of the big-\(O\) notation. \par 
For simplicity we assume the time complexity of a forward-backward pass for a generic feed-forward neural network with a two-dimensional output to be \(O(n_{NN})\). This is exactly the time complexity of the baseline \textbf{classification} model as well as the standard \textbf{IQS} algorithm. For the \textbf{model-based IQS} we just need to add an additional output layer (assuming that this adds a time complexity of \(O(n_{p})\)) resulting in a total complexity of \(O(n_{NN}+n_{p})\). \textbf{IQS-(CS-)SMOTE} increases the runtime through increasing the batch size (\(O(n_{B_{SMOTE}})\), where \(n_{B_{SMOTE}}\) is the number of SMOTE-generated samples in the batch) and the SMOTE procedure itself (used only once at the beginning of training) is \(O(n_{SMOTE} M_{min})\), where \(n_{SMOTE}\) is the number of synthetic instances and \(M_{min}\) is the number of minority samples in the whole dataset. Note that adding SMOTE oversampling does not increase the time complexity during the inference stage, since it is only applied to the training samples. In case of the \textbf{DO-IQS} we deal with two networks: \(Q_\theta\) (with \(P_\theta\)) and \(g_\phi\) (with forward-backward pass complexity of \(O(n_g)\)) approximating the continuation gain. Using the same assumptions as before and noting that the input in \(Q_\theta\) increases in size by one (denoting the new time complexity by \(O(n_{NN,g})\)), we get the following complexity: \(O(n_{NN,g}+n_{p} + n_g)\). Applying the local bootstrap to the batch in case of the \textbf{DO-IQS-LB} adds the complexity of \(O(M_{B_{min}})\), where \(M_{B_{min}}\) is the number of minority samples in the batch.

\section{Appendix C: Experiments and ablation studies}
This section elaborates on the implementation of environments and the experimental results that were not included in the main paper due to the size constraint. It also contains additional experiments and ablation studies to show the performance of the proposed algorithm and highlights some insights revealed through experiments. 
The first three examples consider a problem of change-point detection with and without an autoregressive component. 

\begin{example}[CP1]\label{ex: CP1}
    The following example defines a change-point detection problem. First, a path of 51 time steps is generated using \(x^0\). Then, a change-point is chosen uniformly at random from the last \(70\%\) to \(90\%\) of the path. From the change-point and to the end of the path, the process is described by \(x^1\). The first observation is then chosen uniformly at random in \(0\%\) to \(50\%\) of the path at random, and all previous observations are removed. This is similar to the construction of the change-point example in \citet{2022oti} (see supplementary material). In this example, the mean of the noise component changes after the change-point.
    \begin{equation}
    x^0(t) = sin(\omega t) +\epsilon^{\mu^0,\sigma},
    \end{equation}
    \begin{equation}
    x^1(t) = sin(\omega t) +\epsilon^{\mu^1,\sigma},
    \end{equation}
    where \(x^0(t)\) and \(x^1(t)\) are the values of the process at some time \(t\) before and after the change-point occurs; \(\omega\) is the sinusoidal frequency; \(\epsilon^{\mu,\sigma}\sim N(\mu, \sigma)\) is the normal noise component with the mean \(\mu\) and standard deviation \(\sigma\); \(\mu^0\) and \(\mu^1\) are the means of the noise component before and after the change-point. We take \(\omega=1\), \(\mu^0=0.5\), \(\mu^1=5\) and \(\sigma=1\). For training purposes the change-point is set to be two time steps after the actual change-point occurs. The scatter plot of the stopping (red) and continuation (green) actions is presented in Figure \ref{fig:cp1}.
\end{example}

\begin{figure}[htb]
\centering
  \includegraphics[width=0.75\linewidth]{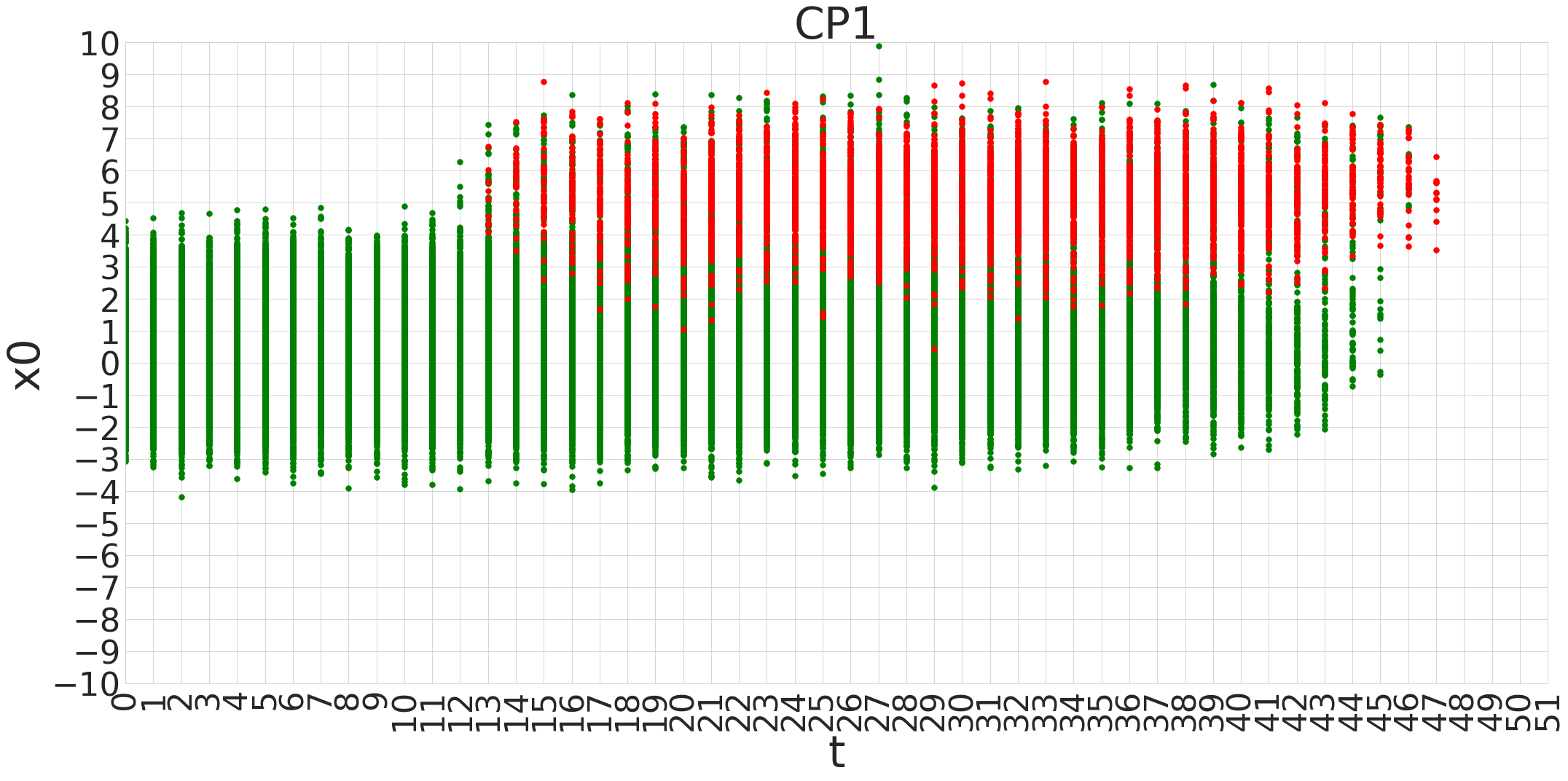}
  \caption{CP1: change-point detection example with the mean change. The plot of 5000 trajectories (green: continuation decisions; red: stopping decisions)}
\label{fig:cp1}
\end{figure}

\begin{example}[CP2]\label{ex: CP2}
    This has the same setup as the Example \ref{ex: CP1}, but the dynamics of the process has got autoregressive components of the second order:
    \begin{equation}
    x^0(t) = sin(\omega t) + \beta^0_0 x(t-1) + \beta^0_1 x(t-2)+\epsilon^{\mu,\sigma},
    \end{equation}
    \begin{equation}
    x^1(t) = sin(\omega t) + \beta^1_0 x(t-1) + \beta^1_1 x(t-2)+\epsilon^{\mu,\sigma},
    \end{equation}
    where \(\beta_0^0, \beta^0_1\) and \(\beta_1^0, \beta^1_1\) are the autoregressive coefficients of the first and second order for the process before and after the change-point respectively. We set \(\omega=1\), \(\mu=0.5\), \(\sigma=1\), \(\beta_0^0=0.25\), \(\beta^0_1=0.05\), \(\beta_1^0=0.75\), \(\beta^1_1=0.5\). The scatter plot of the stopping (red) and continuation (green) actions is presented in Figure \ref{fig:cp2}.
\end{example}

\begin{figure}[htb]
\centering
  \includegraphics[width=0.75\linewidth]{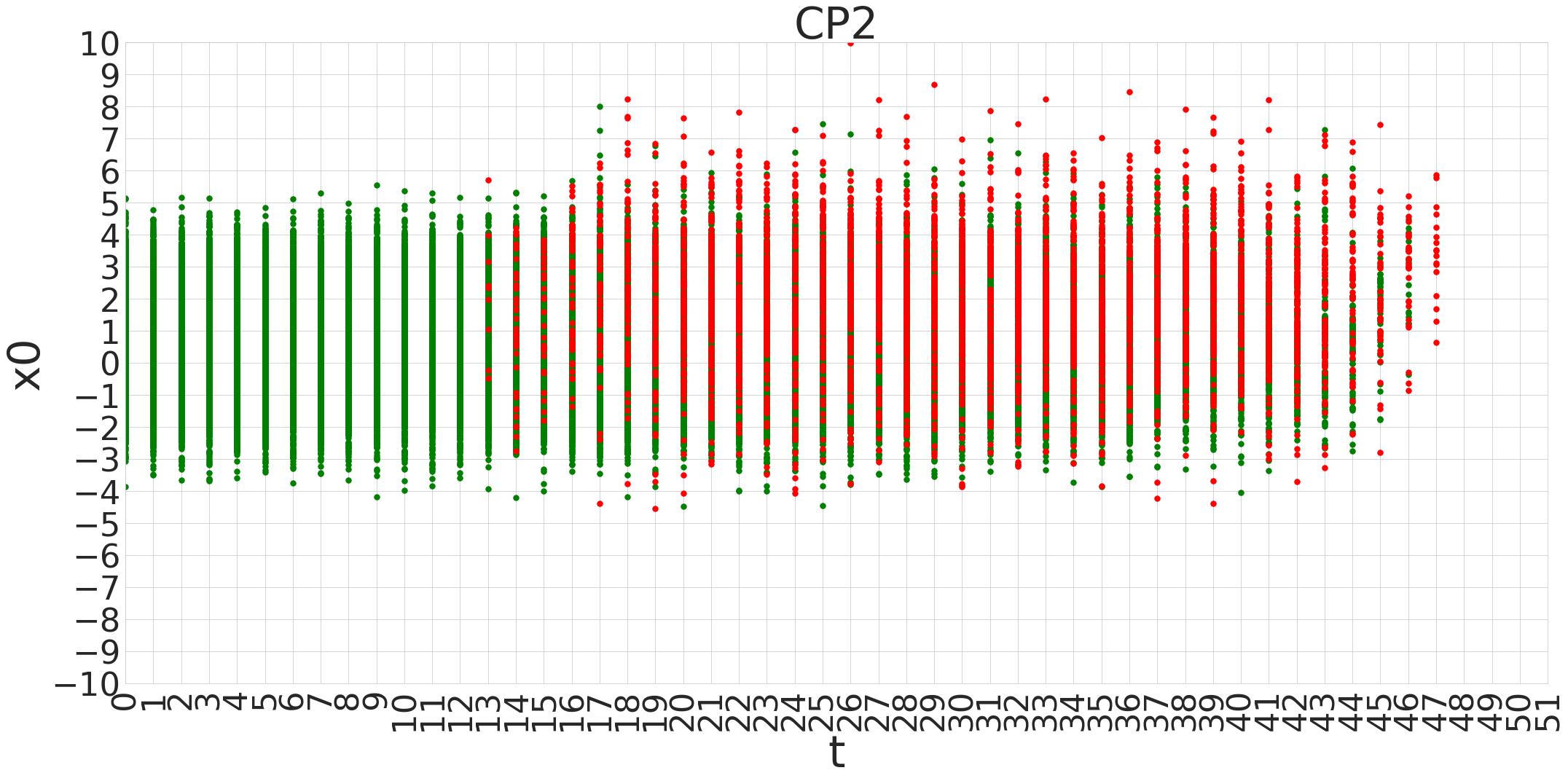}
  \caption{CP2: change-point detection example with the change in AR(2) coefficients. The plot of 5000 trajectories (green: continuation decisions; red: stopping decisions)}
\label{fig:cp2}
\end{figure}

\begin{example}[CP3]\label{ex: CP3}
    This has the same setup as the Example \ref{ex: CP1}, but instead of the mean, the variance of the noise component changes.
    
    \begin{equation}
    x^0(t) = sin(\omega t) +\epsilon^{\mu,\sigma^0},
    \end{equation}
    \begin{equation}
    x^1(t) = sin(\omega t) +\epsilon^{\mu,\sigma^1},
    \end{equation}
    where \(\sigma^0\) and \(\sigma^1\) are the standard deviations of the noise component before and after the change-point. We set \(\omega=1\), \(\mu=0.5\), \(\sigma^0=1\), \(\sigma^1=5\). The scatter plot of the stopping (red) and continuation (green) actions is presented in Figure \ref{fig:cp3}.
\end{example}

\begin{figure}[htb]
\centering
  \includegraphics[width=0.75\linewidth]{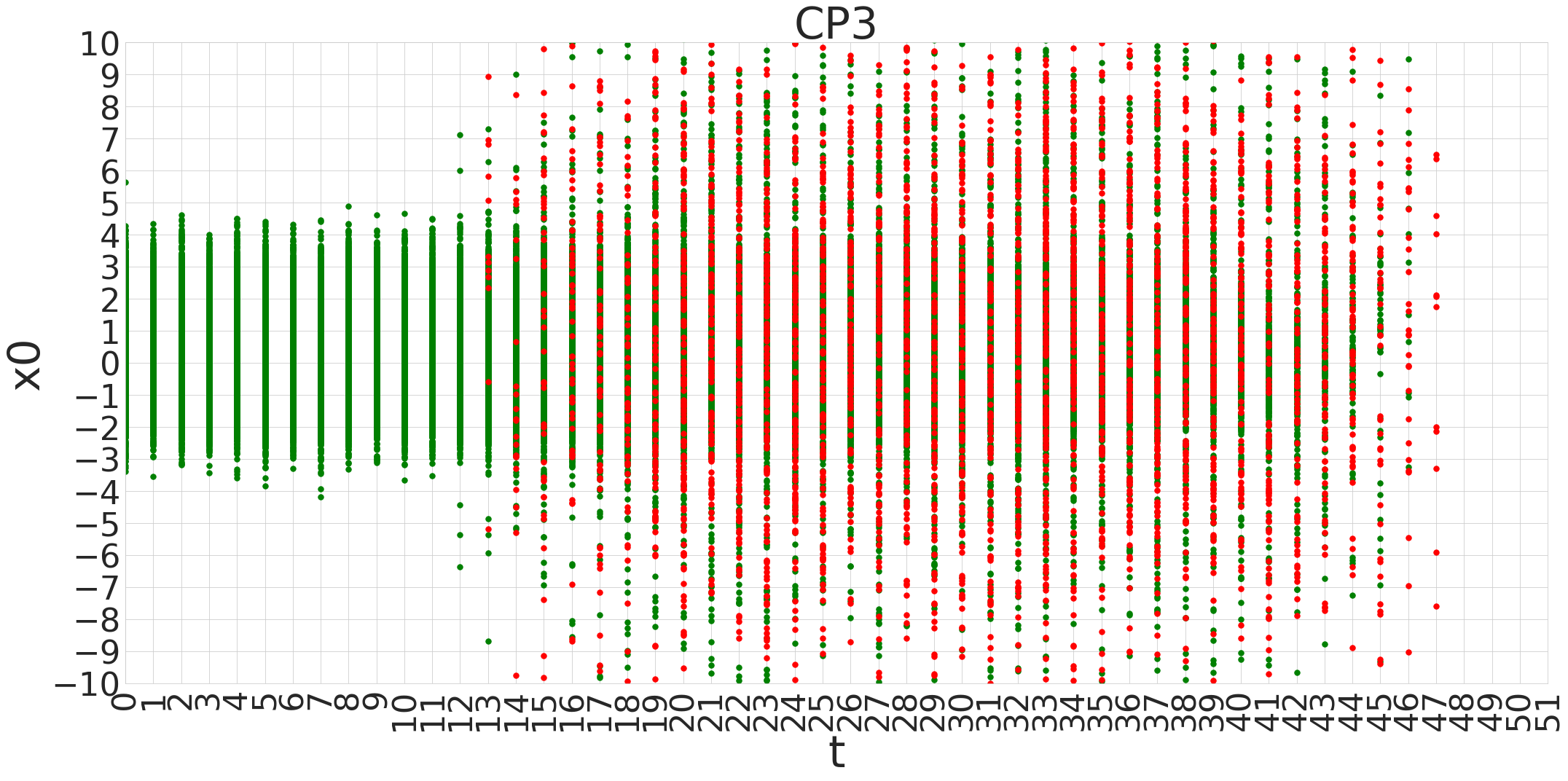}
\caption{CP3: change-point detection example with the variance change. The plot of 5000 trajectories (green: continuation decisions; red: stopping decisions)}
\label{fig:cp3}
\end{figure}

The next two examples provide a convenient visual representation of the stopping regions, both the true and the recovered one. 
\begin{example}[Radial stopping]\label{ex: RADIAL}
An N-dimensional standard Brownian motion starting at the origin is simulated path-wise. Each path has a length of a maximum of 50 time steps. The paths are stopped if they hit the boundary of a circular continuation region \(C_{radial}[t]\) (or exit it) unique for each time \(t\). The continuation region at time \(t\) is defined by a disc of radius \(r[t] = 0.5+0.05t\). The plot representing the stopping boundaries (left) together with the scatter plot of the stopping actions (right) are presented in Figure \ref{fig:radial_ex}.
\end{example}

\begin{figure}[htb]
\centering
  \includegraphics[width=0.45\textwidth]{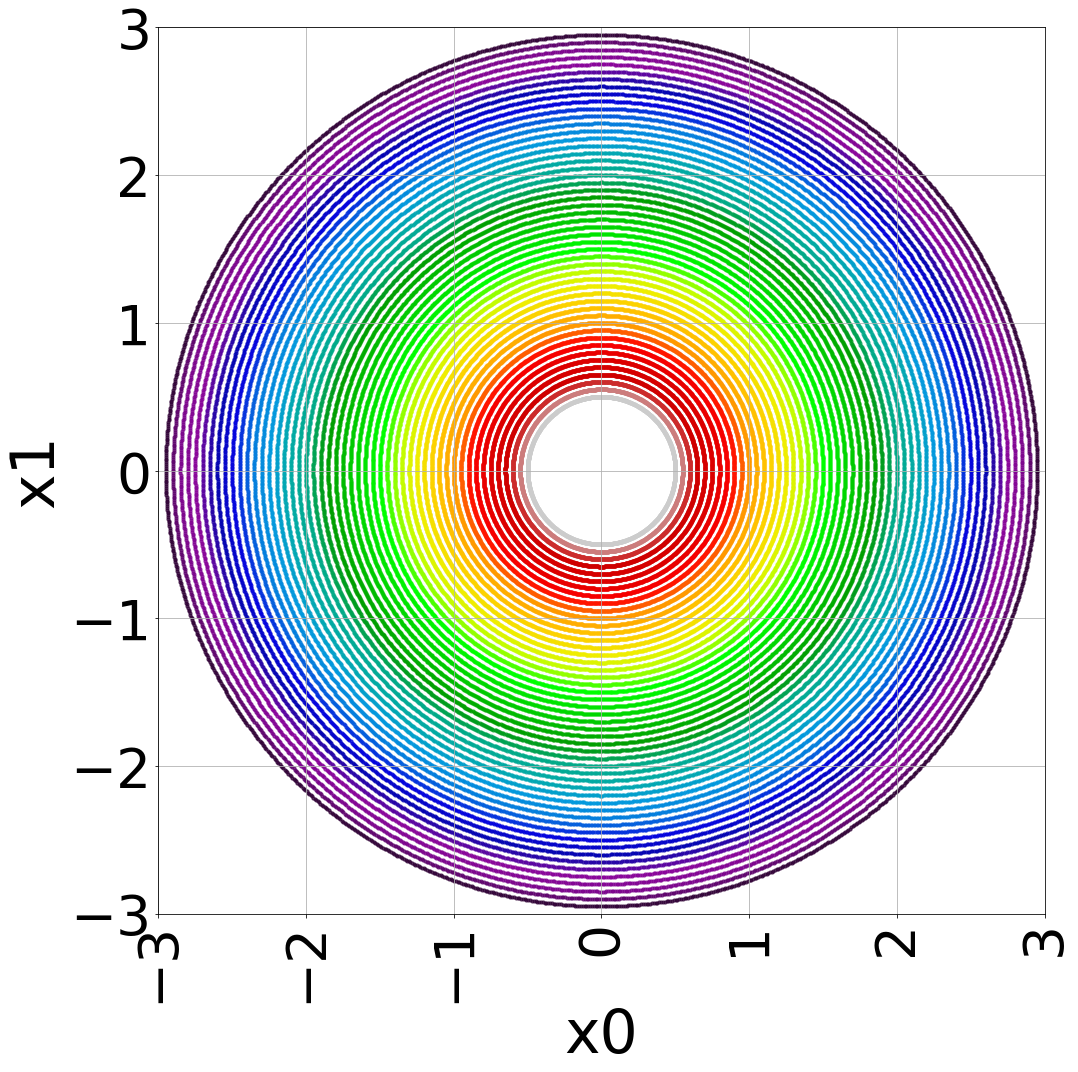}
  \includegraphics[width=0.45\textwidth]{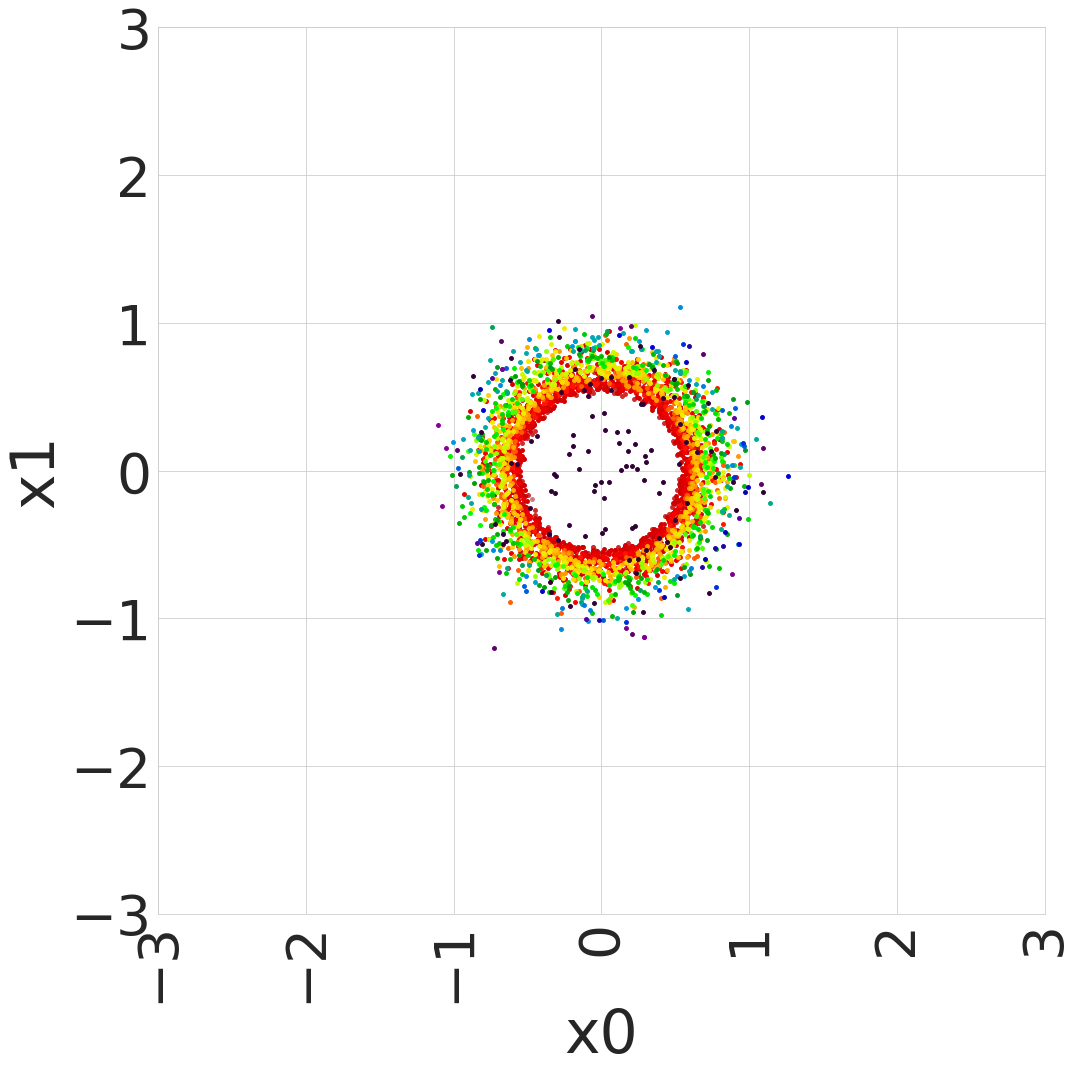}
\caption{Radial stopping example. (a) Time-dependent stopping boundaries; (b) Stopping actions. Different colours define different times.}
\label{fig:radial_ex}
\end{figure}

The above example has a simplistic form of the stopping region (although changing in time), which makes it easier for the algorithms to pick on the shape of it. Moreover, some of the approaches we present in this work rely on local interpolation, which might provide ground for mistakes in examples with more complex shape. The following example aims to cover these types of scenarios.
\begin{example}[STAR-stopping]\label{ex: STAR}
The example has a similar setup to the radial Example \ref{ex: RADIAL} described above. An N-dimensional Brownian motion, starting at the origin, is simulated. At each time \(t\), the path is stopped if it leaves a star-shaped region \(C_{star}[t]\). The star-shaped continuation region at time \(t\) is defined by an outer radius \(r_{outer}\), an inner radius \(r_{inner}\) and the number of angles \(n_{angles}\). The corner points are then interpolated to create an outline of the star shape which changes its radius over time. The inner and outer radiuses at time \(t\) are defined as \( r_{outer}[t] = 0.5+0.05t\) and \( r_{inner}[t] = 0.5r_{outer}[t]\). The plot representing the stopping boundaries (left) together with the scatter plot of the stopping actions (right) are presented in Figure \ref{fig:star_ex}.
% The rotation angle is defined as \(\alpha_{rotation}[t]=0.01t\)
\end{example}

\begin{figure}[htb]
\centering
  \includegraphics[width=0.45\textwidth]{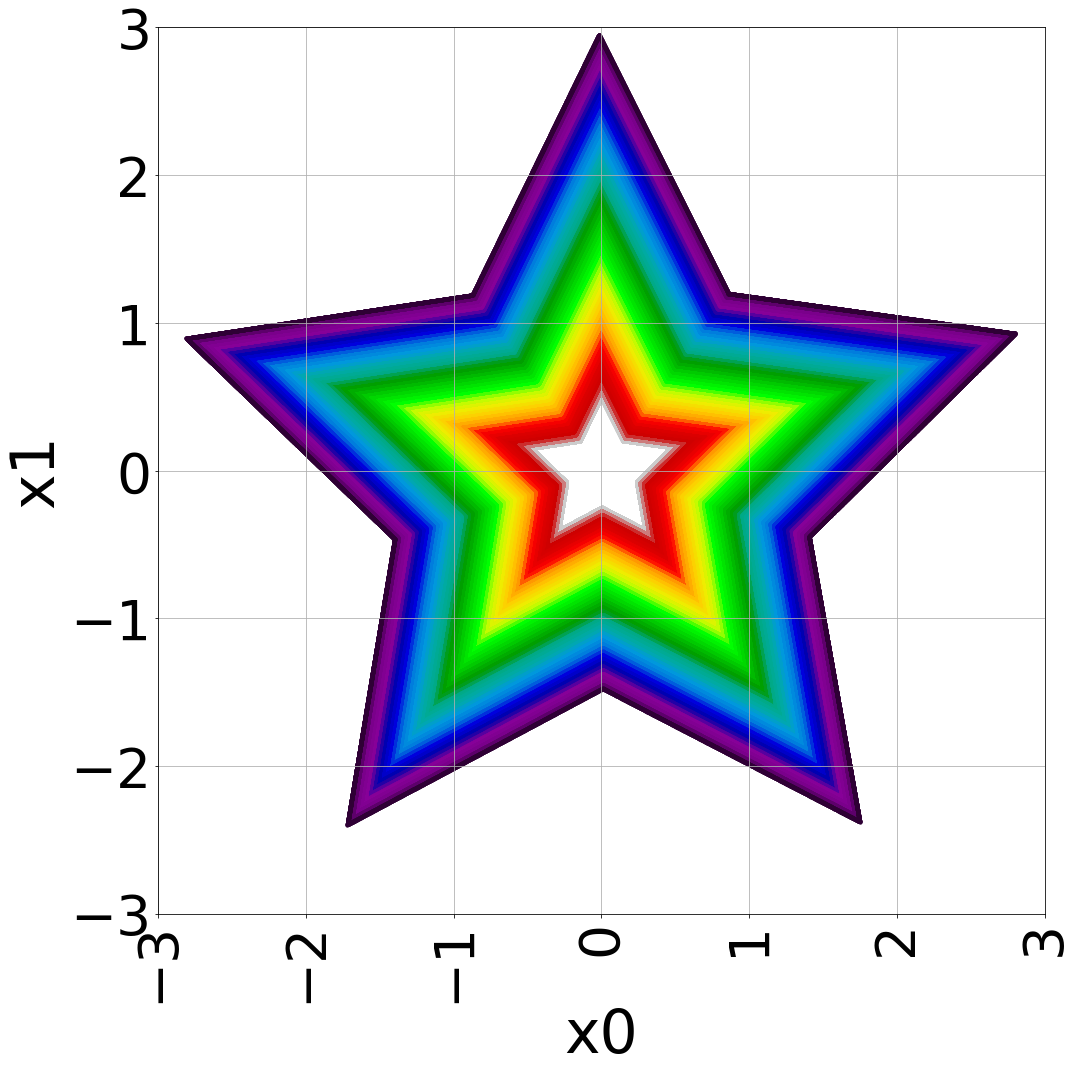}
  \includegraphics[width=0.45\textwidth]{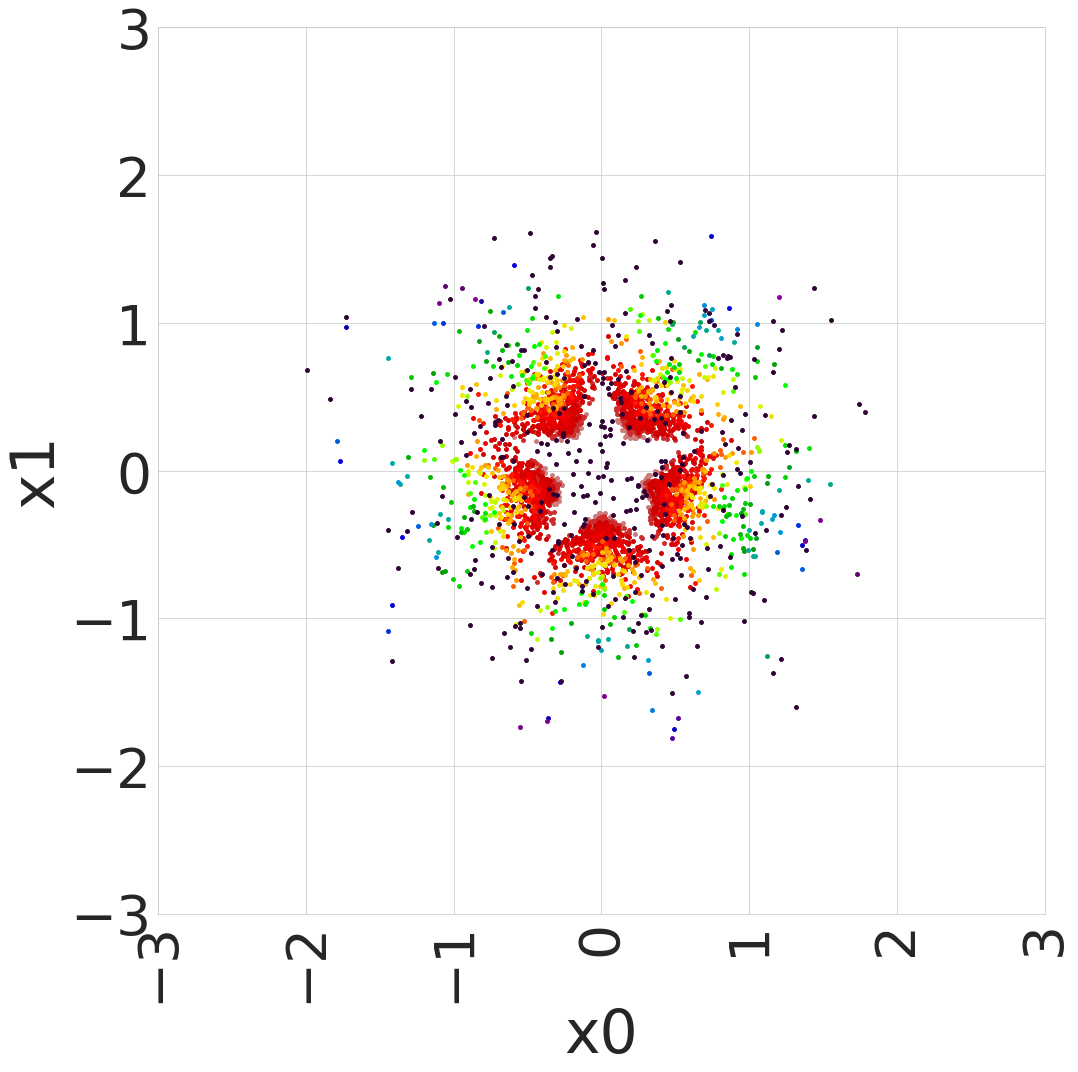}
\caption{STAR stopping example. (a) Time-dependent stopping boundaries; (b) Stopping actions. Different colours define different times.}
\label{fig:star_ex}
\end{figure}

\subsection{Results}

For each of the above examples, we simulate 175 training paths (split into training and validation paths in the 70/30\% ratio) and 75 testing paths for which we report the results in Tables \ref{tbl: cp_results} and \ref{tbl: rad_star_results}. Note how D-IQS-LB outperforms all the other models for the change-point examples with a clear marginal advantage for the change-point detection problem with an autoregressive component.\par To showcase convergence, we also present evolution of balanced accuracy and IQ-loss in Figures \ref{fig:results_cp} and \ref{fig:results_rad_star}. Balanced accuracy values are evaluated on the validation paths. We further note that the convergence in terms of the loss function does not necessarily correspond to an increase in the balanced accuracy score, further supporting the need for a different, task-specific evaluation and convergence metrics.

\begin{figure*}[ht]
    \begin{minipage}[l]{0.33\linewidth}
        \centering
        \includegraphics[trim={0 0 0 2.7cm}, clip, width=1\linewidth]{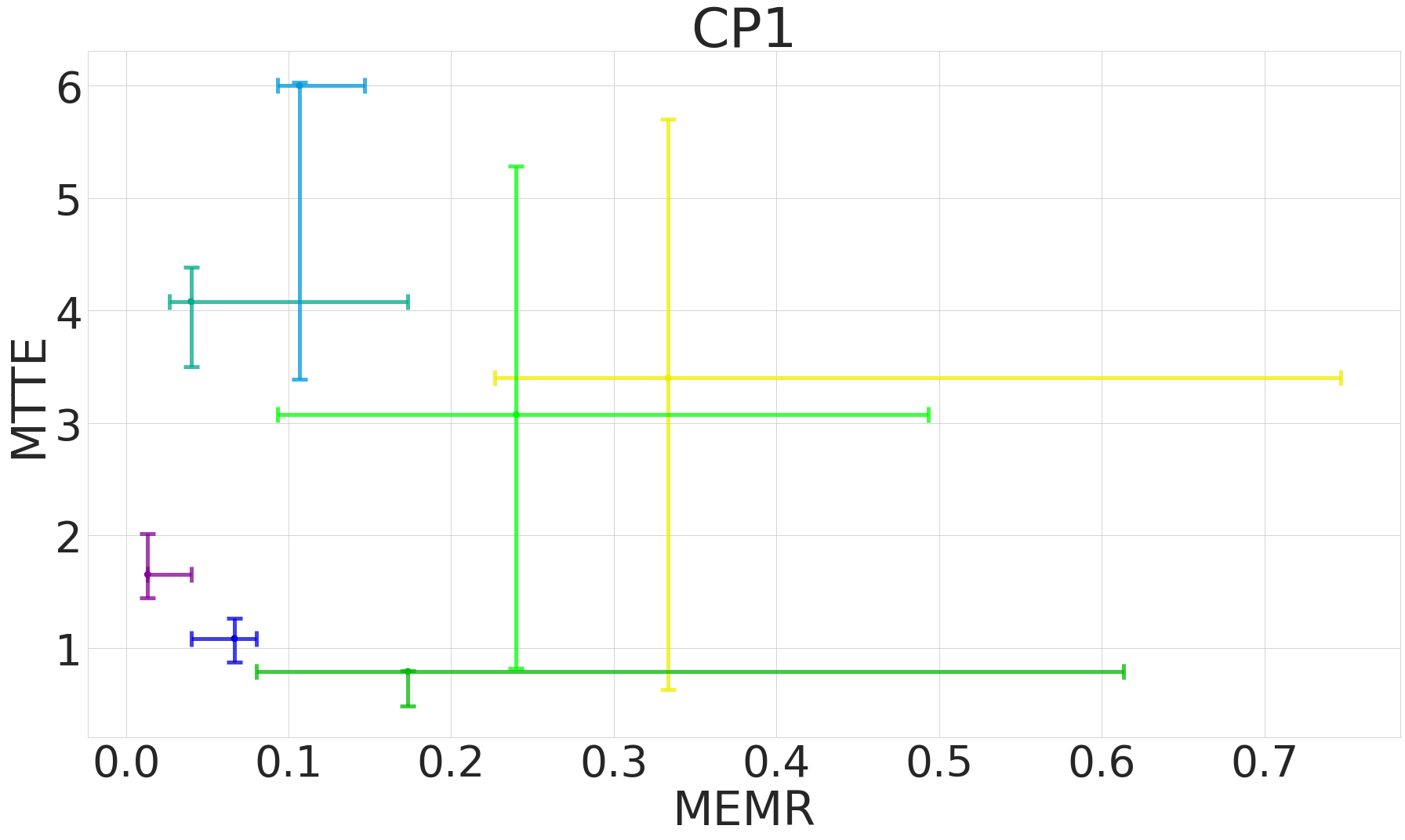}
        \captionsetup{labelformat=empty}
    \end{minipage}\hfill
    \begin{minipage}[l]{0.33\linewidth}
        \centering
        \includegraphics[trim={0 0 0 2.7cm}, clip, width=1\linewidth]{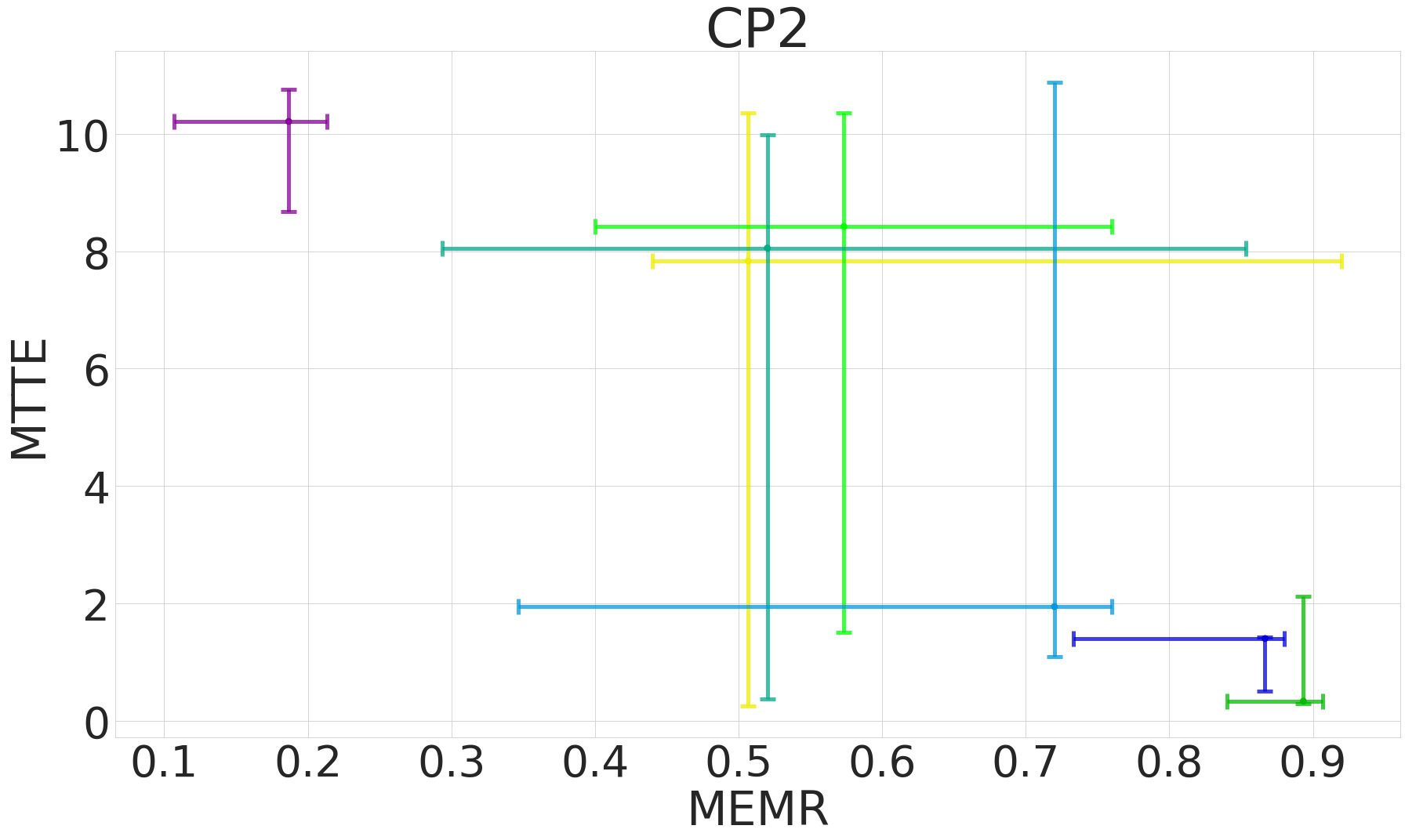}
        \captionsetup{labelformat=empty}
    \end{minipage}\hfill
    \begin{minipage}[l]{0.33\linewidth}
        \centering
        \includegraphics[trim={0 0 0 2.7cm}, clip, width=1\linewidth]{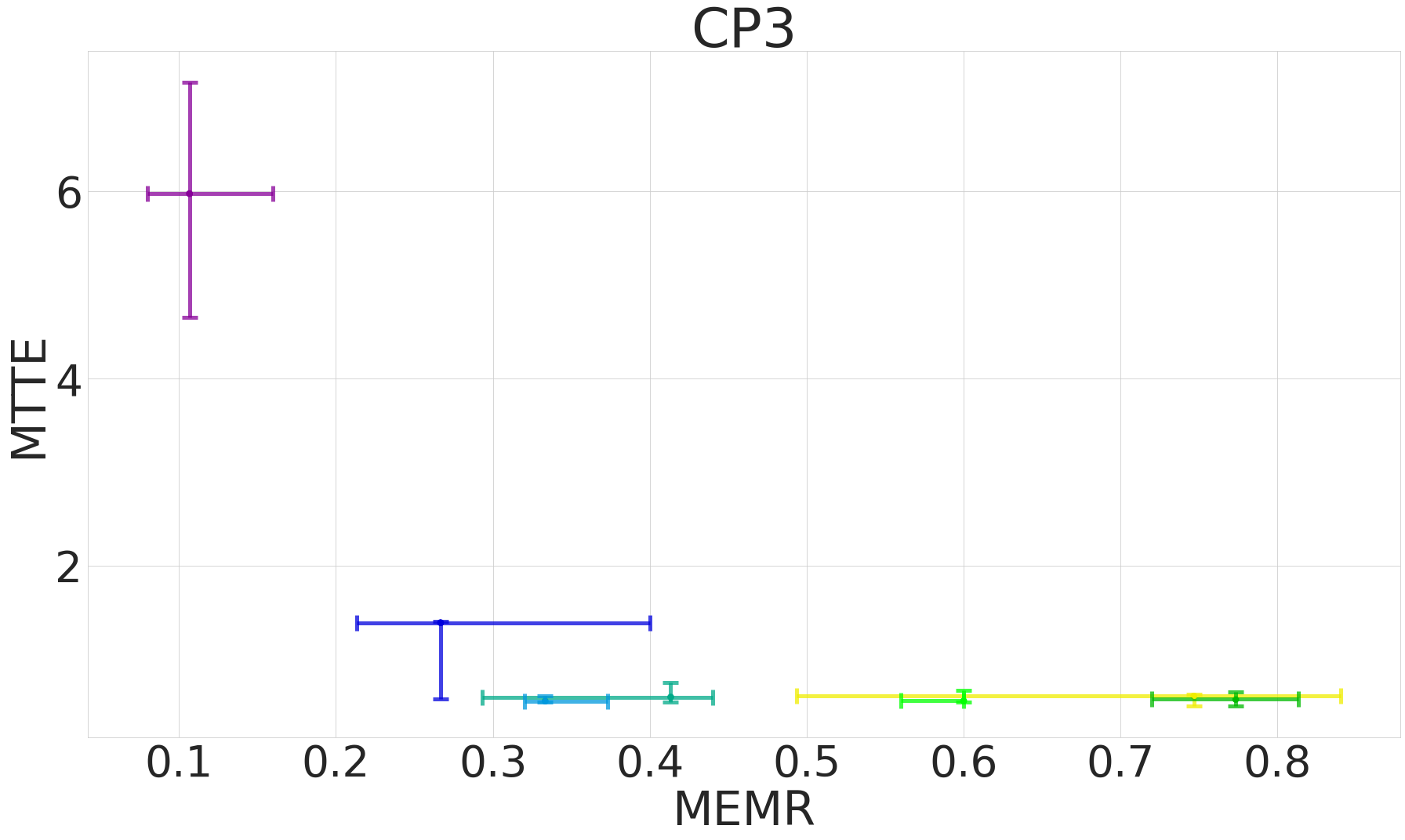}
        \captionsetup{labelformat=empty}
    \end{minipage}\hfill

    \begin{minipage}[l]{0.33\linewidth}
        \centering
        \includegraphics[trim={0 13cm 0 2.5cm}, clip, width=1\linewidth]{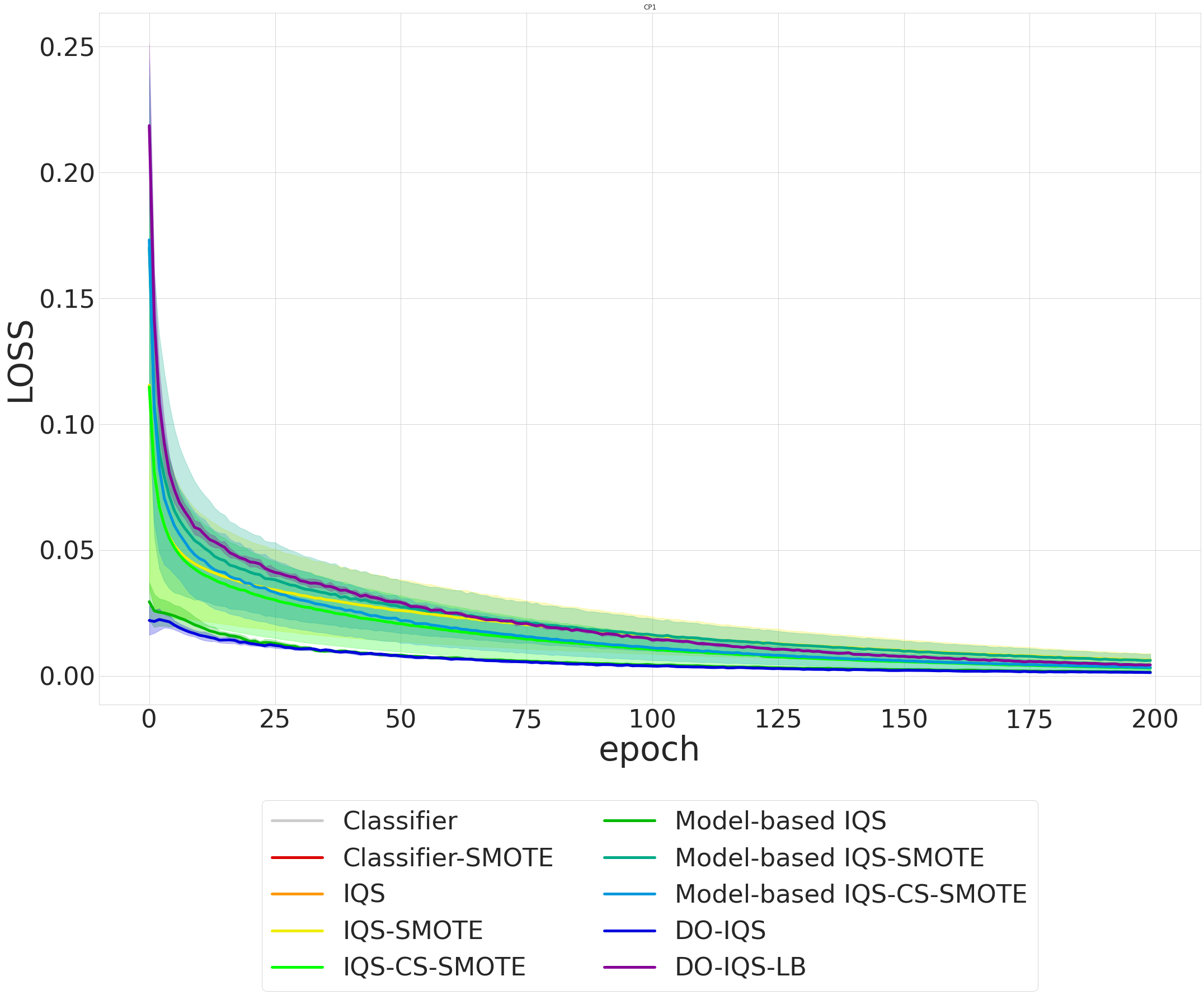}
        \captionsetup{labelformat=empty}
    \end{minipage}\hfill
    \begin{minipage}[l]{0.33\linewidth}
        \centering
        \includegraphics[trim={0 13cm 0 2.5cm}, clip, width=1\linewidth]{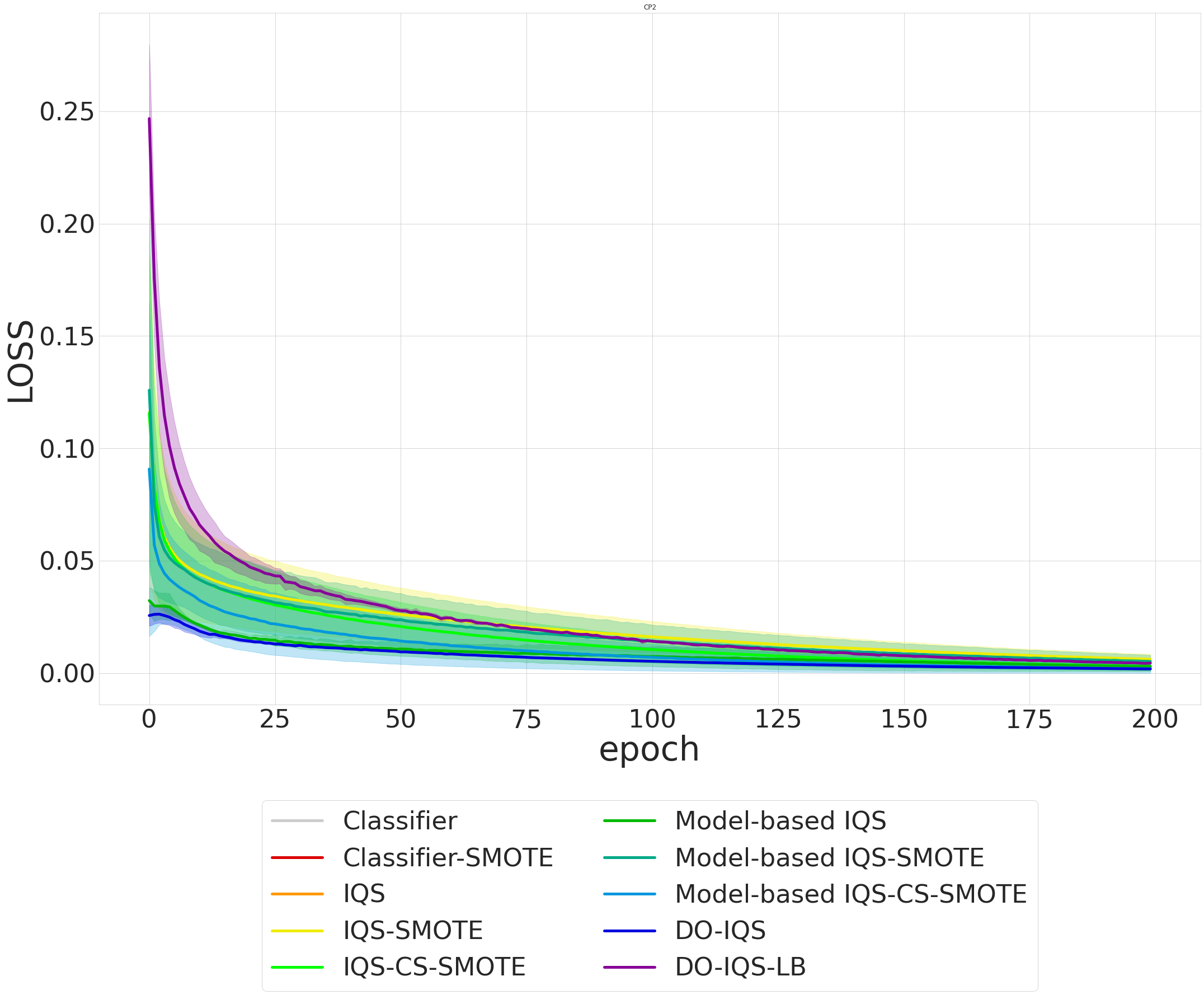}
        \captionsetup{labelformat=empty}
    \end{minipage}\hfill
    \begin{minipage}[l]{0.33\linewidth}
        \centering
        \includegraphics[trim={0 13cm 0 2.5cm}, clip, width=1\linewidth]{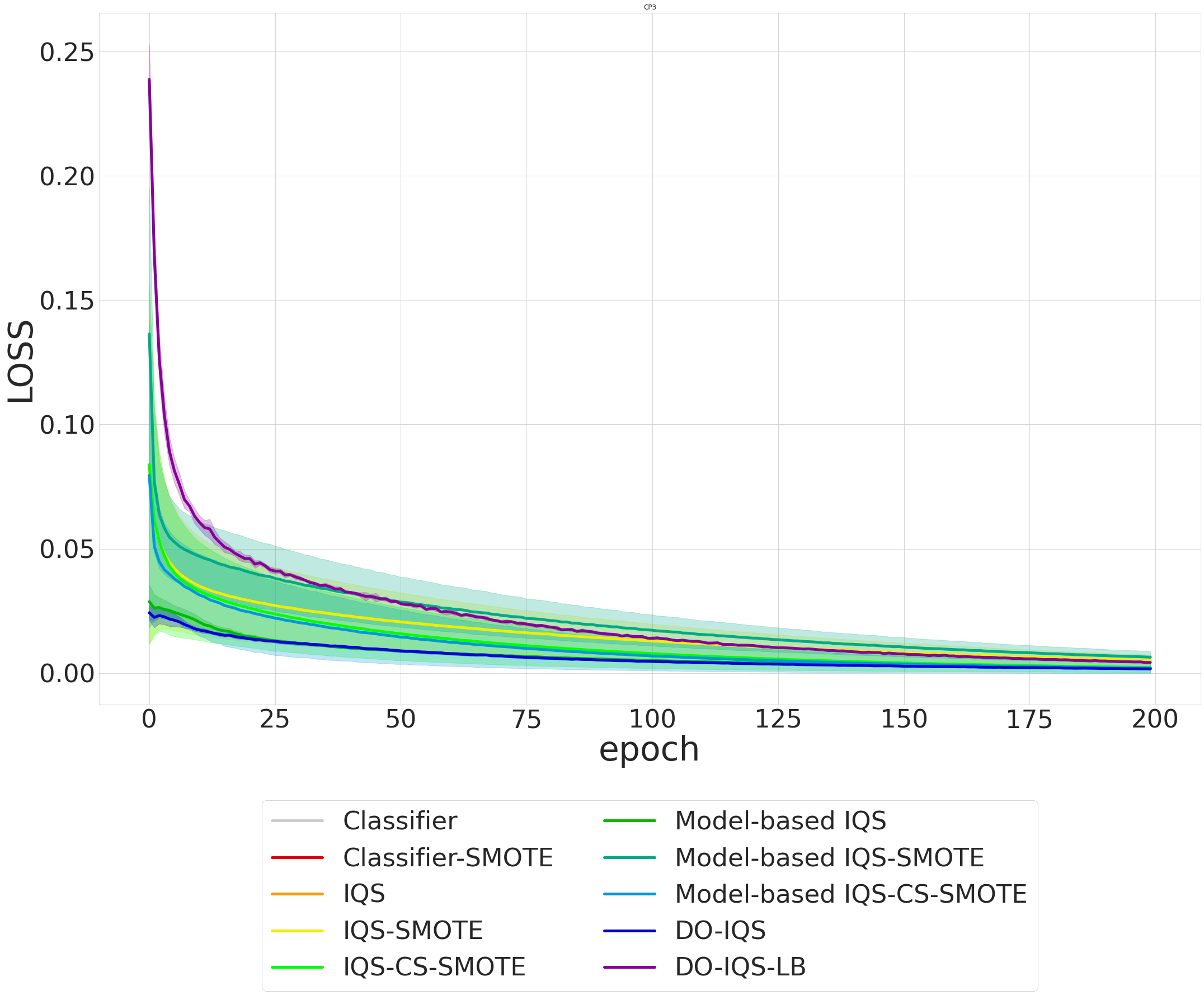}
        \captionsetup{labelformat=empty}
    \end{minipage}\hfill
    
    % \captionstyle{centerlast}
    \begin{minipage}[l]{0.33\linewidth}
        \centering
        \includegraphics[trim={0 13cm 0 0.6cm}, clip, width=1\linewidth]{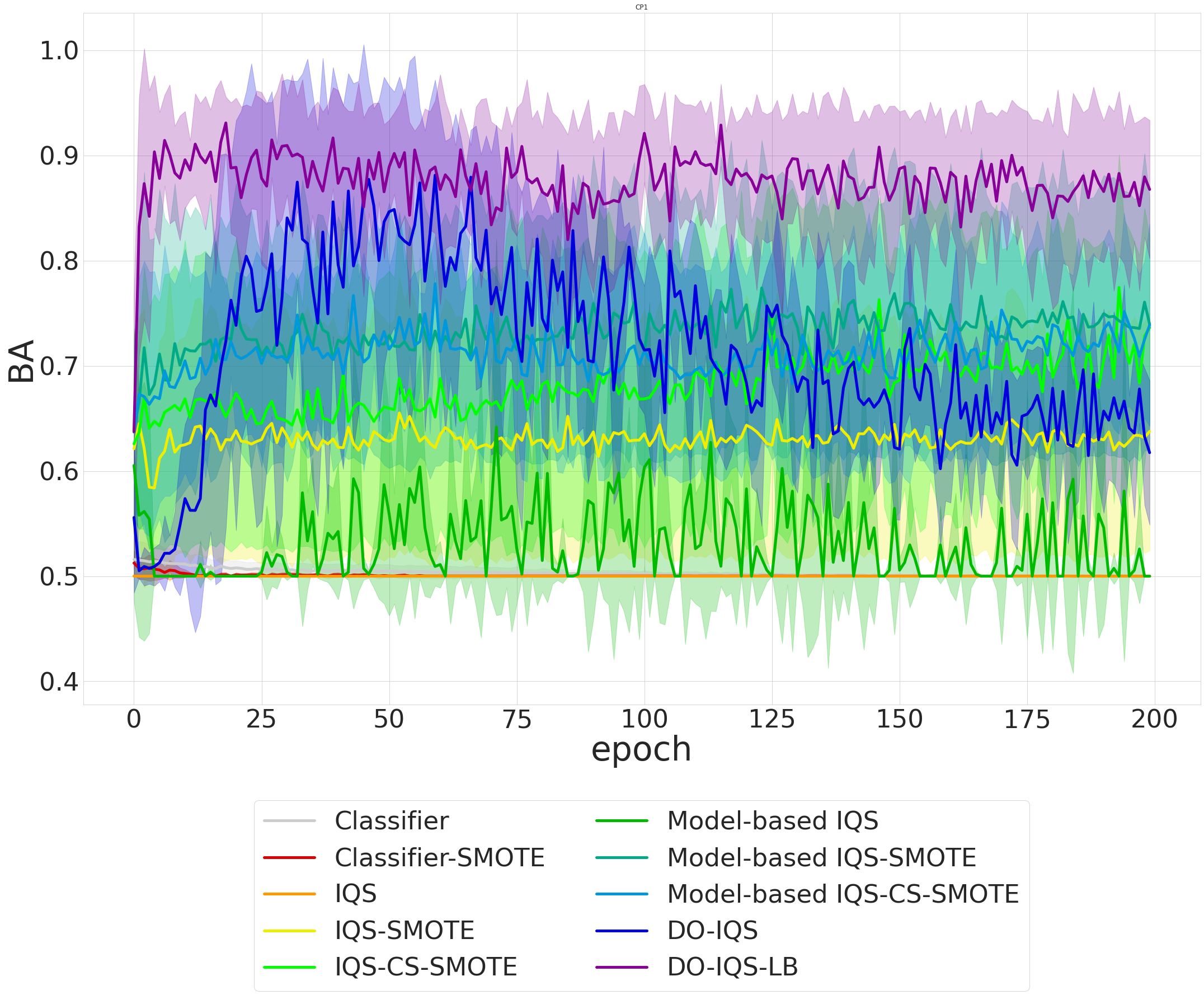}
        \captionsetup{labelformat=empty}
        \caption{CP1}
    \end{minipage}\hfill
    \begin{minipage}[l]{0.33\linewidth}
        \centering
        \includegraphics[trim={0 13cm 0 0.6cm}, clip, width=1\linewidth]{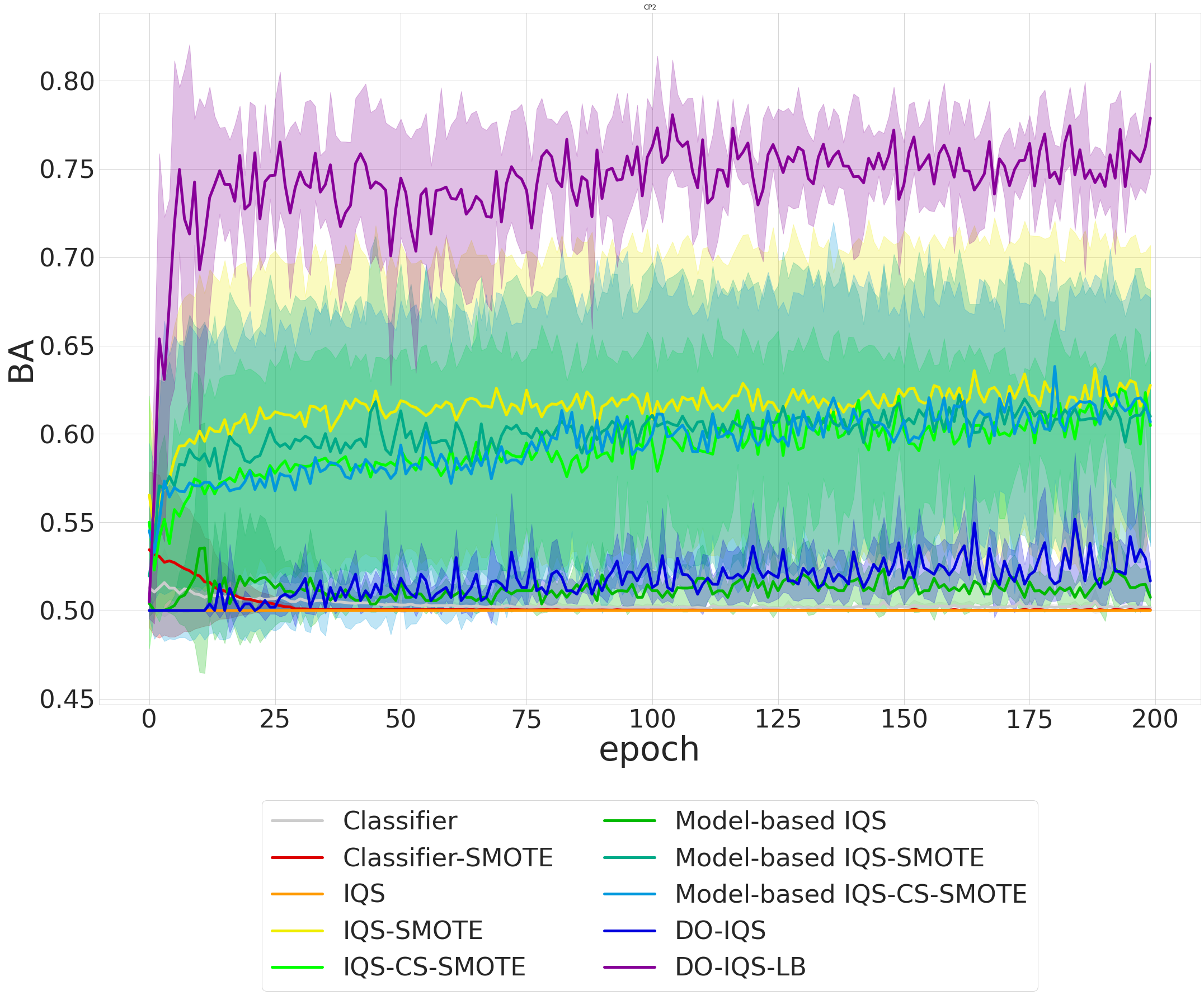}
        \captionsetup{labelformat=empty}
        \caption{CP2}
    \end{minipage}\hfill
    \begin{minipage}[l]{0.33\linewidth}
        \centering
        \includegraphics[trim={0 13cm 0 0.6cm}, clip, width=1\linewidth]{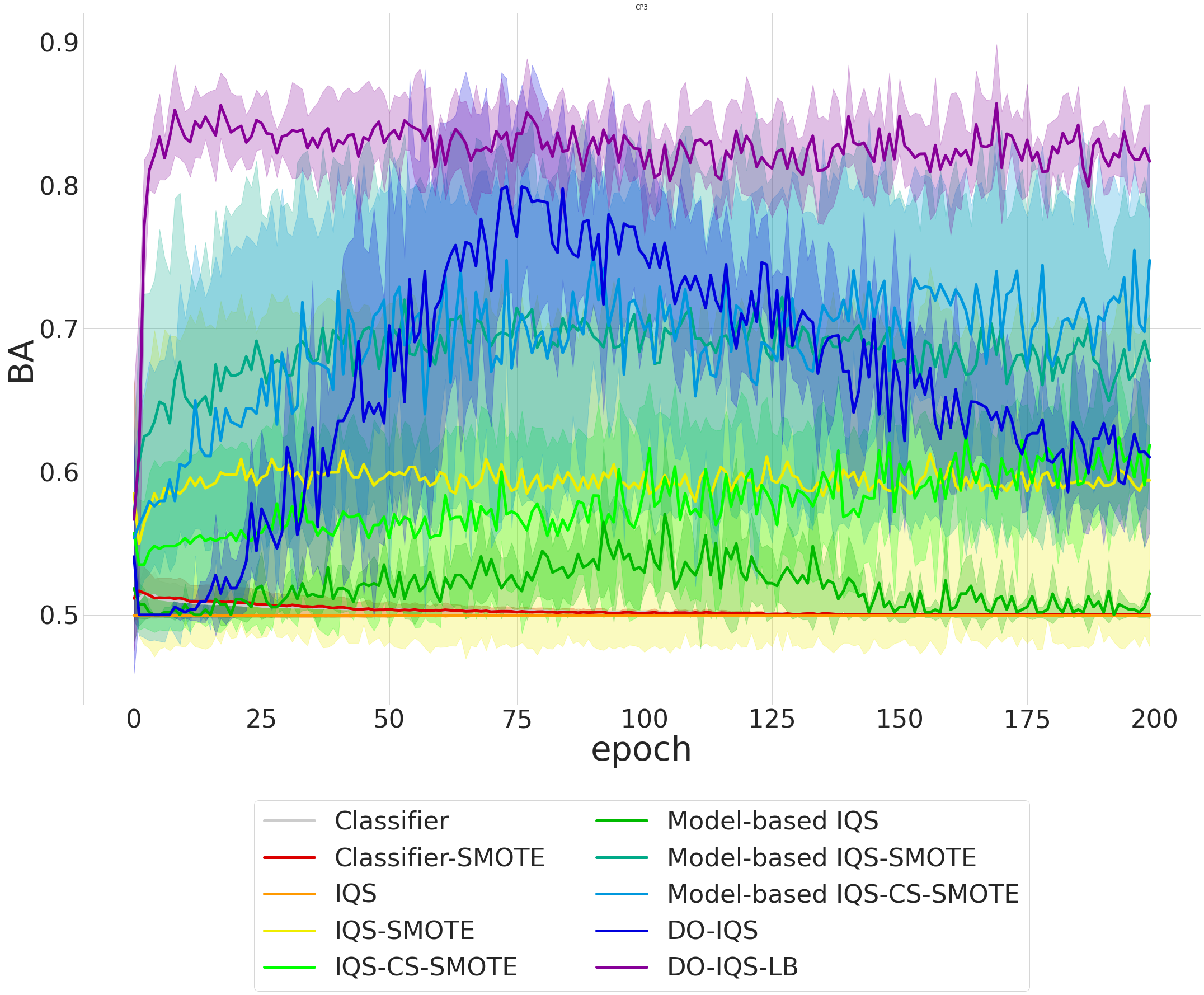}
        \captionsetup{labelformat=empty}
        \caption{CP3}
    \end{minipage}\hfill
    \begin{minipage}[l]{1\linewidth}
        \centering
        \includegraphics[trim={0 0 0 0}, clip, width=1\linewidth]{images/download.png}
        \captionsetup{labelformat=empty}
        \caption{}
    \end{minipage}
    \setcounter{figure}{10}
    \caption{\textbf{Top to bottom}: Median m-EMR to m-TTE trade-off (median values with 25-75 IQR error bars); average IQ-loss (solid lines) with a standard deviation (shaded area) over 200 training epoches; average balanced accuracy (solid lines) with a standard deviation (shaded area) over 200 training epoches.}
    \label{fig:results_cp}
\end{figure*}

\begin{table}[ht]
\centering
\caption{Balanced accuracy (mean \(\pm\) two standard deviations) for the change-point examples. Bold numbers indicate top-3 values}
\vskip 5pt
\begin{tabular}{lcccc}
\hline
                          & CP1         & CP2         & CP3         \\
\hline
 Classifier               & 0.5067±0.02 & 0.5044±0.04 & 0.5022±0.04 \\
 Classifier-SMOTE         & 0.5034±0.02 & 0.5034±0.10 & 0.5009±0.04 \\
 IQS                      & 0.5000±0.00 & 0.5000±0.00 & 0.5000±0.00  \\
 IQS-SMOTE                & 0.7533±0.21 & \textbf{0.6575±0.18} & 0.6240±0.19  \\
 IQS-CS-SMOTE             & 0.8128±0.18 & 0.6403±0.10 & 0.6965±0.04 \\
 Model-based IQS          & \textbf{0.9022±0.33} & 0.5496±0.12 & 0.6109±0.11 \\
 Model-based IQS-SMOTE    & 0.8669±0.27 & \textbf{0.6766±0.17} & 0.7884±0.26 \\
 Model-based IQS-CS-SMOTE & 0.8458±0.12 & 0.6347±0.11 & \textbf{0.8275±0.09} \\
 DO-IQS                   & \textbf{0.9538±0.06} & 0.5660±0.09 & \textbf{0.8479±0.10}  \\
 DO-IQS-LB                & \textbf{0.9567±0.02} & \textbf{0.7748±0.03} & \textbf{0.8596±0.05} \\
\hline
\end{tabular}
\label{tbl: cp_results}
\end{table}

\begin{figure*}[ht]
    \begin{minipage}[l]{0.5\linewidth}
        \centering
        \includegraphics[trim={0 0 0 2.1cm}, clip, width=1\linewidth]{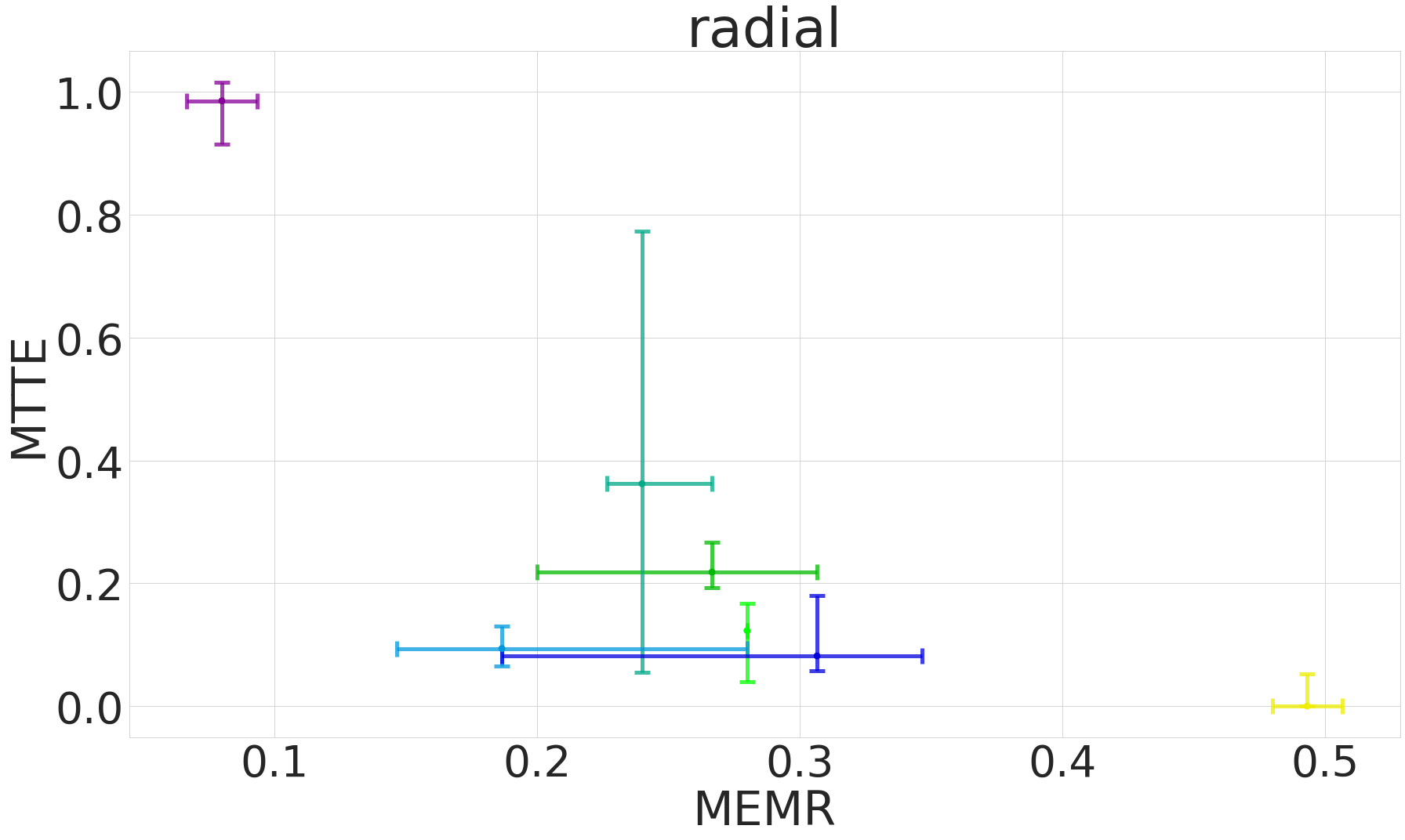}
        \captionsetup{labelformat=empty}
    \end{minipage}\hfill
    \begin{minipage}[l]{0.5\linewidth}
        \centering
        \includegraphics[trim={0 0 0 2.1cm}, clip, width=1\linewidth]{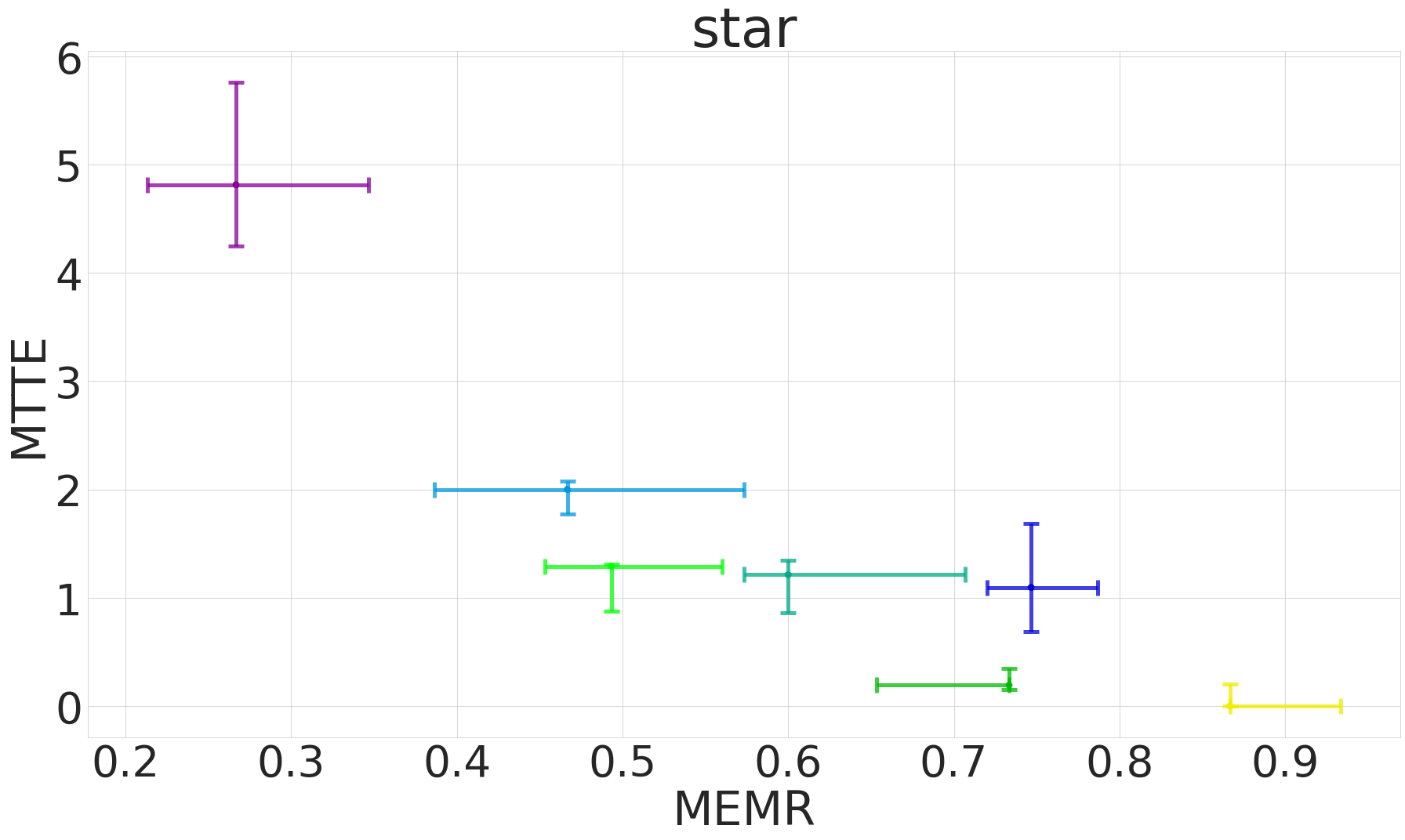}
        \captionsetup{labelformat=empty}
    \end{minipage}\hfill
    \begin{minipage}[l]{0.5\linewidth}
        \centering
        \includegraphics[trim={0 13cm 0 1.2cm}, clip, width=1\linewidth]{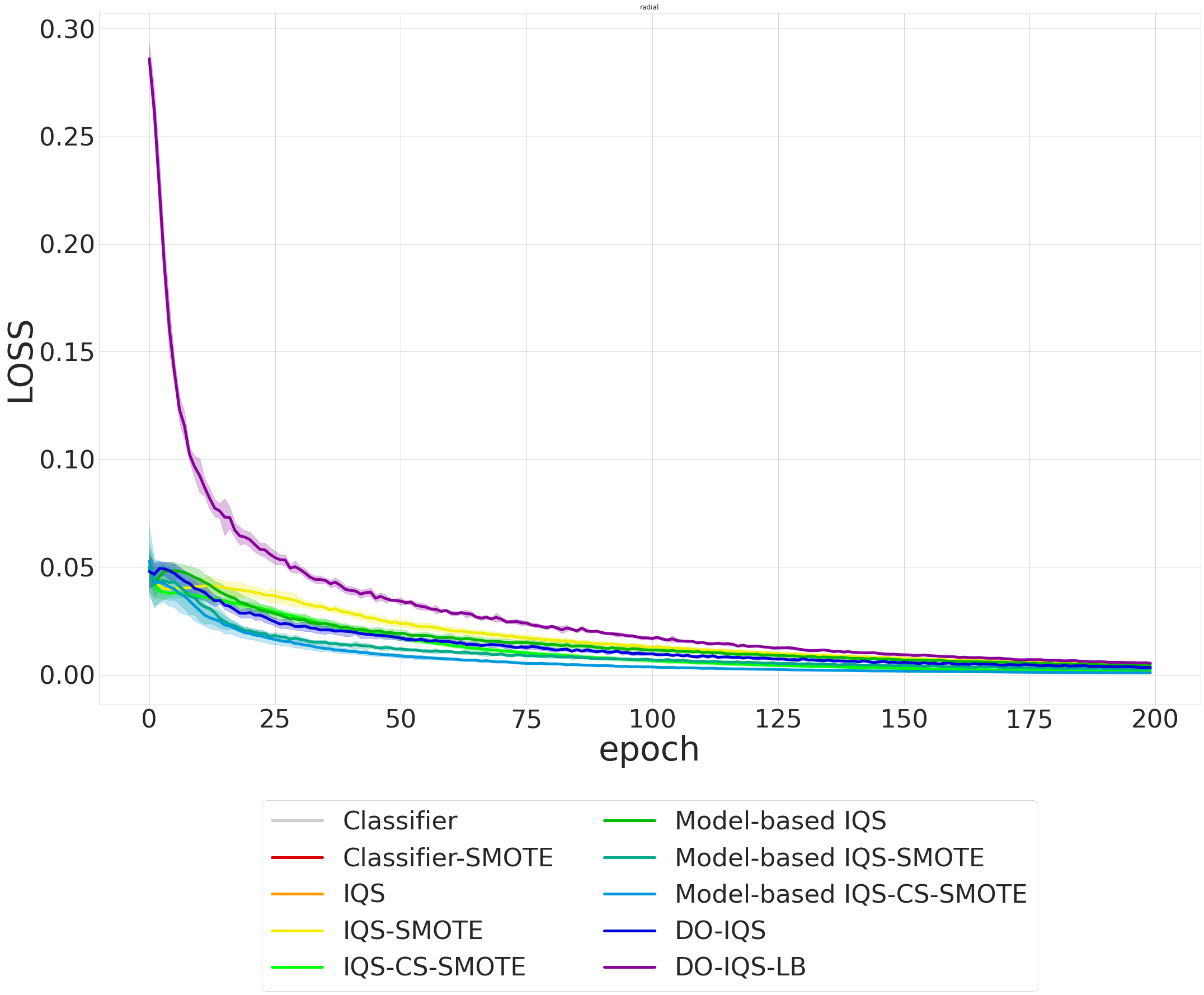}
        \captionsetup{labelformat=empty}
    \end{minipage}\hfill
    \begin{minipage}[l]{0.5\linewidth}
        \centering
        \includegraphics[trim={0 13cm 0 1.2cm}, clip, width=1\linewidth]{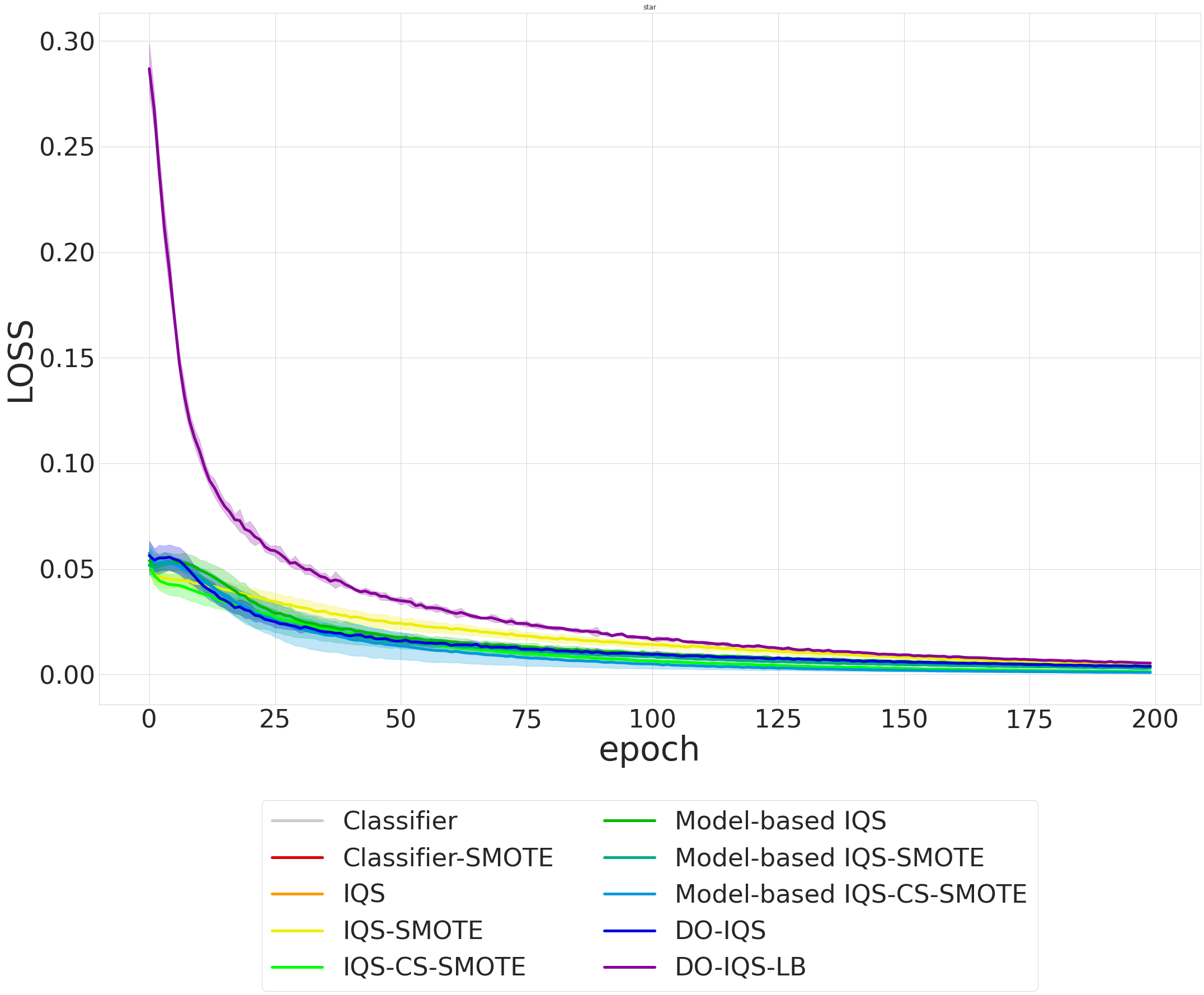}
        \captionsetup{labelformat=empty}
    \end{minipage}\hfill
    % \captionstyle{centerlast}
    \begin{minipage}[l]{0.5\linewidth}
        \centering
        \includegraphics[trim={0 13cm 0 0.6cm}, clip, width=1\linewidth]{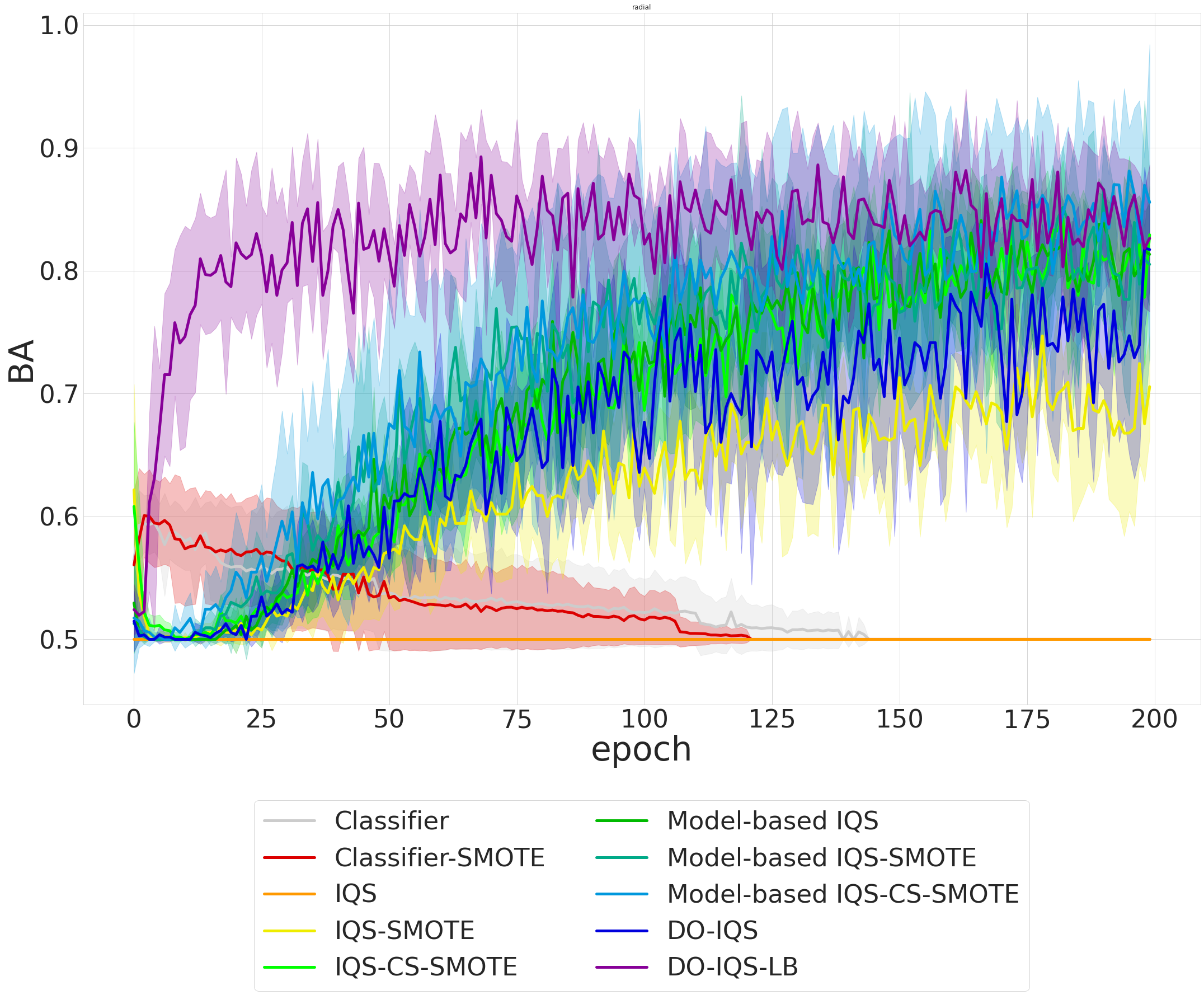}
        \captionsetup{labelformat=empty}
        \caption{Radial}
    \end{minipage}\hfill
    \begin{minipage}[l]{0.5\linewidth}
        \centering
        \includegraphics[trim={0 13cm 0 0.6cm}, clip, width=1\linewidth]{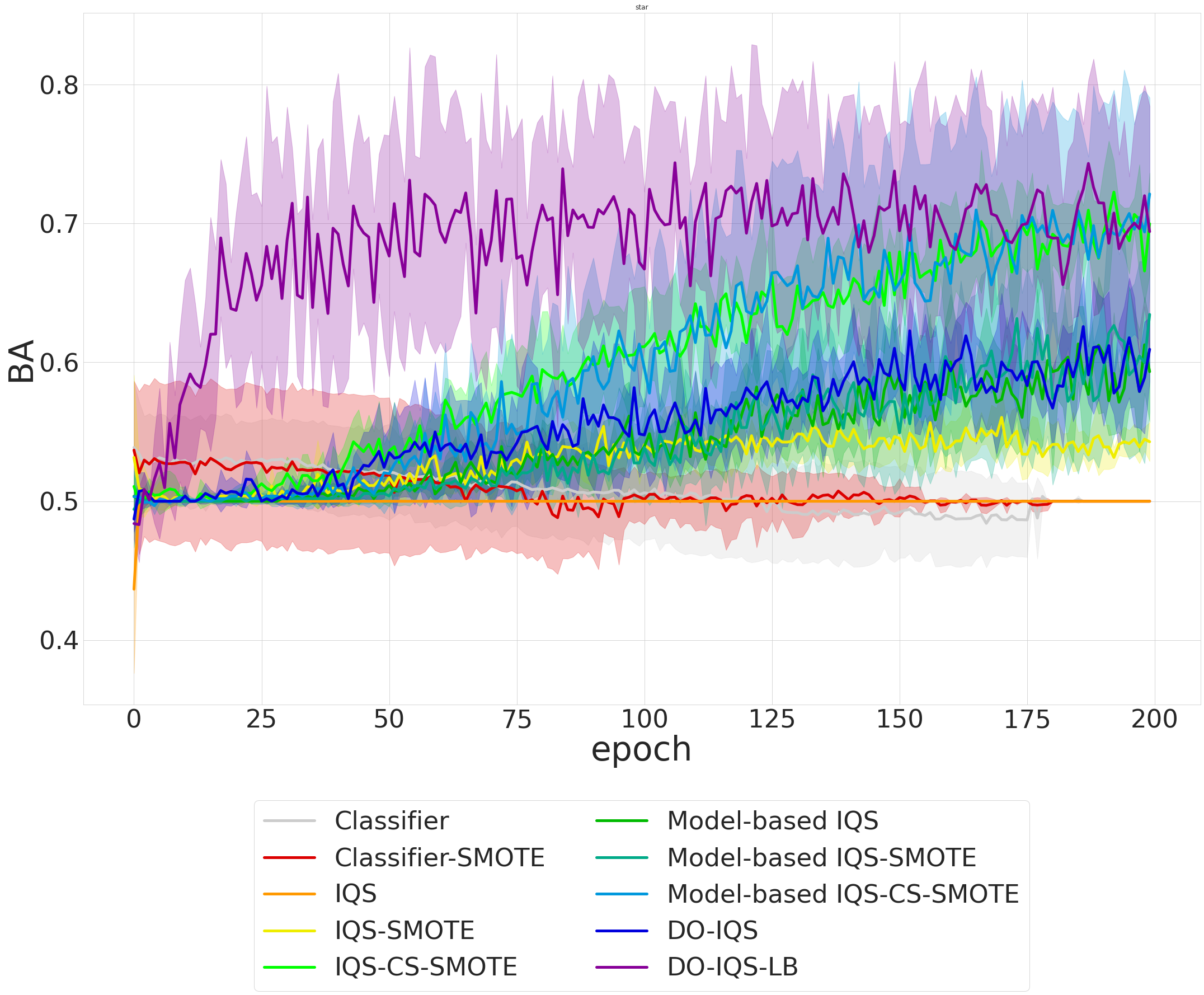}
        \captionsetup{labelformat=empty}
        \caption{Star}
    \end{minipage}\hfill
    \begin{minipage}[l]{1\linewidth}
        \centering
        \includegraphics[trim={0 0 0 0}, clip, width=1\linewidth]{images/download.png}
        \captionsetup{labelformat=empty}
        \caption{}
    \end{minipage}
    \vskip -15pt
    \setcounter{figure}{11}
    \caption{\textbf{Top to bottom}: Median m-EMR to m-TTE trade-off (median values with 25-75 IQR error bars); average IQ-loss (solid lines) with a standard deviation (shaded area) over 200 training epoches; average balanced accuracy (solid lines) with a standard deviation (shaded area) over 200 training epoches.}
    \label{fig:results_rad_star}
    \vskip 5pt
\end{figure*}

\begin{table}[ht]
\centering
\caption{Balanced accuracy (mean \(\pm\) two standard deviations) for the radial and star examples. Bold numbers indicate top-3 values}
\vskip 5pt
 \begin{tabular}{lcc}

\hline
                          & radial      & star        \\
\hline
 Classifier               & 0.5766±0.06 & 0.5088±0.05 \\
 Classifier-SMOTE         & 0.5751±0.07 & 0.5000±0.07    \\
 IQS                      & 0.5000±0.00 & 0.5000±0.00     \\
 IQS-SMOTE                & 0.7523±0.06 & 0.5655±0.06 \\
 IQS-CS-SMOTE             & 0.8543±0.02 & \textbf{0.7252±0.08} \\
 Model-based IQS          & 0.8589±0.05 & 0.6287±0.09 \\
 Model-based IQS-SMOTE    & \textbf{0.8734±0.04} & 0.6886±0.09 \\
 Model-based IQS-CS-SMOTE & \textbf{0.9047±0.08} & \textbf{0.7260±0.09}  \\
 DO-IQS                   & 0.8448±0.07 & 0.6194±0.06 \\
 DO-IQS-LB                & \textbf{0.9202±0.04} & \textbf{0.7170±0.07}  \\
\hline
\end{tabular}
\label{tbl: rad_star_results}
\end{table}

\subsection{Q-function and stopping boundaries approximation results}
We further compare the recovered Q-functions and stopping boundaries by presenting the heat maps of \(Q(s,a=0)\), \(Q(s,a=1)\) and \(Q(s,a=0)-Q(s,a=1)\) evaluated at time\(t=25\) as well as the resulting stopping boundaries per time-step. The results for the Radial and Star example are presented in Figures \ref{fig:radial_heatmaps} and \ref{fig:star_heatmaps}, respectively. The model was trained for 200 epoches with a 490 training and 210 validation paths, and the Q-values were normalised with the centred normalisation (centre at 0 to preserve the stopping rule) for ease of the heatmap comparison to aid visualisation.

\begin{figure*}[ht]
    \begin{minipage}[l]{0.2\linewidth}
        \centering
        \includegraphics[trim={0 0 10cm 0}, clip, width=1\linewidth]{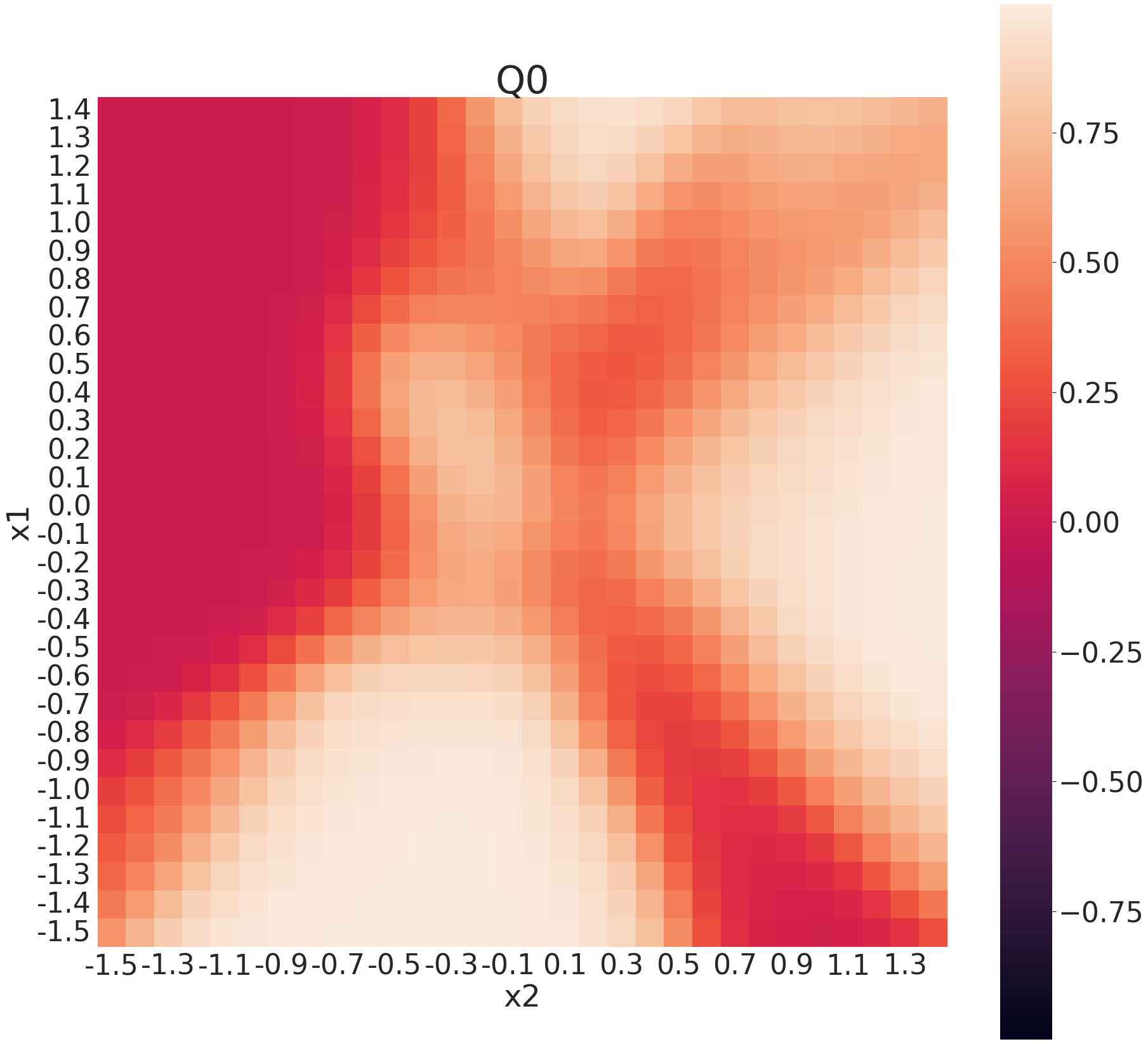}
        \captionsetup{labelformat=empty}
    \end{minipage}\hfill
    \begin{minipage}[l]{0.2\linewidth}
        \centering
        \includegraphics[trim={0 0 10cm 0}, clip, width=1\linewidth]{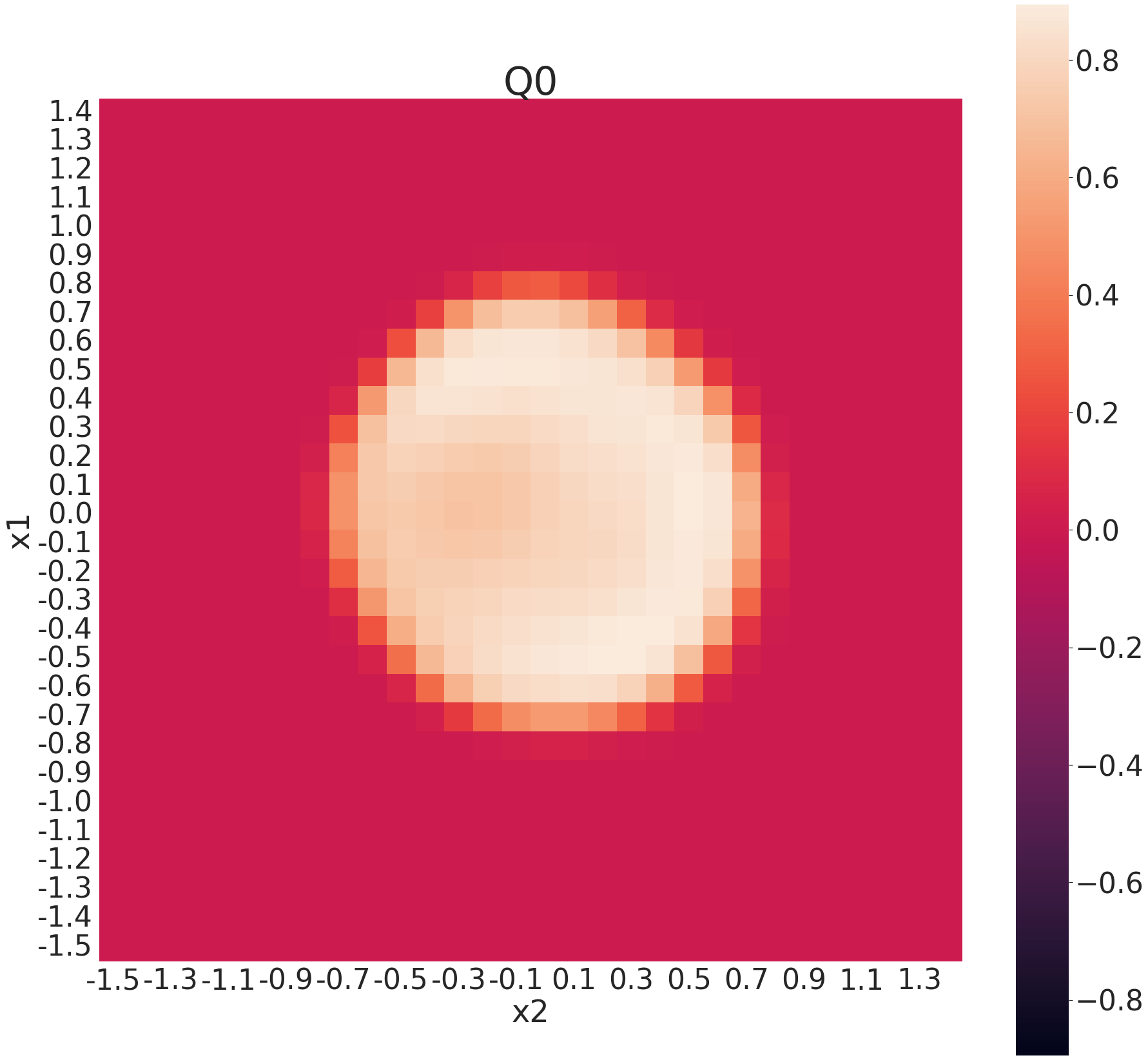}
        \captionsetup{labelformat=empty}
    \end{minipage}\hfill
    \begin{minipage}[l]{0.2\linewidth}
        \centering
        \includegraphics[trim={0 0 10cm 0}, clip, width=1\linewidth]{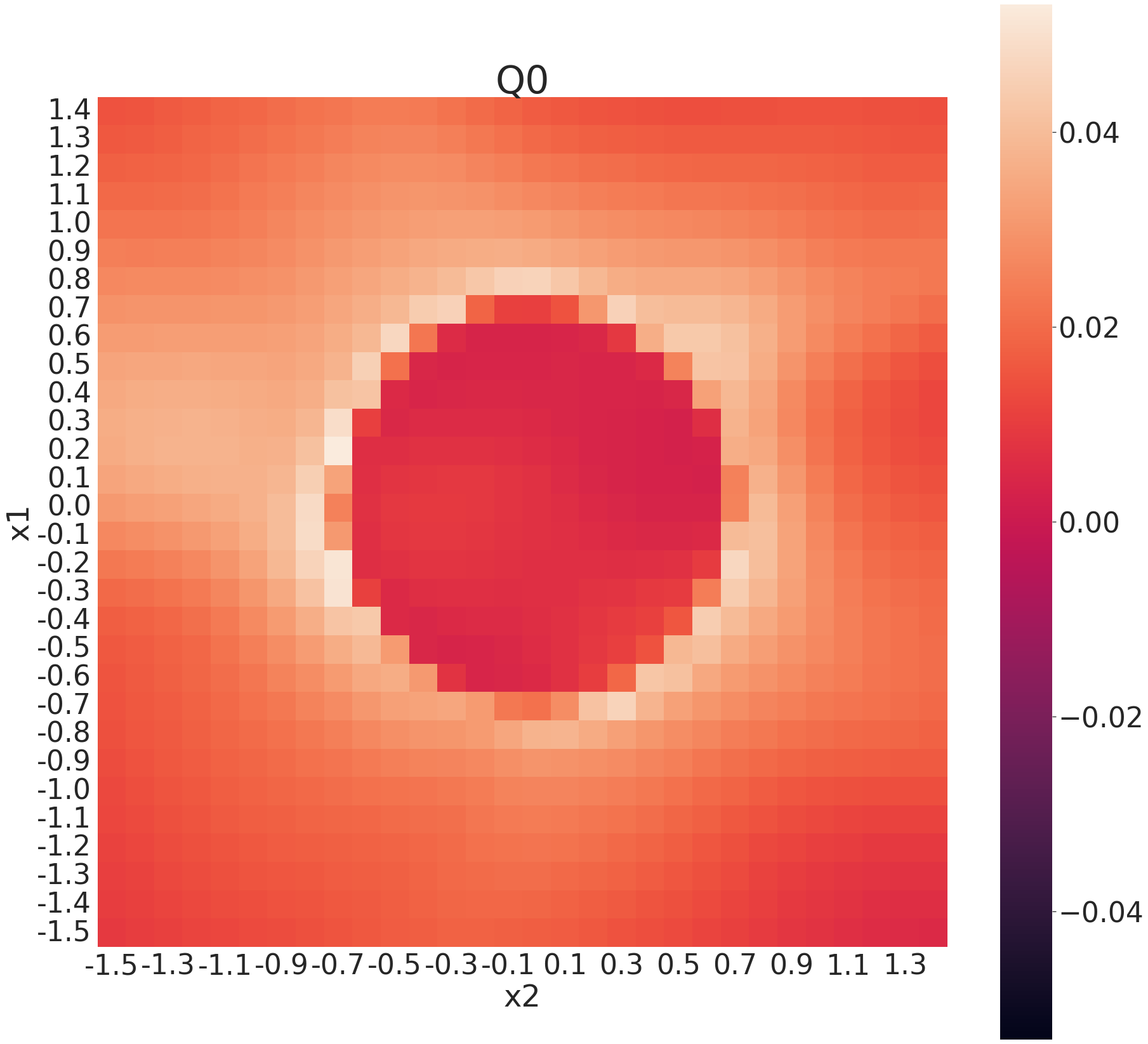}
        \captionsetup{labelformat=empty}
    \end{minipage}\hfill
    \begin{minipage}[l]{0.2\linewidth}
        \centering
        \includegraphics[trim={0 0 10cm 0}, clip, width=1\linewidth]{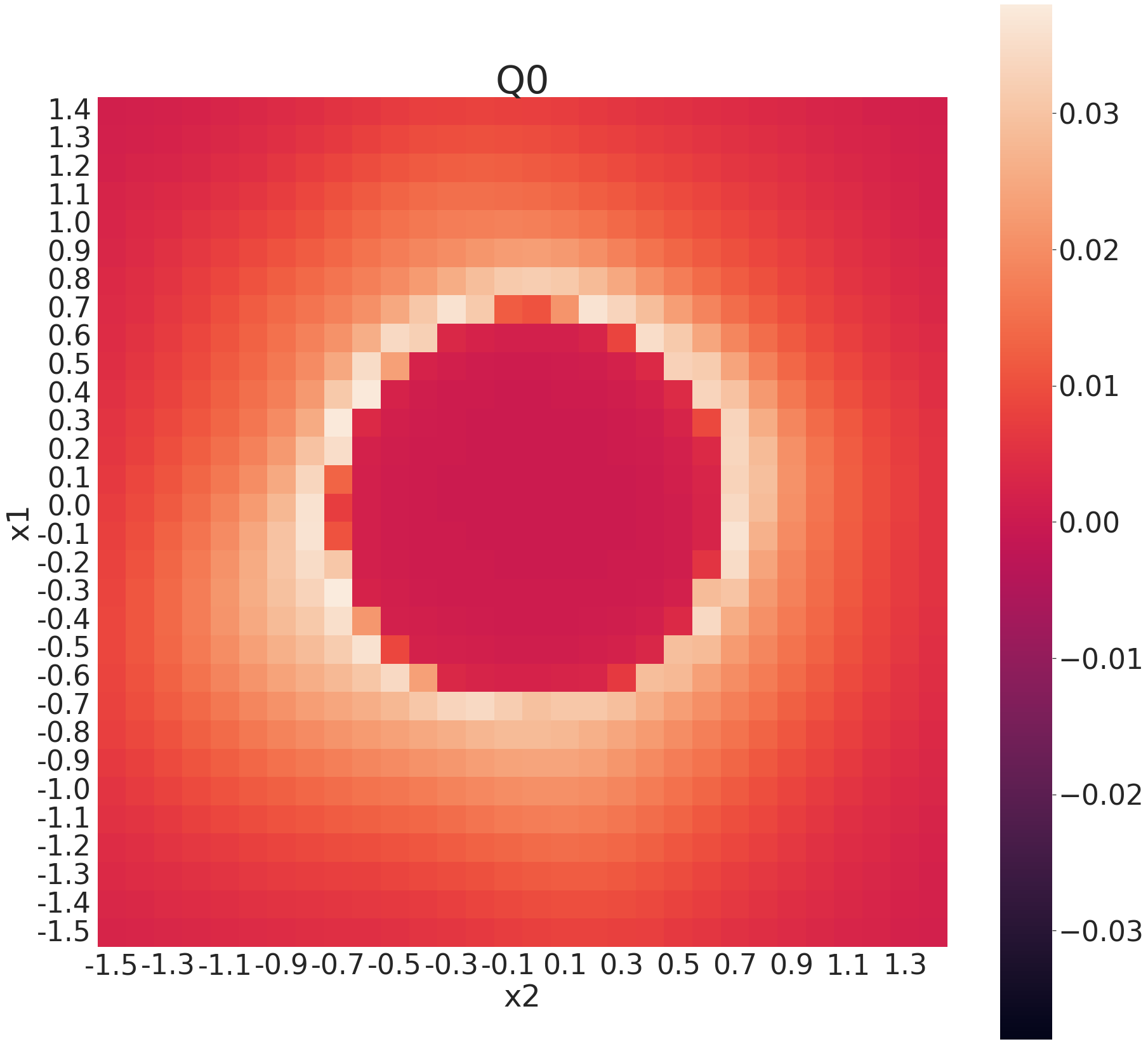}
        \captionsetup{labelformat=empty}
    \end{minipage}\hfill
    \begin{minipage}[l]{0.2\linewidth}
        \centering
        \includegraphics[trim={0 0 10cm 0}, clip, width=1\linewidth]{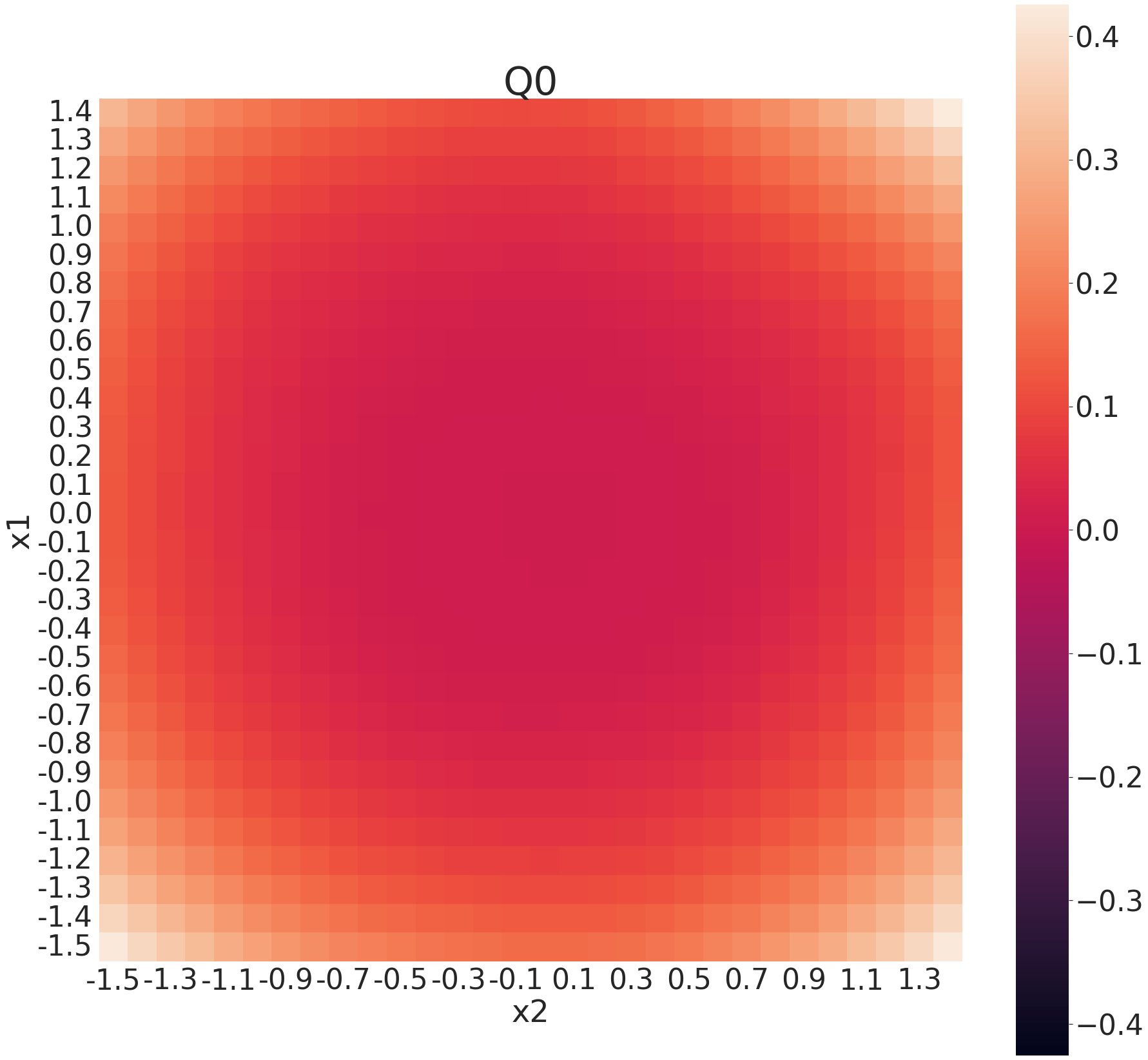}
        \captionsetup{labelformat=empty}
    \end{minipage}\hfill

    \begin{minipage}[l]{0.2\linewidth}
        \centering
        \includegraphics[trim={0 0 10cm 0}, clip, width=1\linewidth]{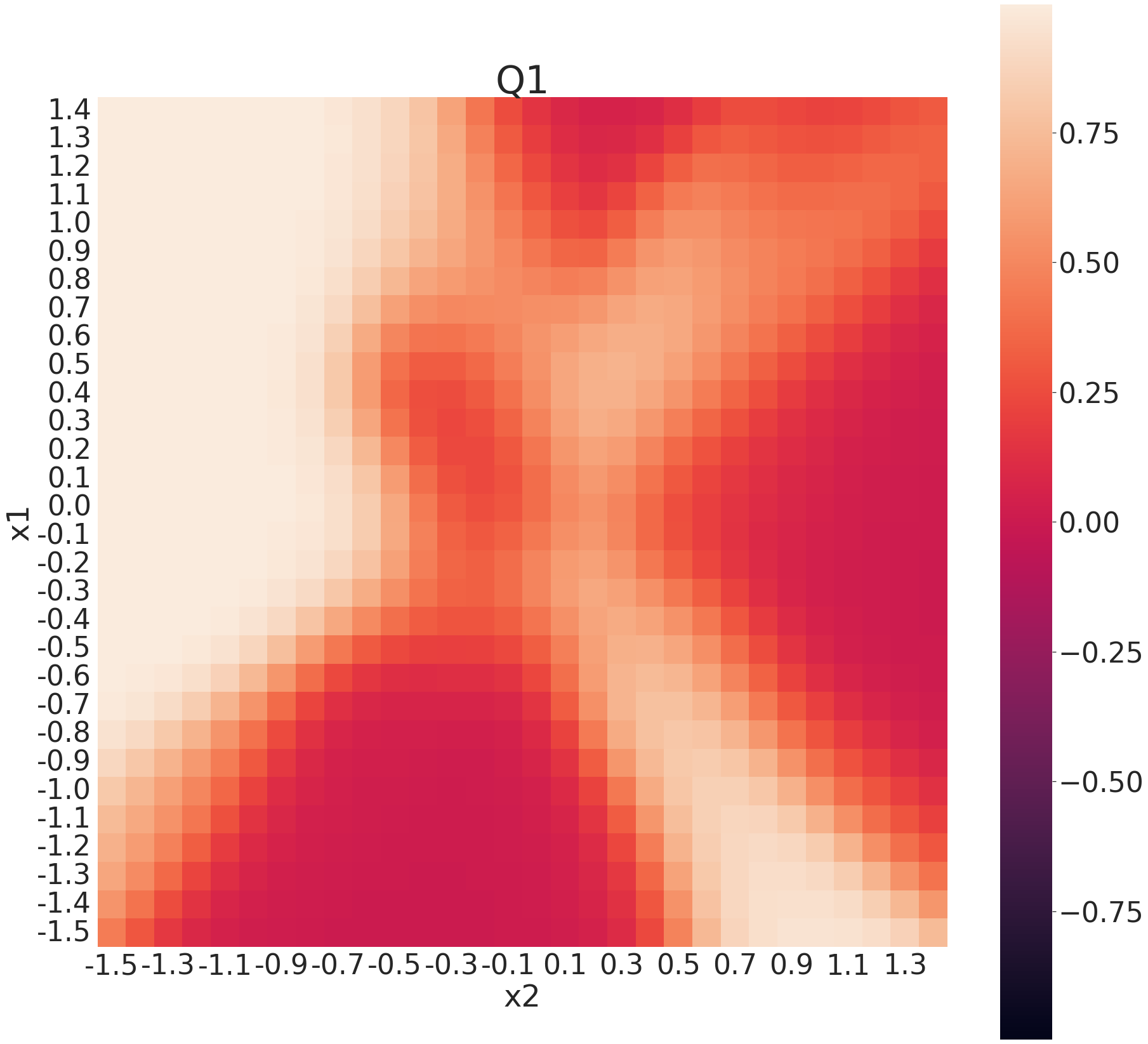}
        \captionsetup{labelformat=empty}
    \end{minipage}\hfill
    \begin{minipage}[l]{0.2\linewidth}
        \centering
        \includegraphics[trim={0 0 10cm 0}, clip, width=1\linewidth]{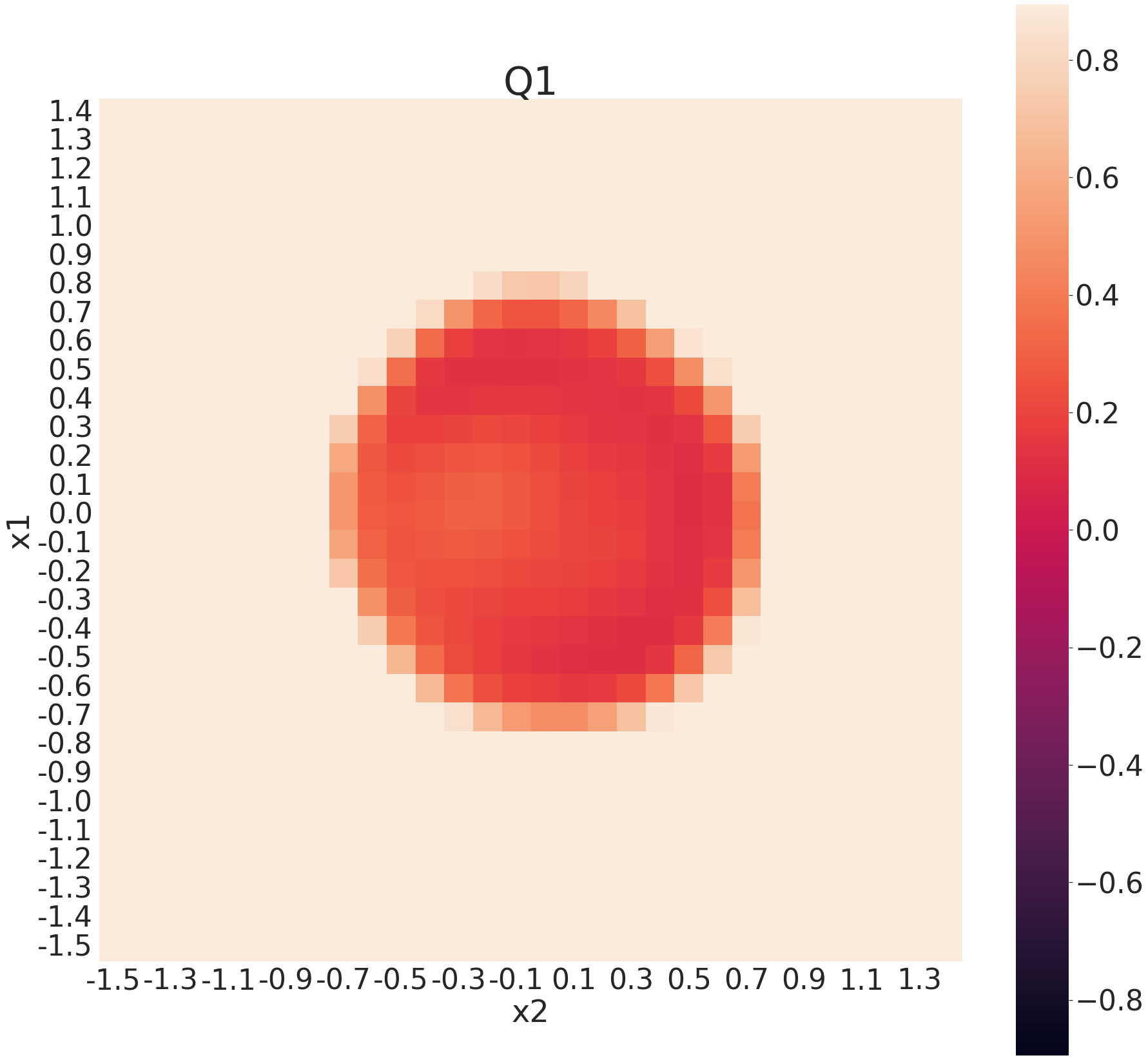}
        \captionsetup{labelformat=empty}
    \end{minipage}\hfill
    \begin{minipage}[l]{0.2\linewidth}
        \centering
        \includegraphics[trim={0 0 10cm 0}, clip, width=1\linewidth]{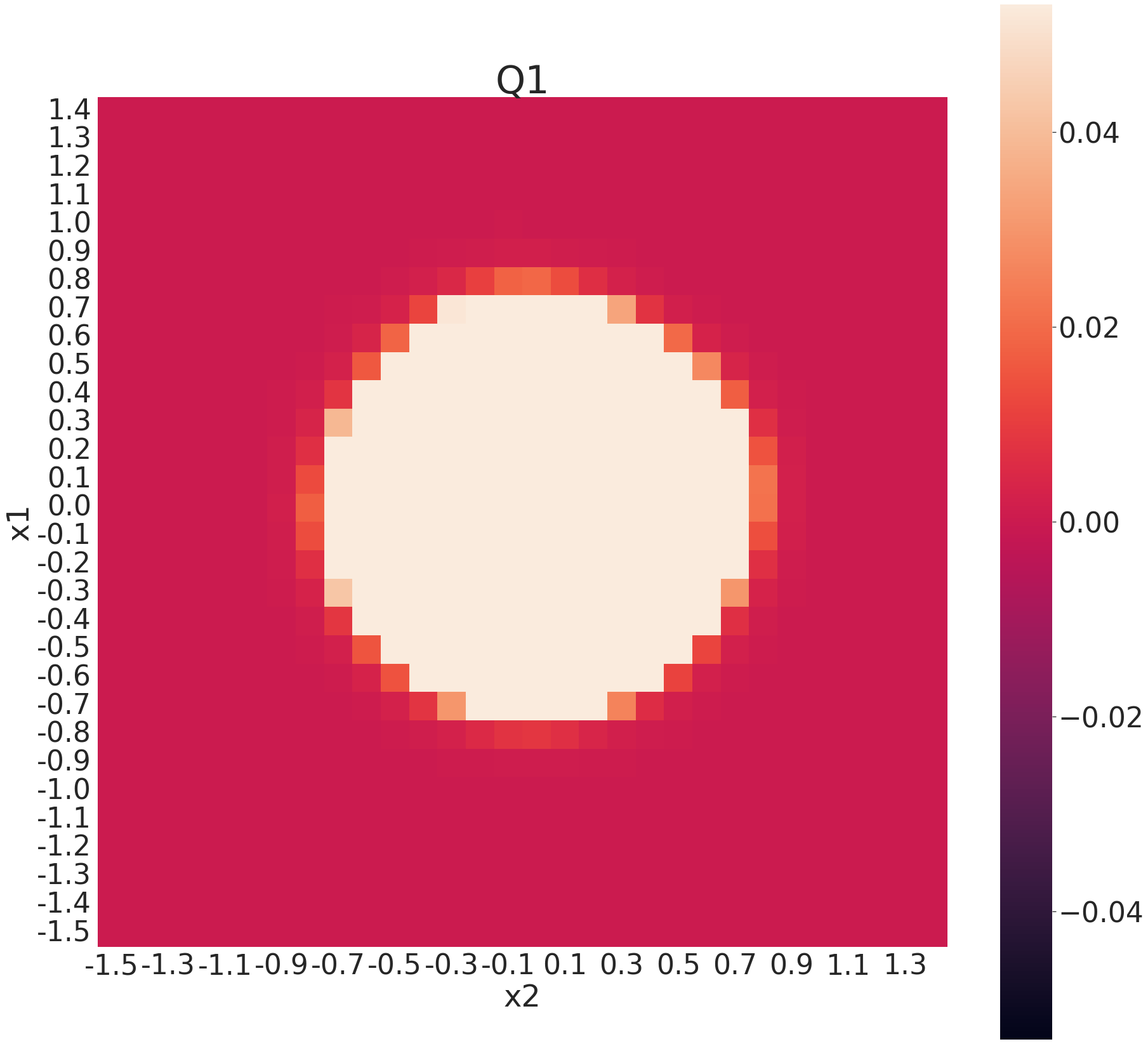}
        \captionsetup{labelformat=empty}
    \end{minipage}\hfill
    \begin{minipage}[l]{0.2\linewidth}
        \centering
        \includegraphics[trim={0 0 10cm 0}, clip, width=1\linewidth]{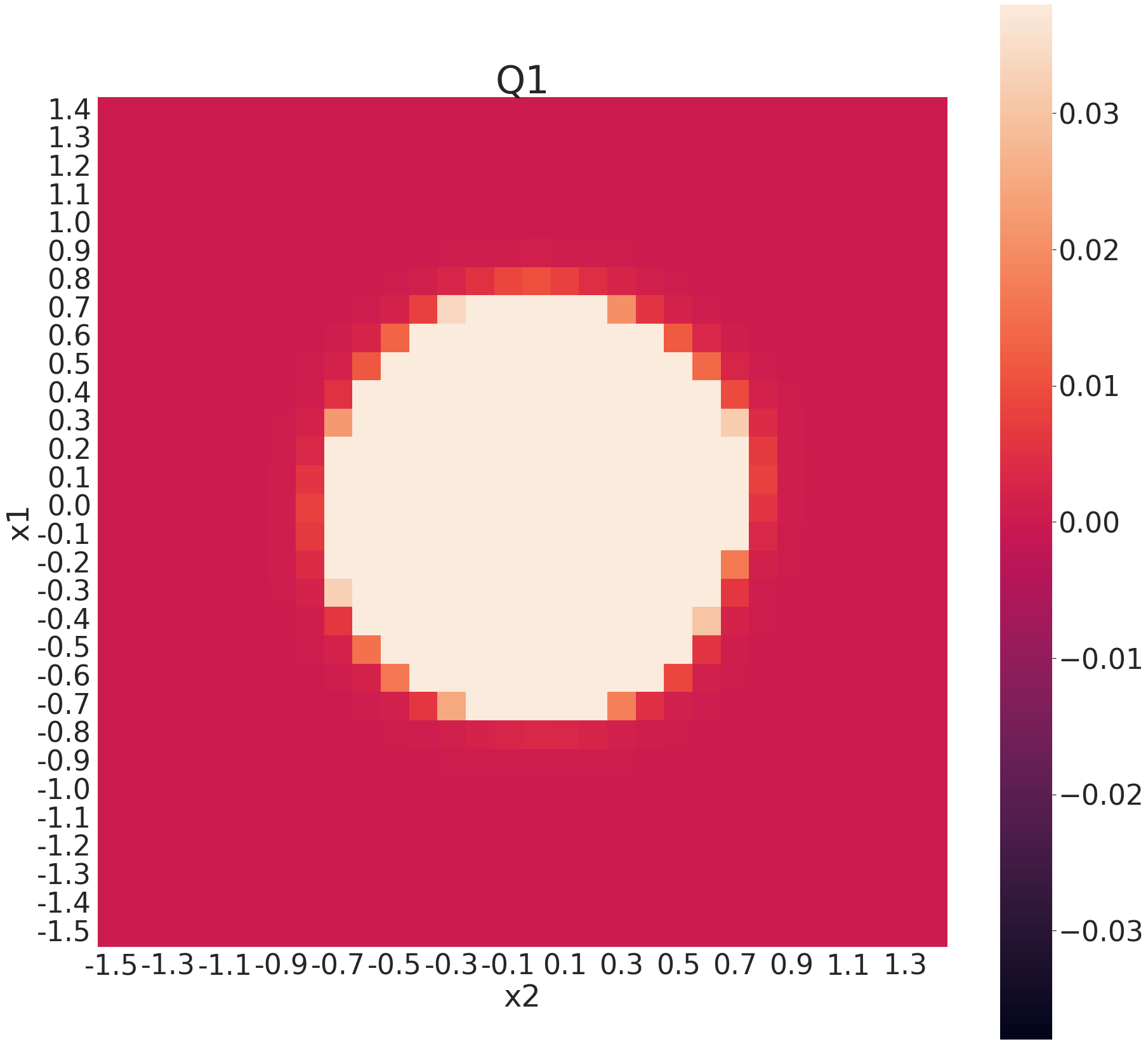}
        \captionsetup{labelformat=empty}
    \end{minipage}\hfill
    \begin{minipage}[l]{0.2\linewidth}
        \centering
        \includegraphics[trim={0 0 10cm 0}, clip, width=1\linewidth]{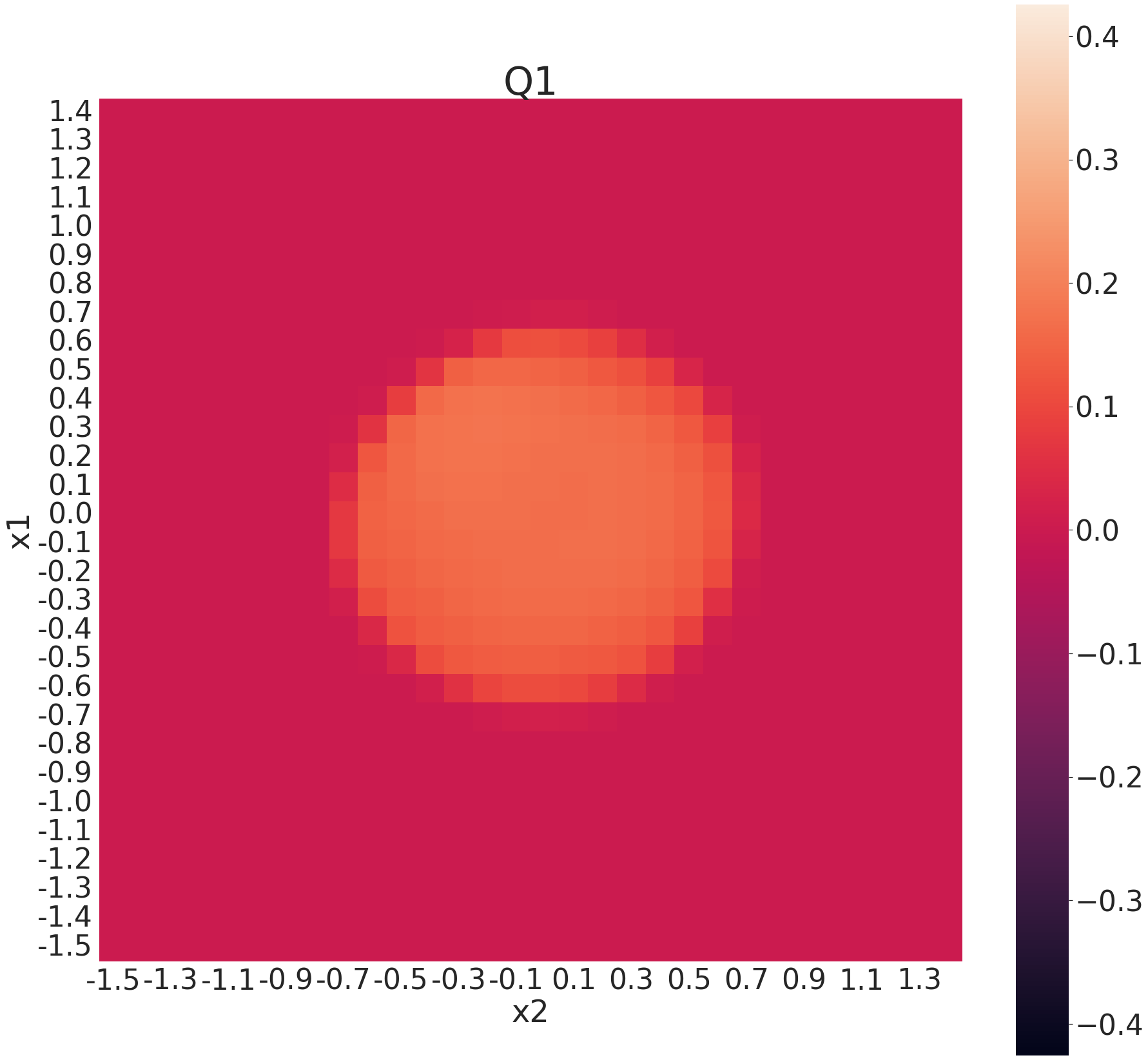}
        \captionsetup{labelformat=empty}
    \end{minipage}\hfill

    \begin{minipage}[l]{0.2\linewidth}
        \centering
        \includegraphics[trim={0 0 10cm 0}, clip, width=1\linewidth]{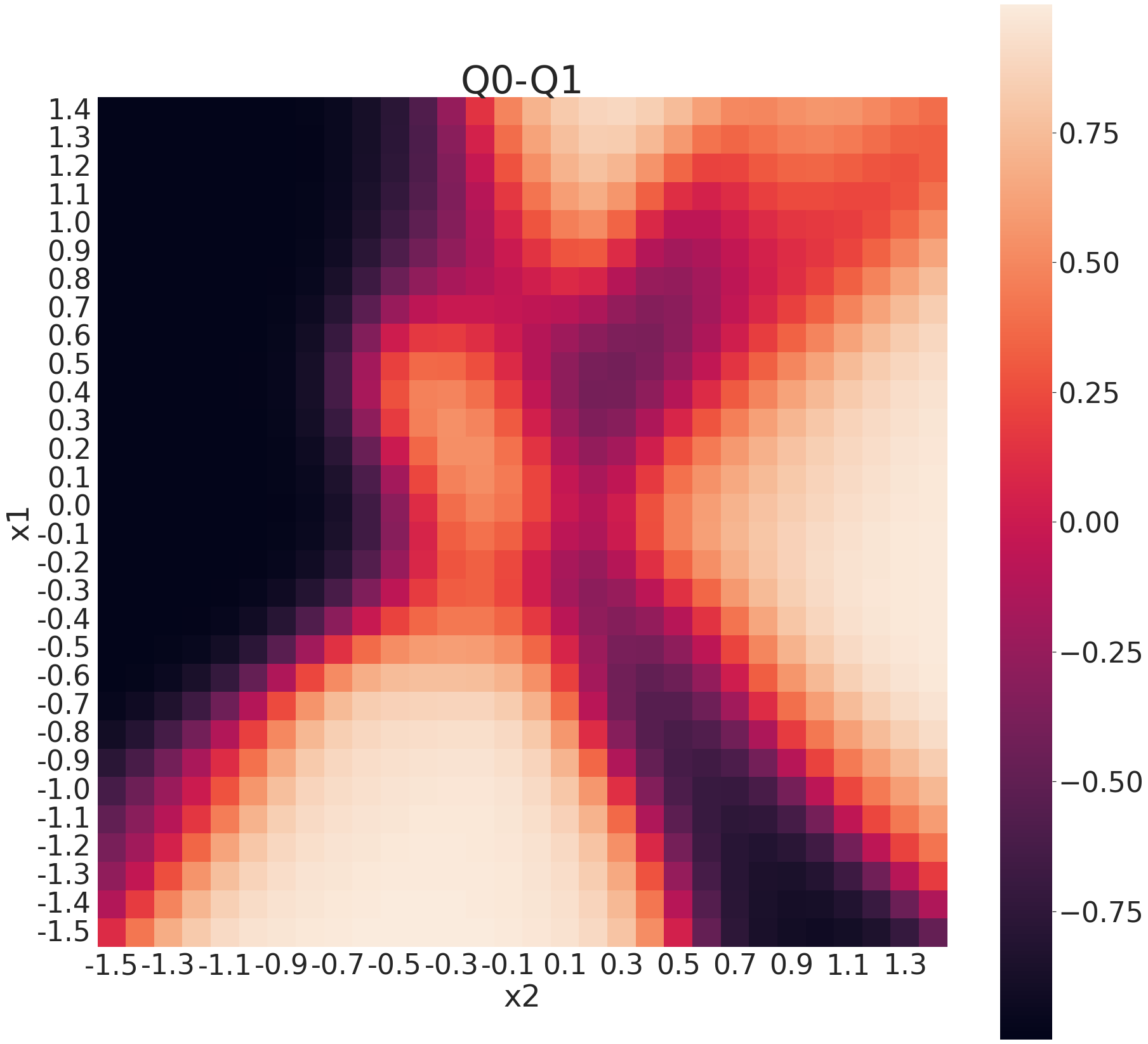}
        \captionsetup{labelformat=empty}
    \end{minipage}\hfill
    \begin{minipage}[l]{0.2\linewidth}
        \centering
        \includegraphics[trim={0 0 10cm 0}, clip, width=1\linewidth]{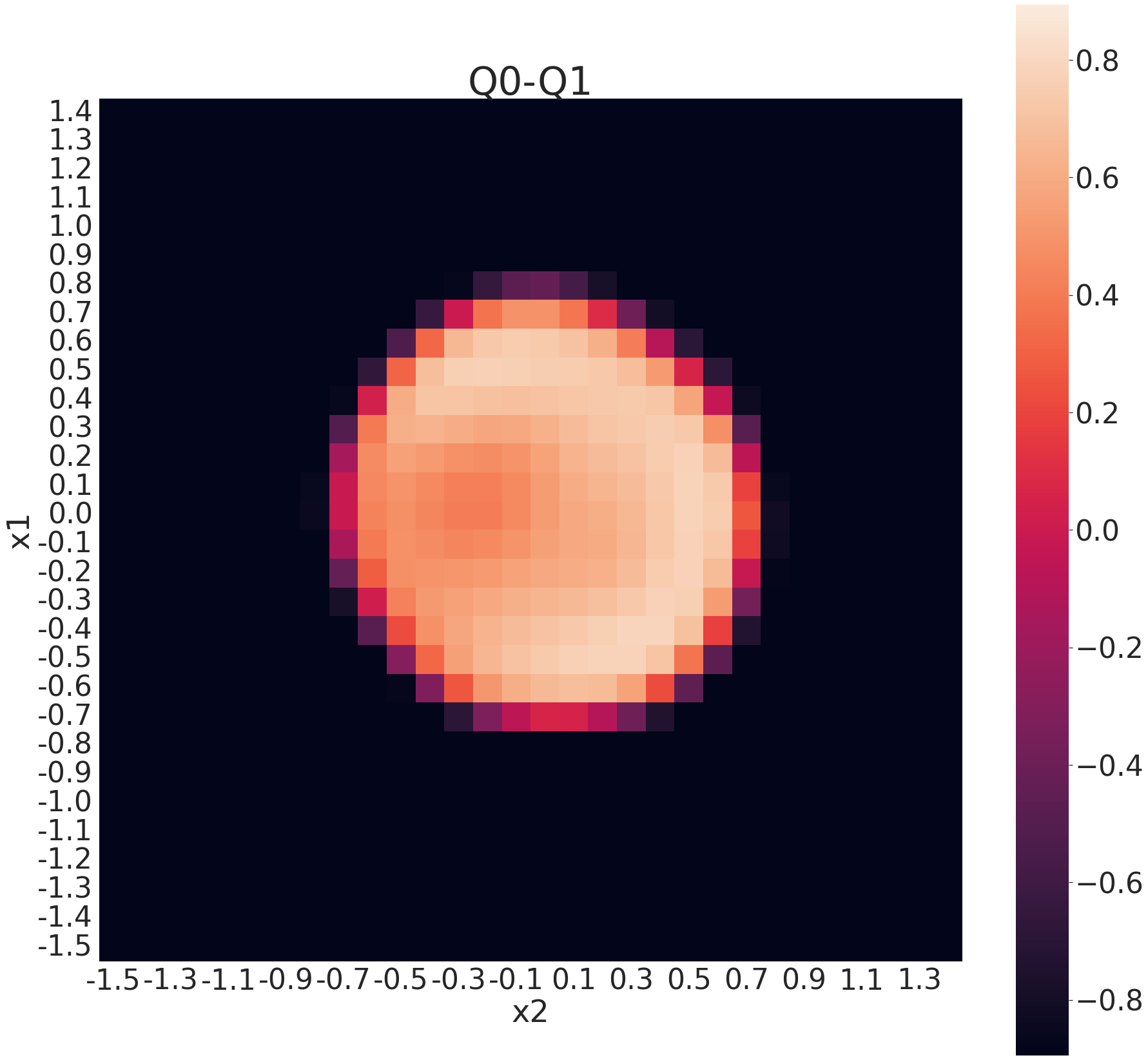}
        \captionsetup{labelformat=empty}
    \end{minipage}\hfill
    \begin{minipage}[l]{0.2\linewidth}
        \centering
        \includegraphics[trim={0 0 10cm 0}, clip, width=1\linewidth]{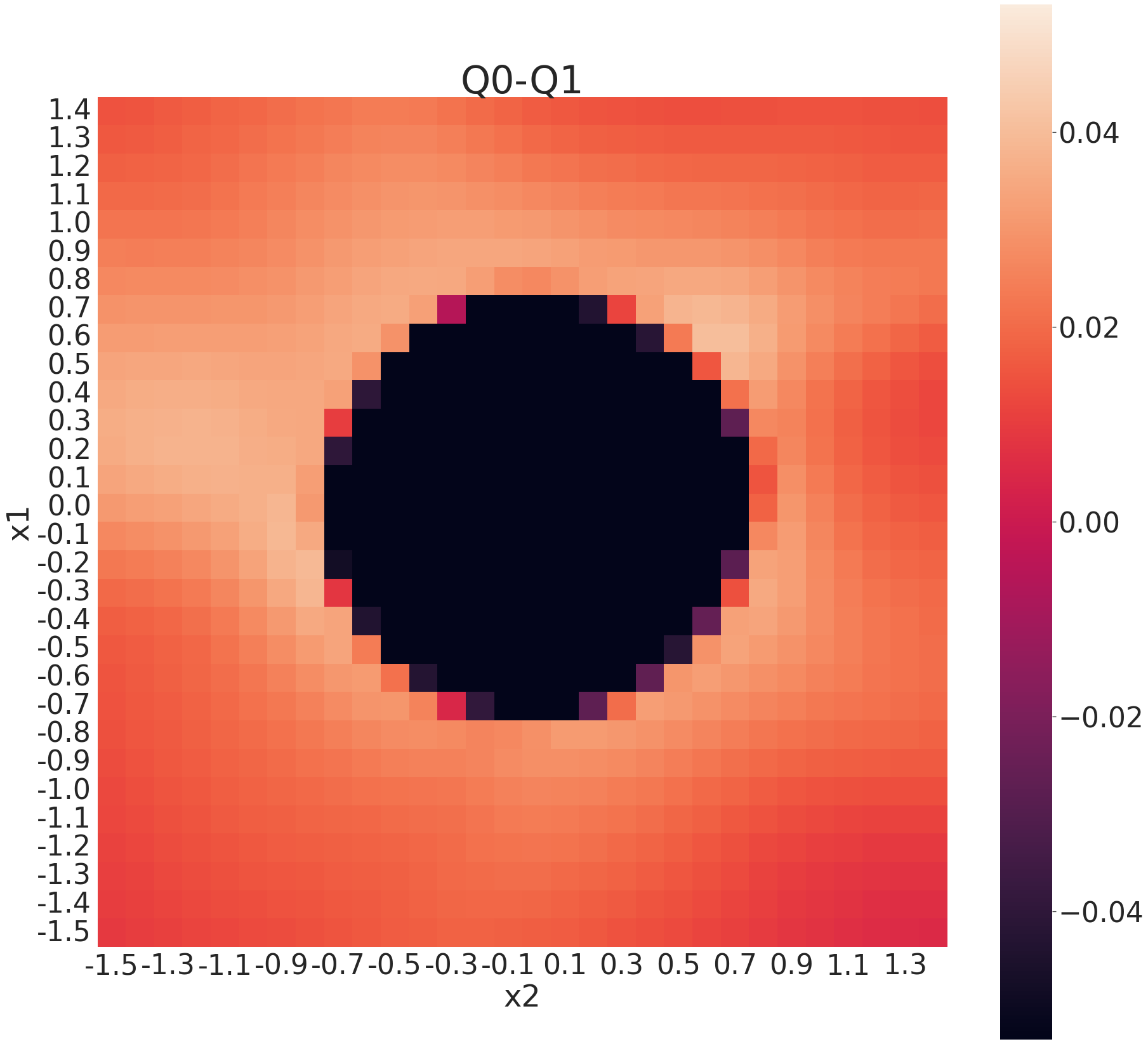}
        \captionsetup{labelformat=empty}
    \end{minipage}\hfill
    \begin{minipage}[l]{0.2\linewidth}
        \centering
        \includegraphics[trim={0 0 10cm 0}, clip, width=1\linewidth]{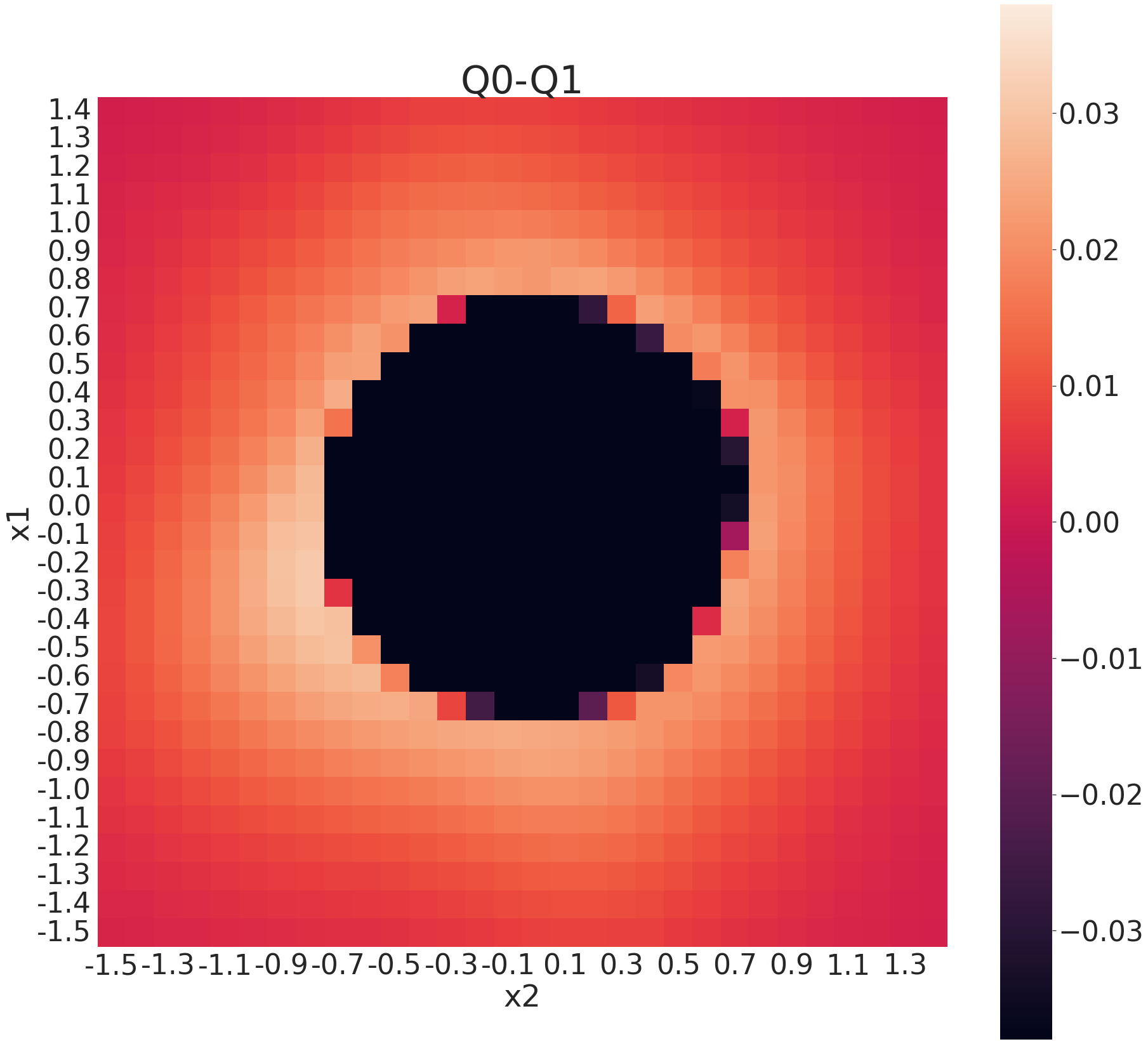}
        \captionsetup{labelformat=empty}
    \end{minipage}\hfill
    \begin{minipage}[l]{0.2\linewidth}
        \centering
        \includegraphics[trim={0 0 10cm 0}, clip, width=1\linewidth]{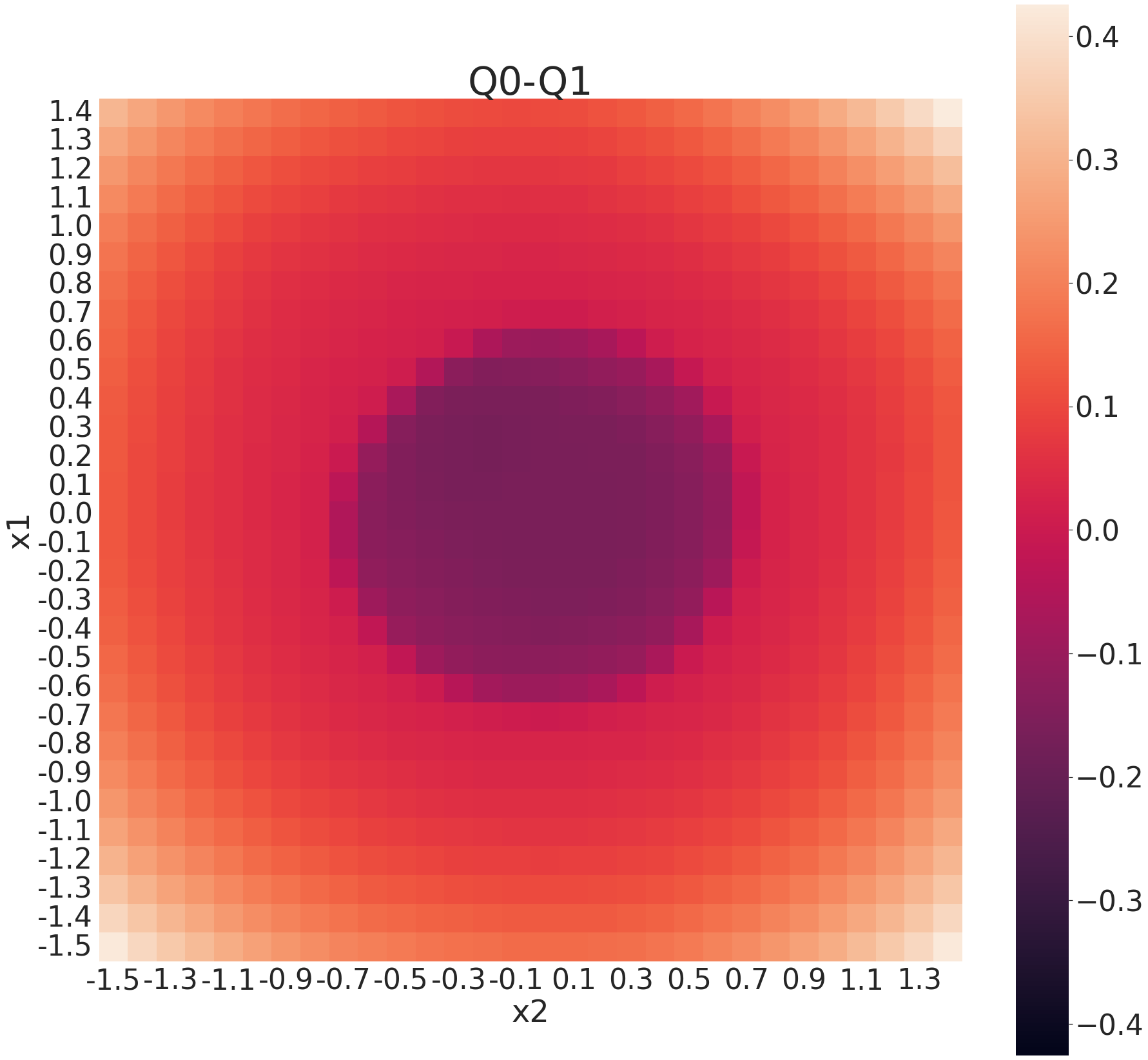}
        \captionsetup{labelformat=empty}
    \end{minipage}\hfill

    \begin{minipage}[l]{0.2\linewidth}
        \centering
        \includegraphics[trim={0 0 0 0}, clip, width=1\linewidth]{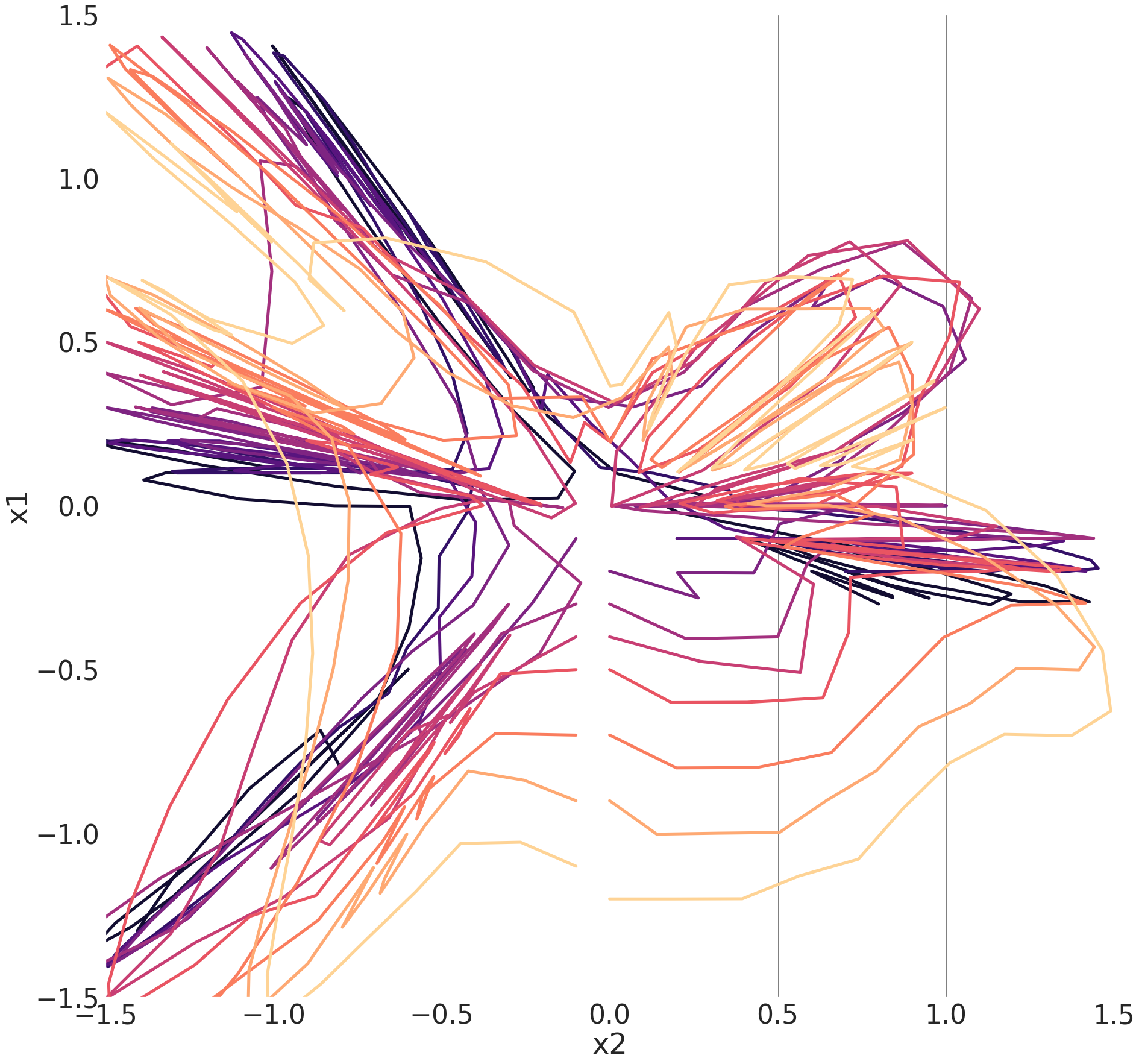}
        \captionsetup{labelformat=empty}
        \caption{Classifier}
    \end{minipage}\hfill
    \begin{minipage}[l]{0.2\linewidth}
        \centering
        \includegraphics[trim={0 0 0 0}, clip, width=1\linewidth]{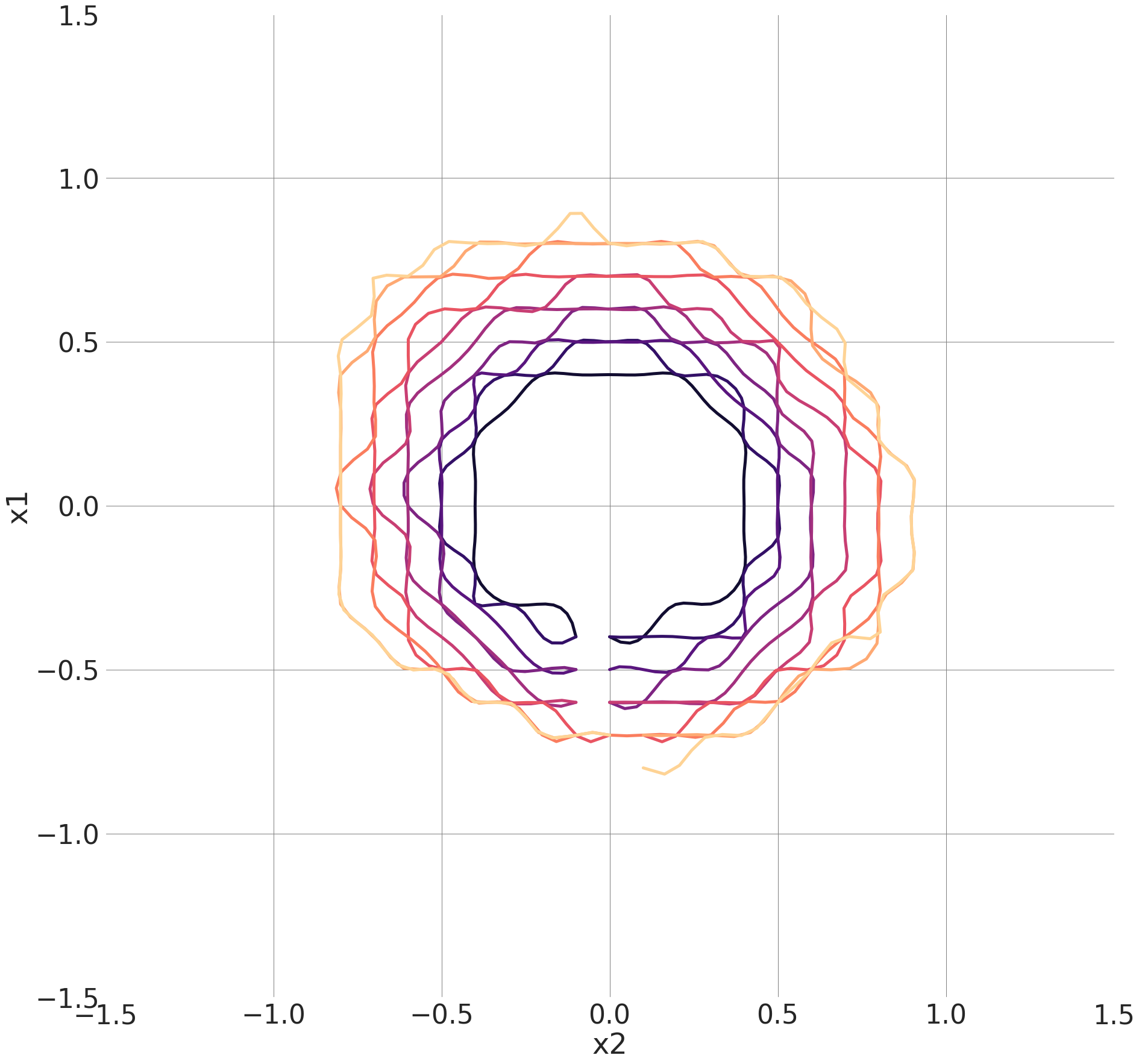}
        \captionsetup{labelformat=empty}
        \captionsetup{justification=centering}
        \caption{IQS}
    \end{minipage}\hfill
    \begin{minipage}[l]{0.2\linewidth}
        \centering
        \includegraphics[trim={0 0 0 0}, clip, width=1\linewidth]{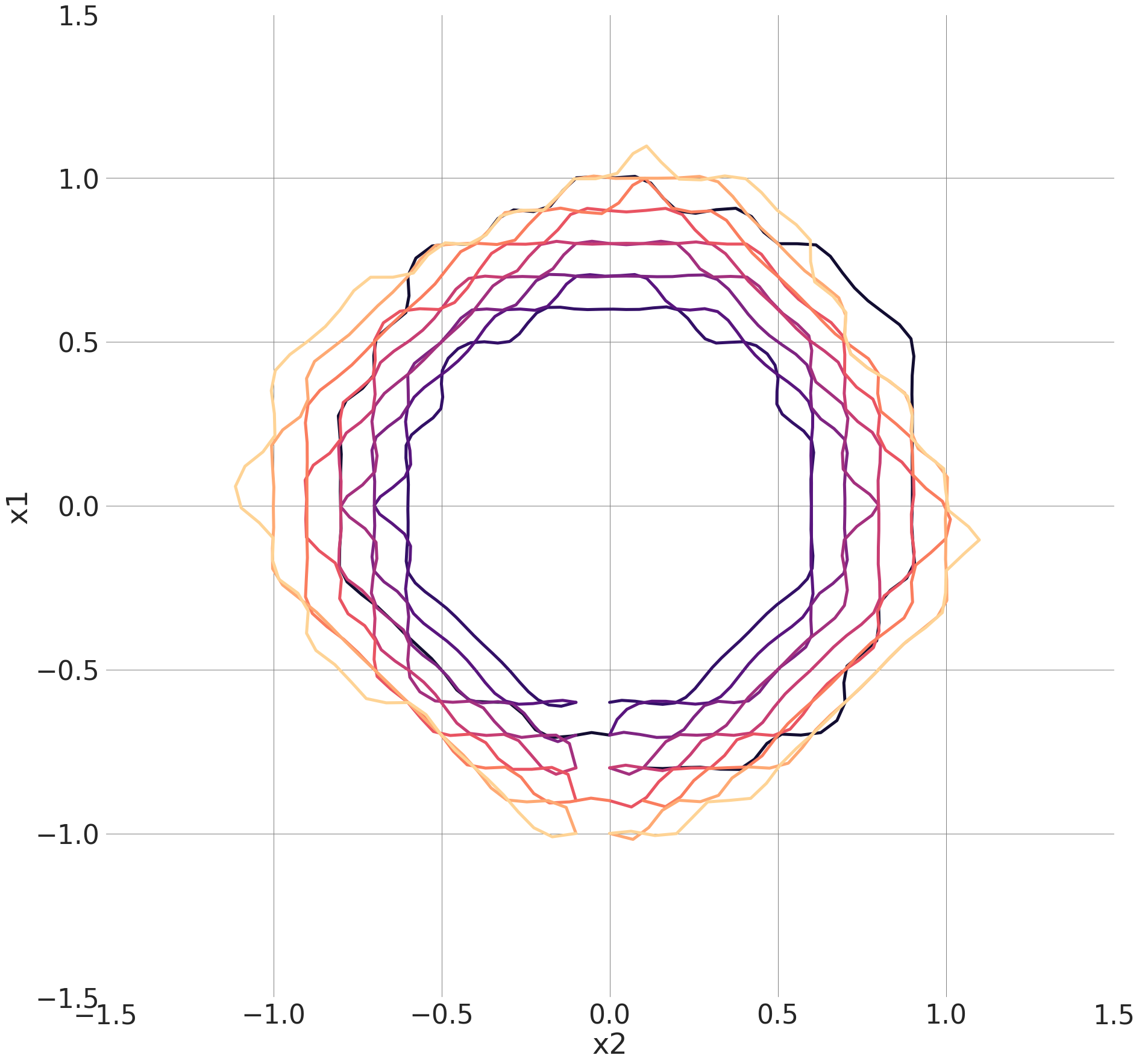}
        \captionsetup{labelformat=empty}
        \captionsetup{justification=centering}
        \caption{IQS-CS-SMOTE}
    \end{minipage}\hfill
    \begin{minipage}[l]{0.2\linewidth}
        \centering
        \includegraphics[trim={0 0 0 0}, clip, width=1\linewidth]{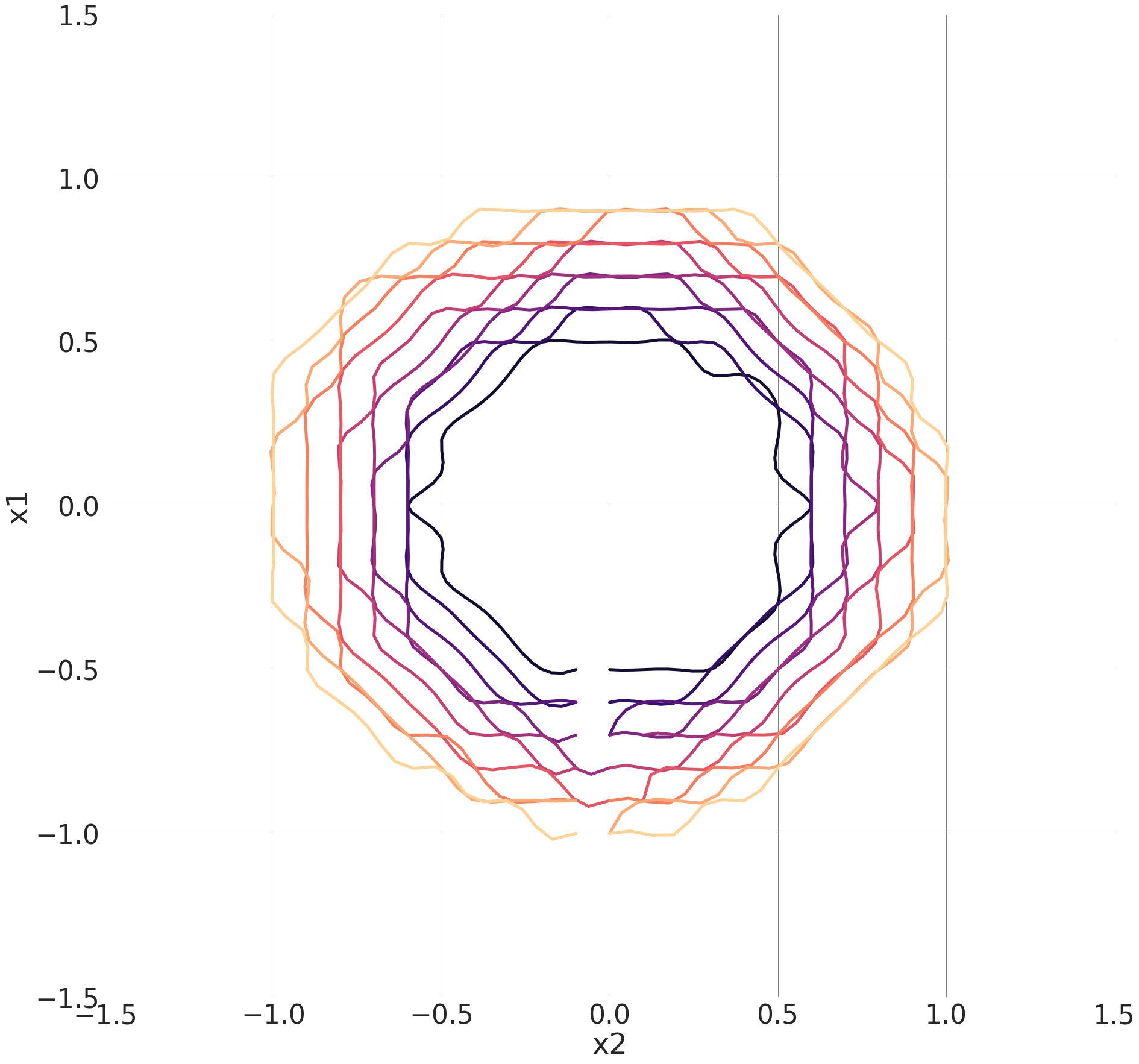}
        \captionsetup{labelformat=empty}
        \captionsetup{justification=centering}
        \caption{Model-based IQS}
    \end{minipage}\hfill
    \begin{minipage}[l]{0.2\linewidth}
        \centering
        \includegraphics[trim={0 0 0 0}, clip, width=1\linewidth]{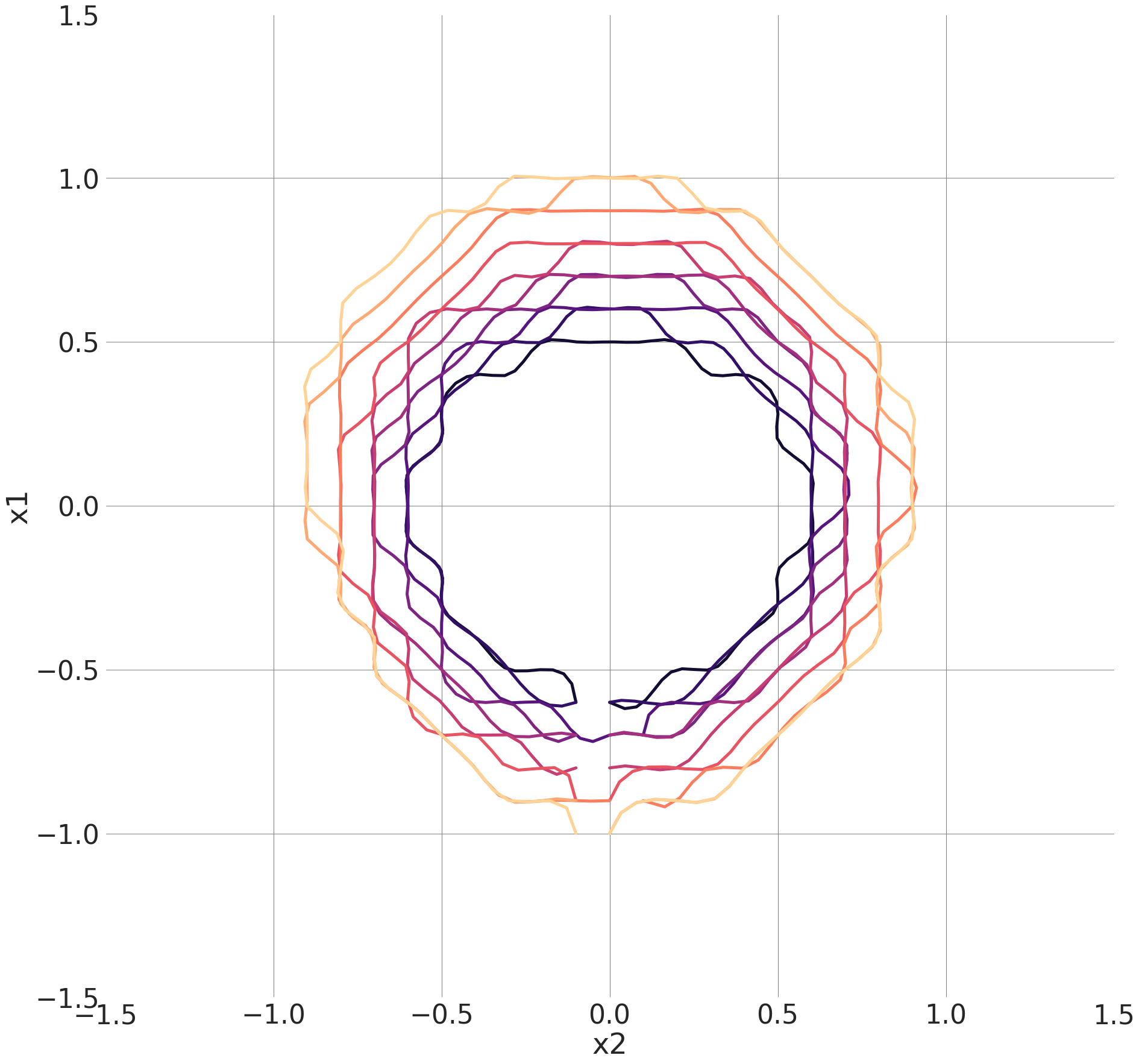}
        \captionsetup{labelformat=empty}
        \captionsetup{justification=centering}
        \caption{Model-based IQS-CS-SMOTE}
    \end{minipage}\hfill

    \begin{minipage}[l]{0.152\linewidth}
        % \centering
        \includegraphics[trim={55cm 0 0cm 0}, clip, width=1\linewidth, angle=270]{images/radial_1000_iq_cssmote_heat_q0q1.png}
        \captionsetup{labelformat=empty}
    \end{minipage}
    \setcounter{figure}{12}
    \caption{\textbf{Top to bottom}: Heat-maps of \(Q(s,a=0)\), \(Q(s,a=1)\), \(Q(s,a=0)-Q(s,a=1)\) at \(t=25\), and recovered stopping boundaries per time step for the Radial example.}
    \label{fig:radial_heatmaps}
    \vskip 5pt
\end{figure*}

\begin{figure*}[ht]
    \begin{minipage}[l]{0.2\linewidth}
        \centering
        \includegraphics[trim={0 0 10cm 0}, clip, width=1\linewidth]{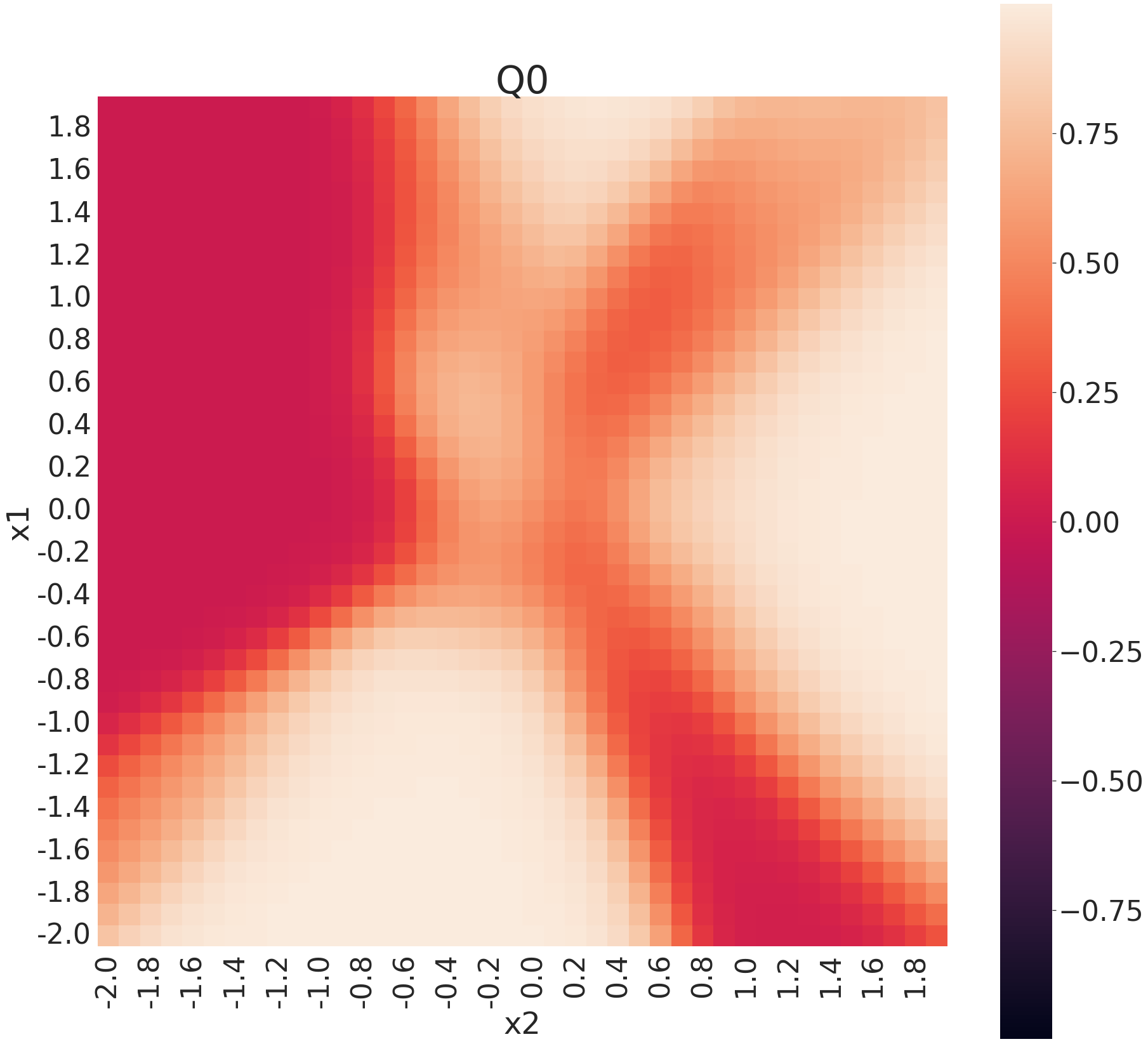}
        \captionsetup{labelformat=empty}
    \end{minipage}\hfill
    \begin{minipage}[l]{0.2\linewidth}
        \centering
        \includegraphics[trim={0 0 10cm 0}, clip, width=1\linewidth]{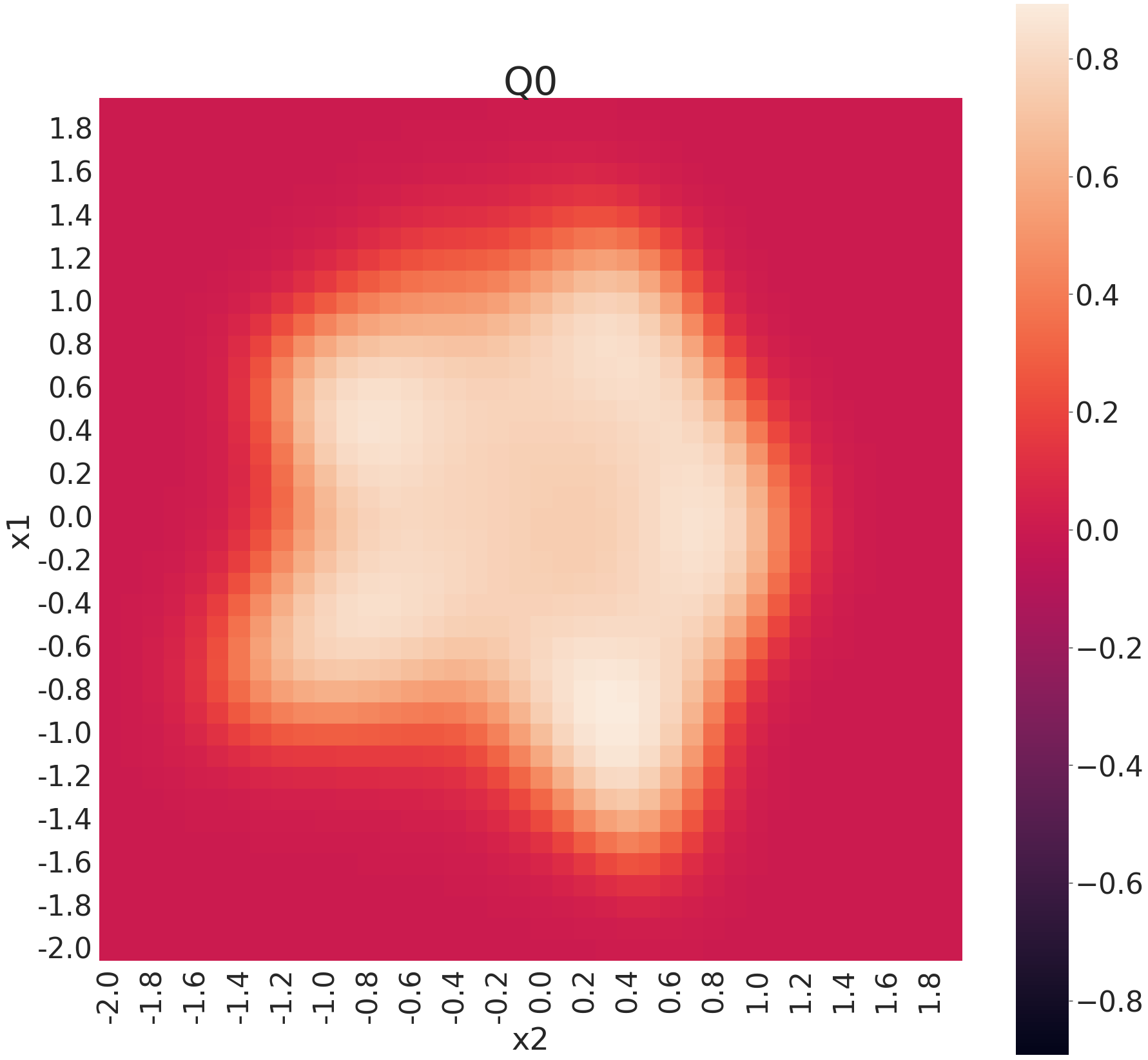}
        \captionsetup{labelformat=empty}
    \end{minipage}\hfill
    \begin{minipage}[l]{0.2\linewidth}
        \centering
        \includegraphics[trim={0 0 10cm 0}, clip, width=1\linewidth]{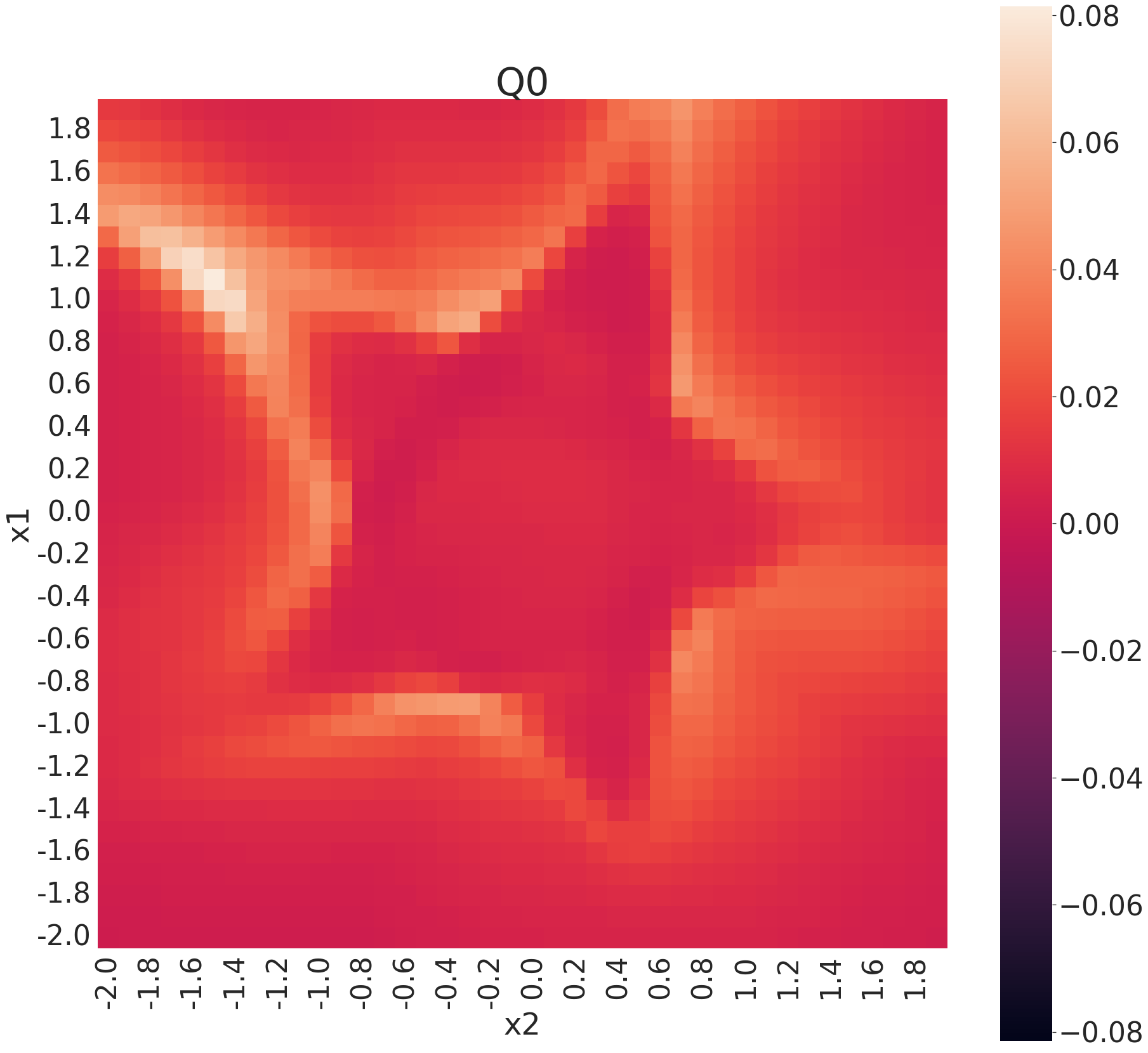}
        \captionsetup{labelformat=empty}
    \end{minipage}\hfill
    \begin{minipage}[l]{0.2\linewidth}
        \centering
        \includegraphics[trim={0 0 10cm 0}, clip, width=1\linewidth]{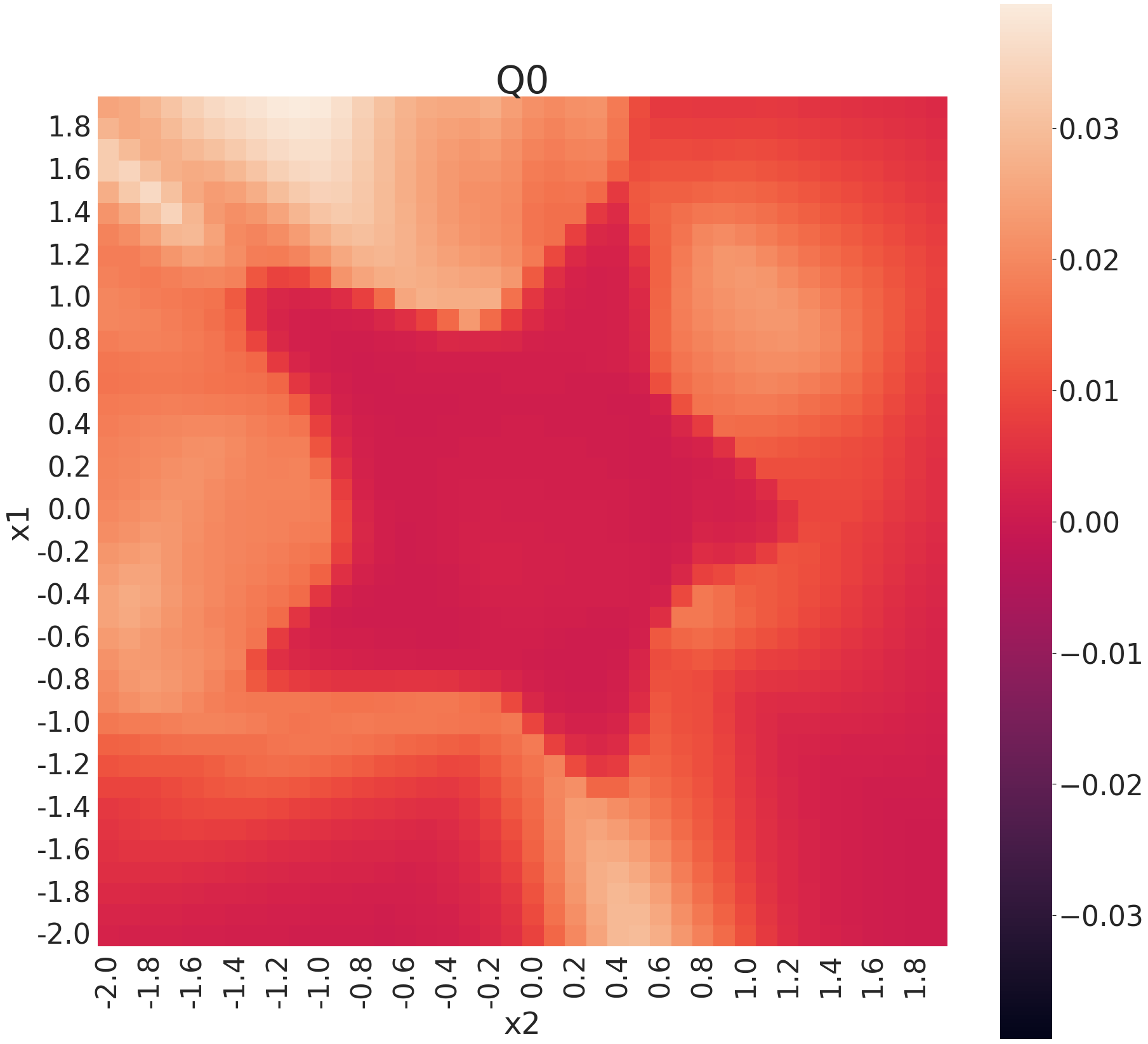}
        \captionsetup{labelformat=empty}
    \end{minipage}\hfill
    \begin{minipage}[l]{0.2\linewidth}
        \centering
        \includegraphics[trim={0 0 10cm 0}, clip, width=1\linewidth]{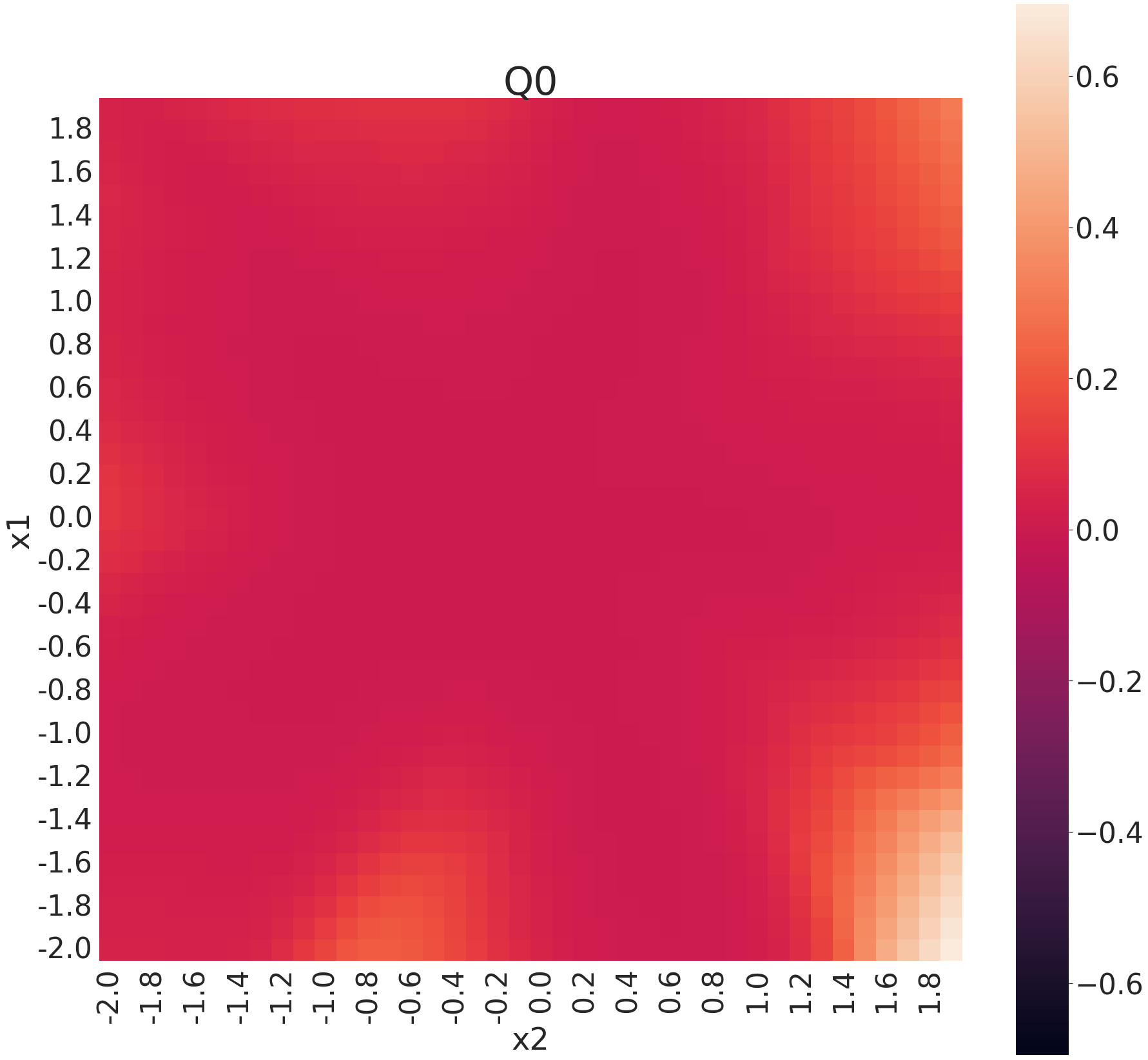}
        \captionsetup{labelformat=empty}
    \end{minipage}\hfill

    \begin{minipage}[l]{0.2\linewidth}
        \centering
        \includegraphics[trim={0 0 10cm 0}, clip, width=1\linewidth]{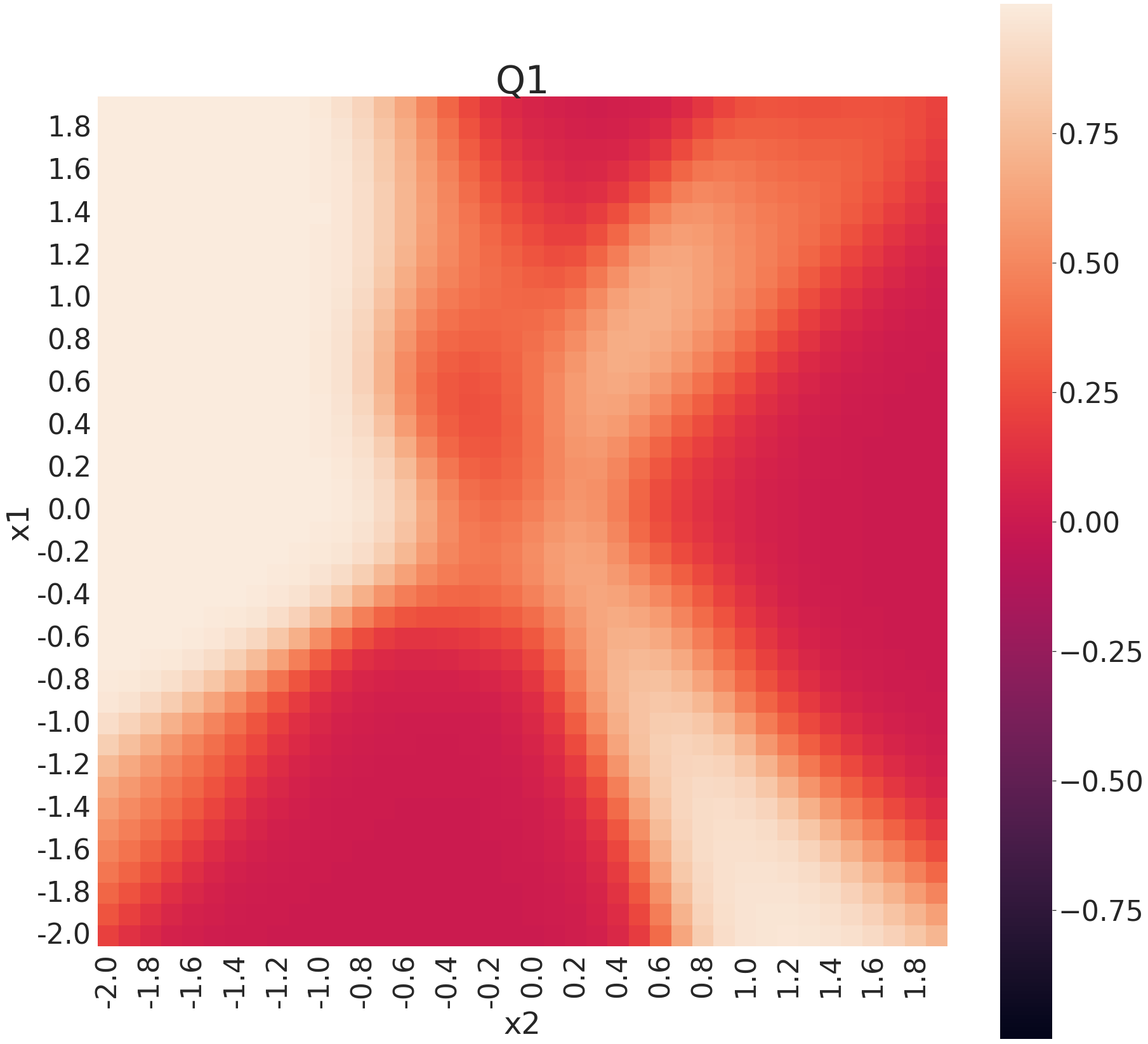}
        \captionsetup{labelformat=empty}
    \end{minipage}\hfill
    \begin{minipage}[l]{0.2\linewidth}
        \centering
        \includegraphics[trim={0 0 10cm 0}, clip, width=1\linewidth]{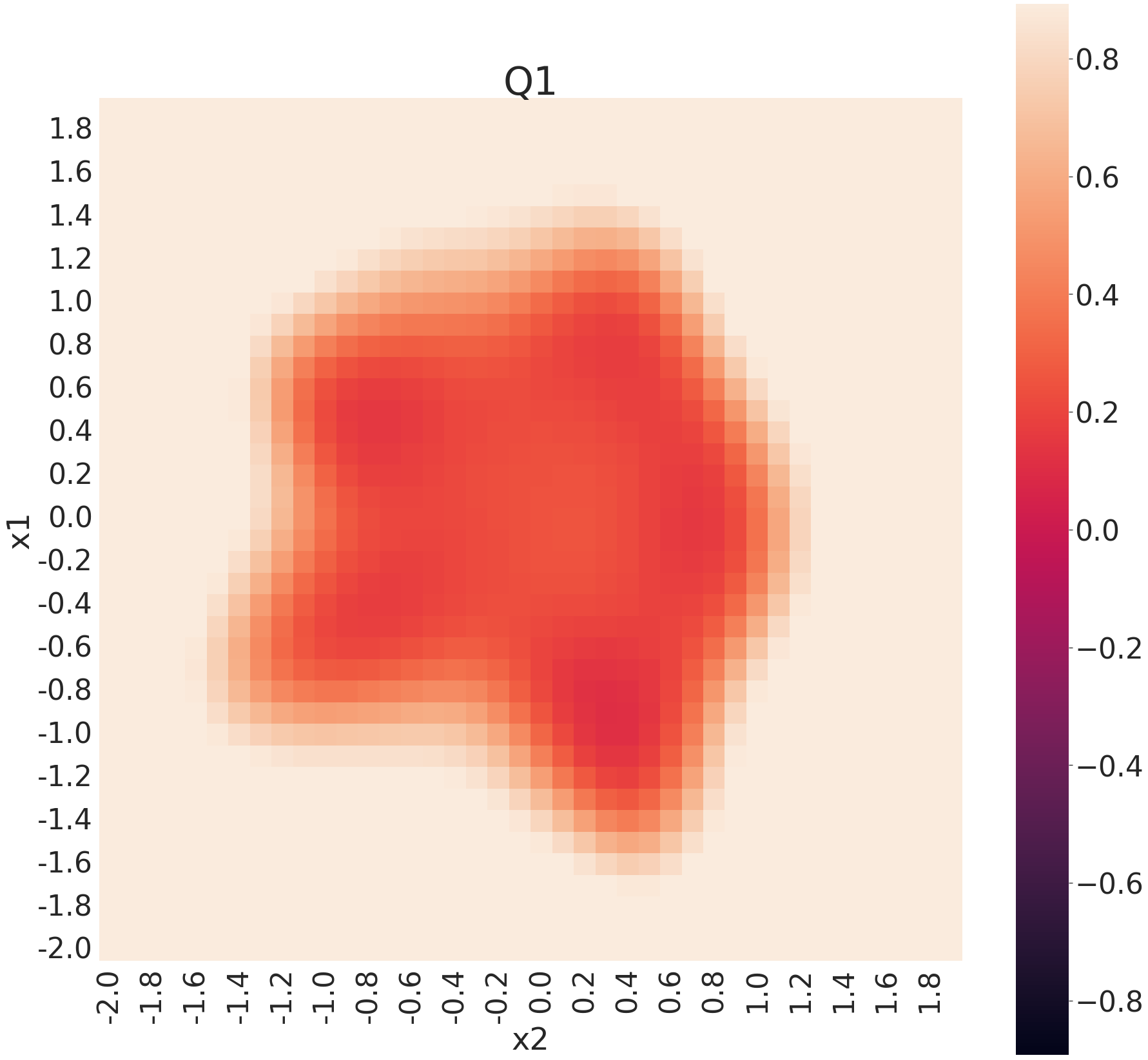}
        \captionsetup{labelformat=empty}
    \end{minipage}\hfill
    \begin{minipage}[l]{0.2\linewidth}
        \centering
        \includegraphics[trim={0 0 10cm 0}, clip, width=1\linewidth]{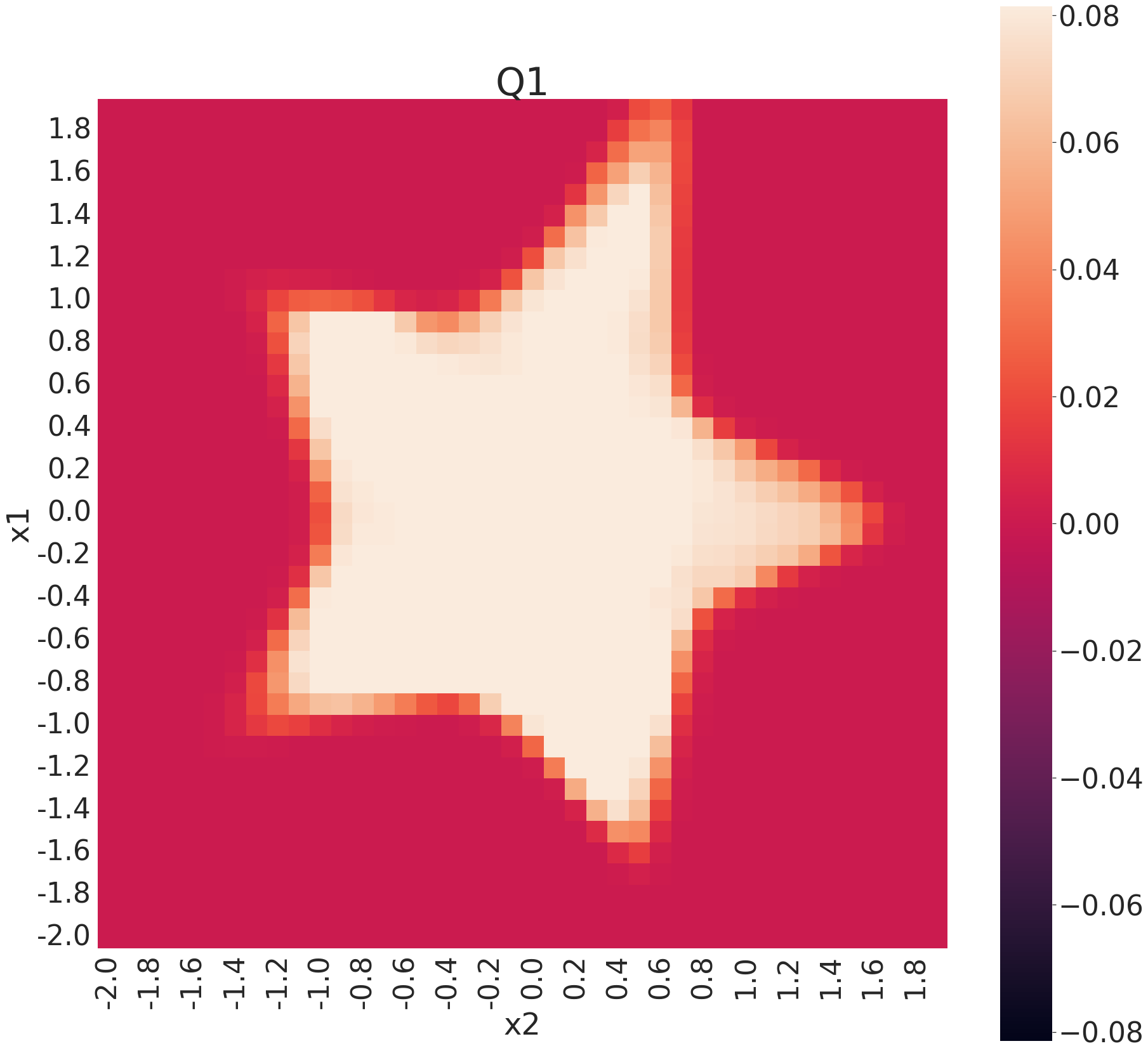}
        \captionsetup{labelformat=empty}
    \end{minipage}\hfill
    \begin{minipage}[l]{0.2\linewidth}
        \centering
        \includegraphics[trim={0 0 10cm 0}, clip, width=1\linewidth]{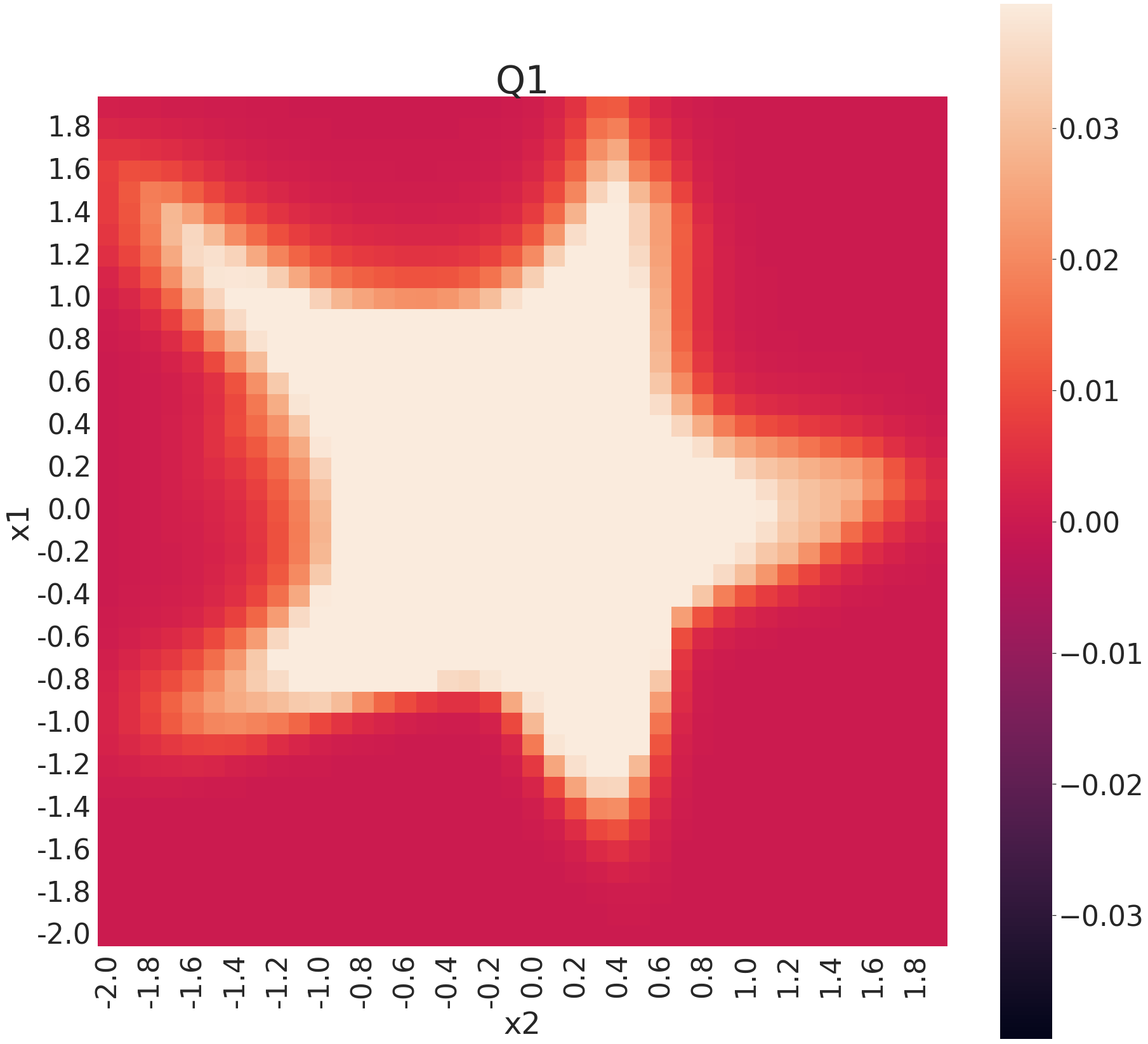}
        \captionsetup{labelformat=empty}
    \end{minipage}\hfill
    \begin{minipage}[l]{0.2\linewidth}
        \centering
        \includegraphics[trim={0 0 10cm 0}, clip, width=1\linewidth]{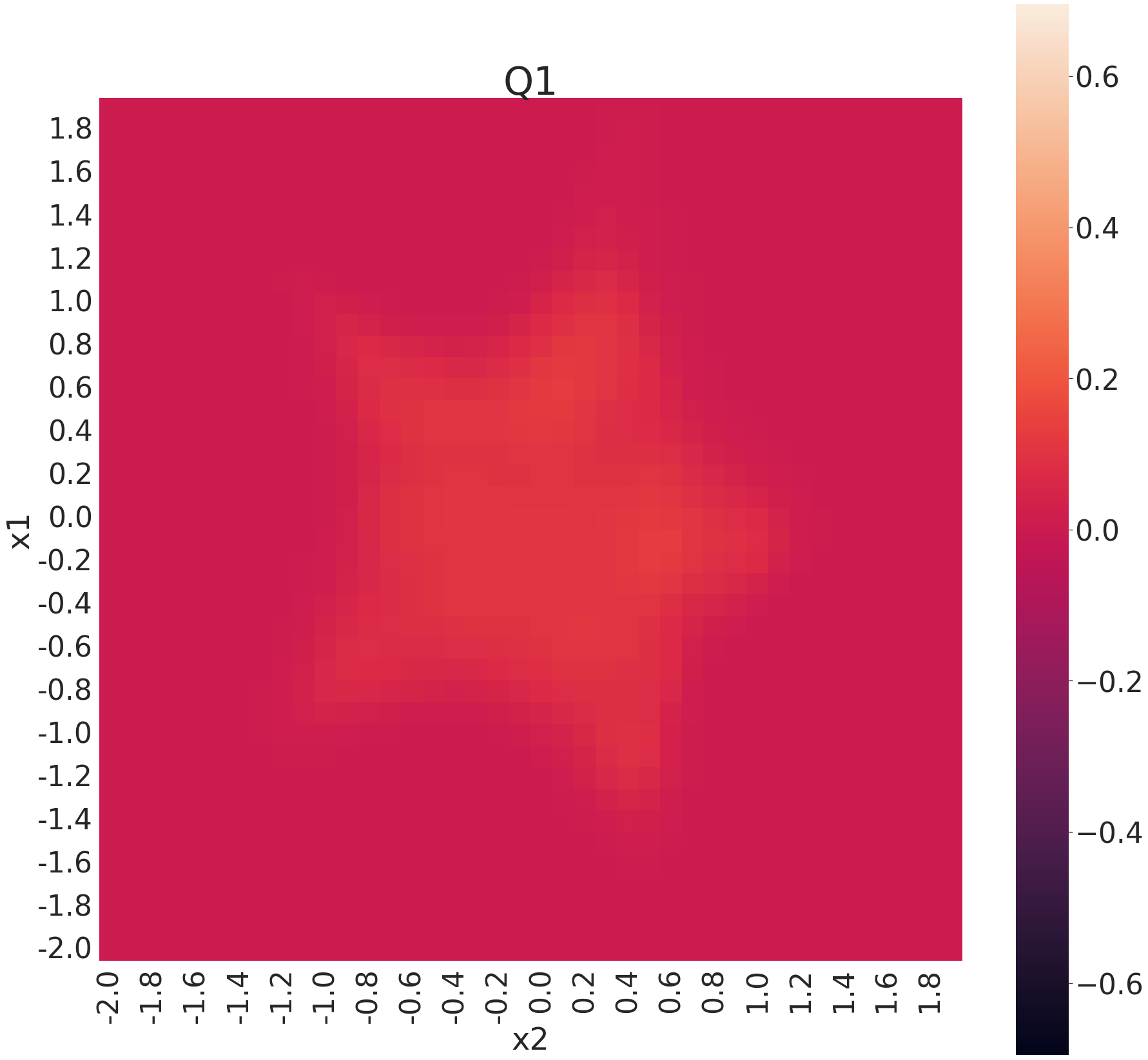}
        \captionsetup{labelformat=empty}
    \end{minipage}\hfill

    \begin{minipage}[l]{0.2\linewidth}
        \centering
        \includegraphics[trim={0 0 10cm 0}, clip, width=1\linewidth]{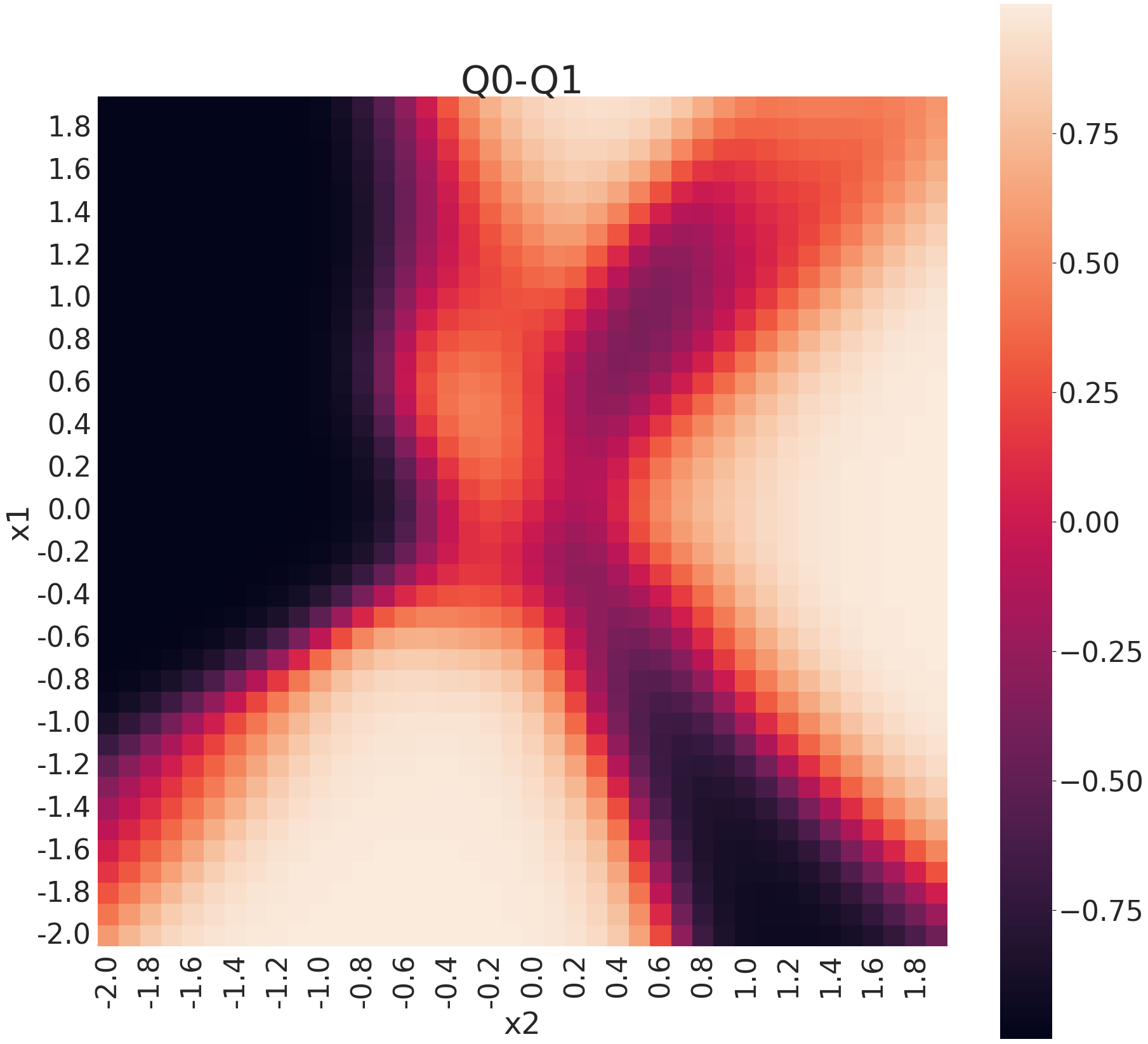}
        \captionsetup{labelformat=empty}
    \end{minipage}\hfill
    \begin{minipage}[l]{0.2\linewidth}
        \centering
        \includegraphics[trim={0 0 10cm 0}, clip, width=1\linewidth]{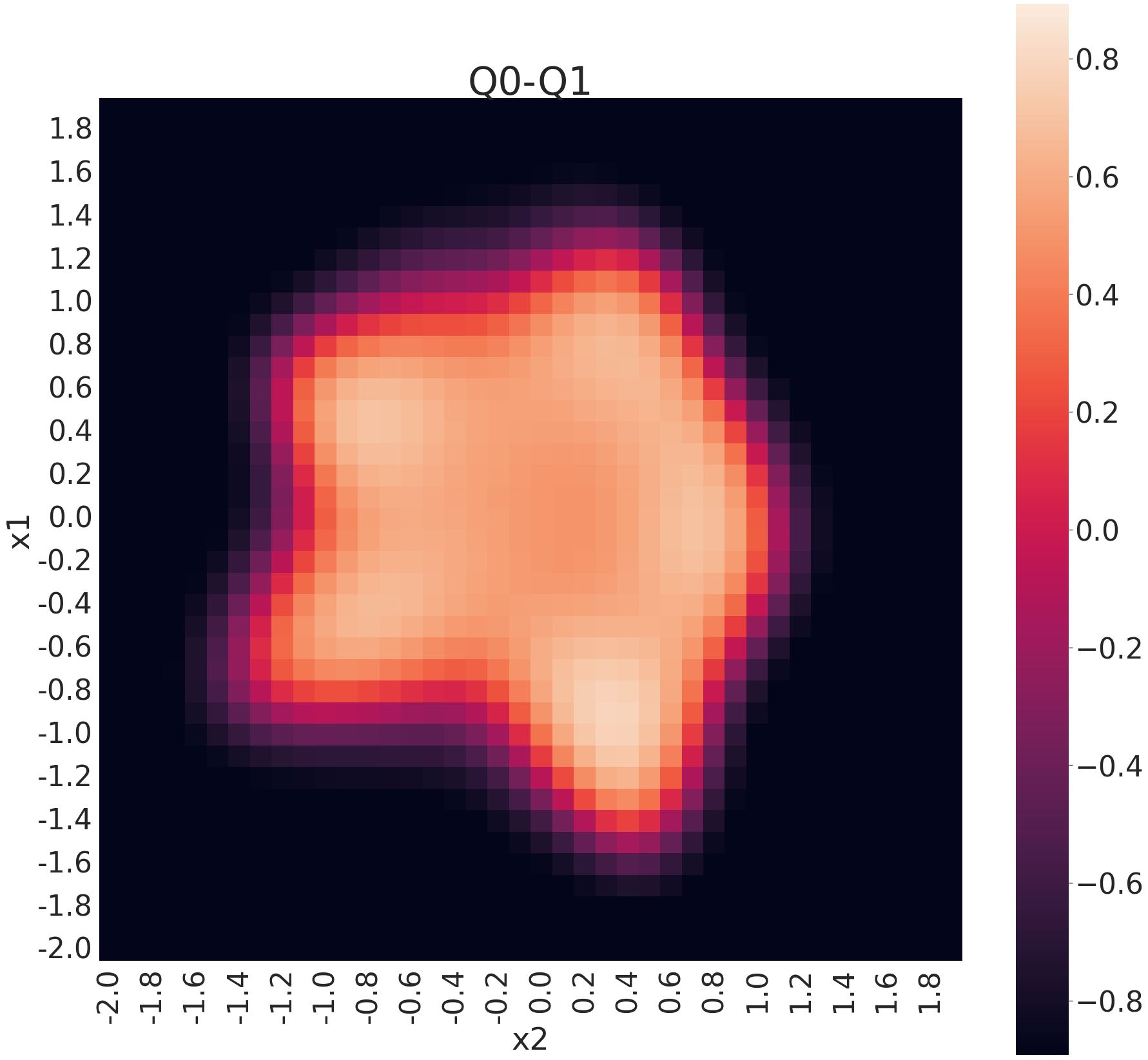}
        \captionsetup{labelformat=empty}
    \end{minipage}\hfill
    \begin{minipage}[l]{0.2\linewidth}
        \centering
        \includegraphics[trim={0 0 10cm 0}, clip, width=1\linewidth]{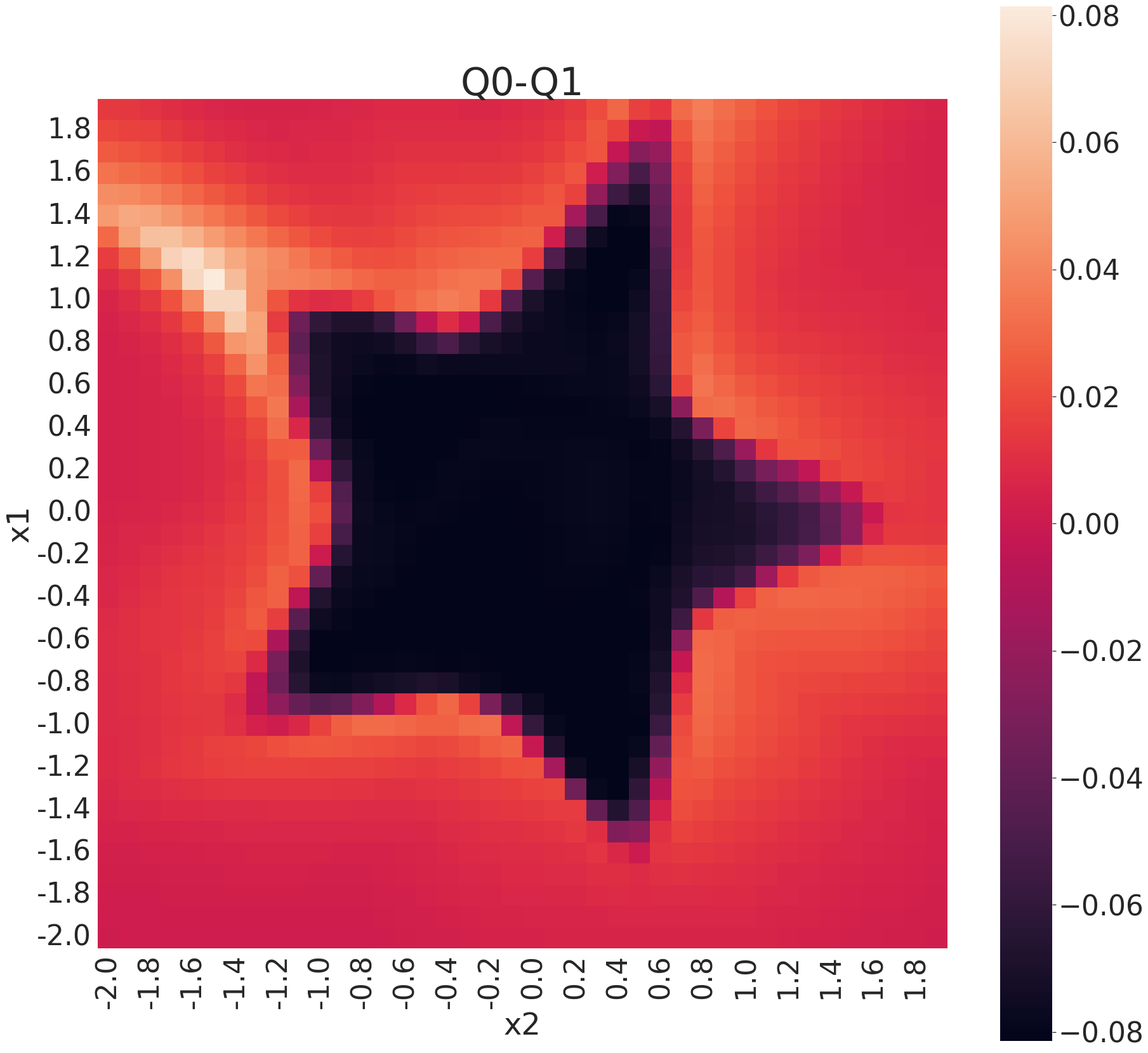}
        \captionsetup{labelformat=empty}
    \end{minipage}\hfill
    \begin{minipage}[l]{0.2\linewidth}
        \centering
        \includegraphics[trim={0 0 10cm 0}, clip, width=1\linewidth]{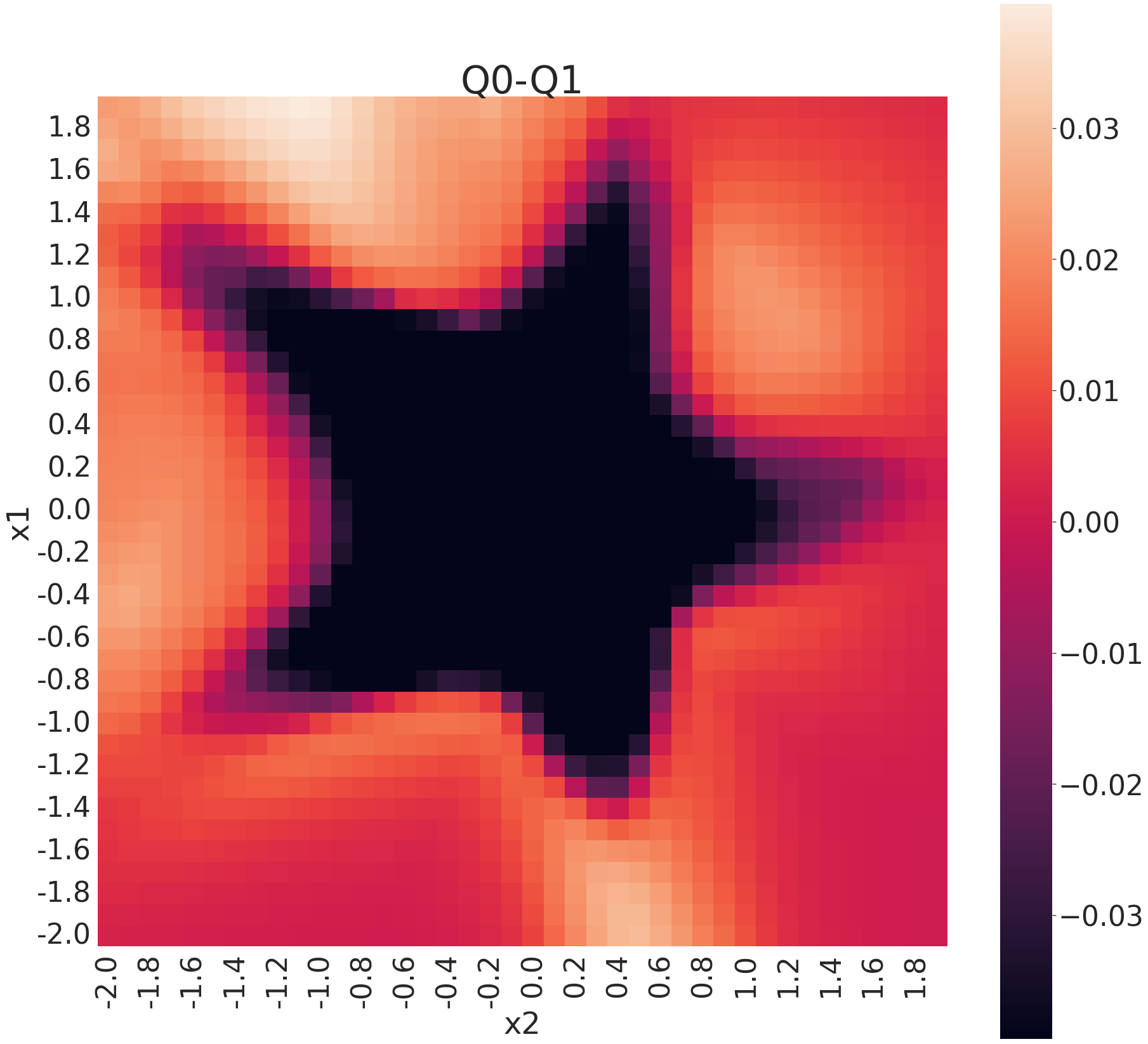}
        \captionsetup{labelformat=empty}
    \end{minipage}\hfill
    \begin{minipage}[l]{0.2\linewidth}
        \centering
        \includegraphics[trim={0 0 10cm 0}, clip, width=1\linewidth]{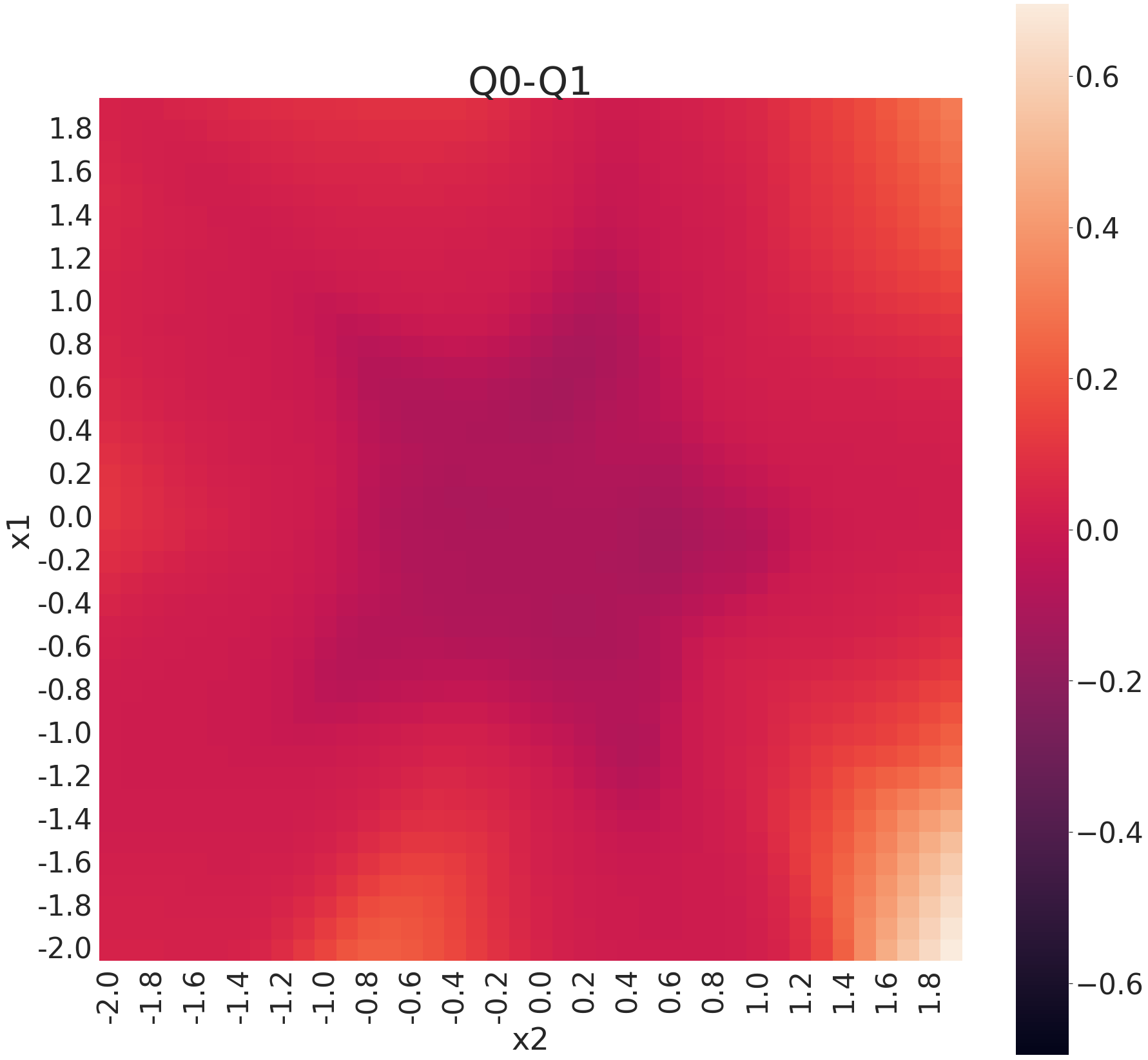}
        \captionsetup{labelformat=empty}
    \end{minipage}\hfill

    \begin{minipage}[l]{0.2\linewidth}
        \centering
        \includegraphics[trim={0 0 0 0}, clip, width=1\linewidth]{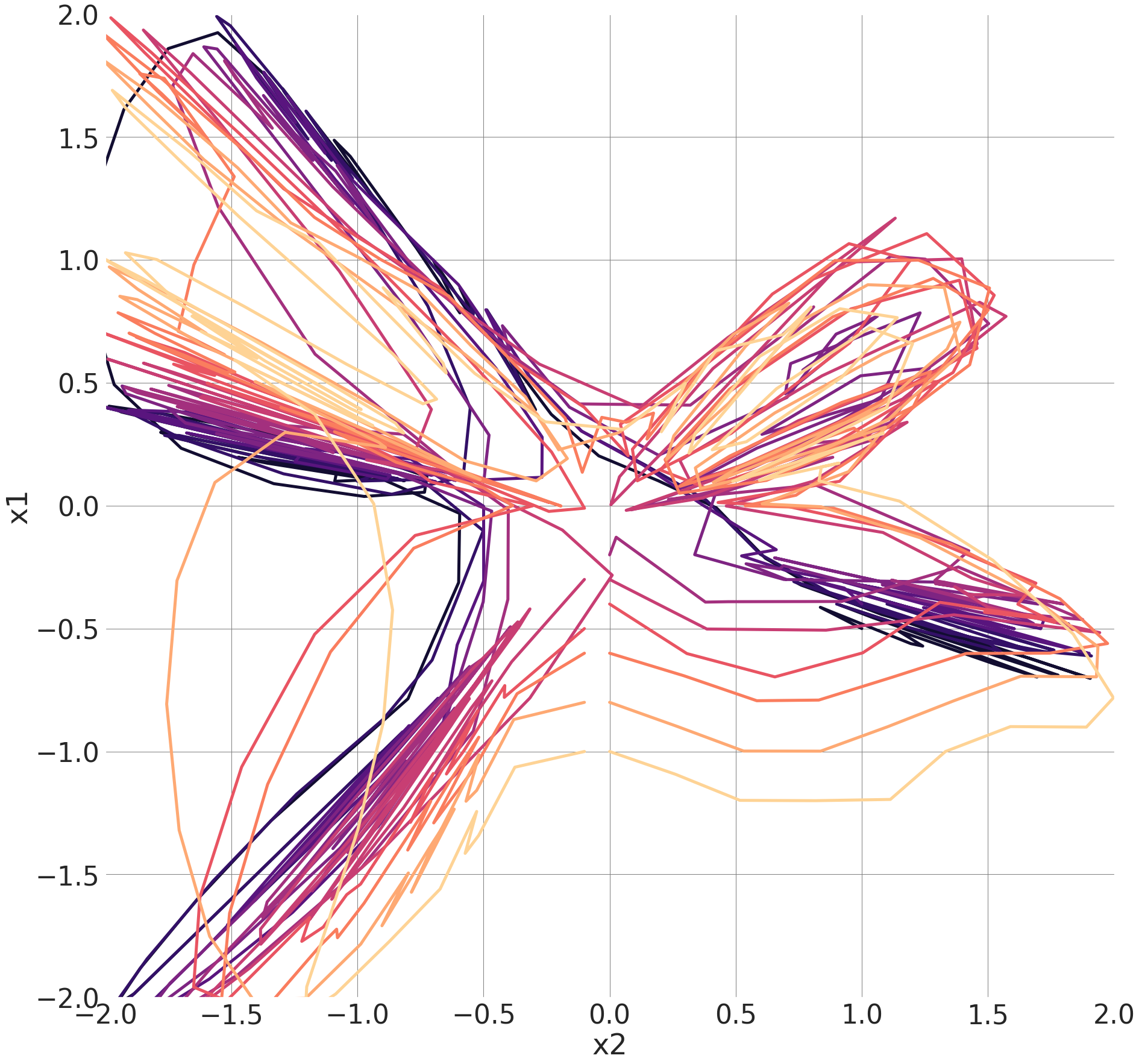}
        \captionsetup{labelformat=empty}
        \captionsetup{justification=centering}
        \caption{Classifier}
    \end{minipage}\hfill
    \begin{minipage}[l]{0.2\linewidth}
        \centering
        \includegraphics[trim={0 0 0 0}, clip, width=1\linewidth]{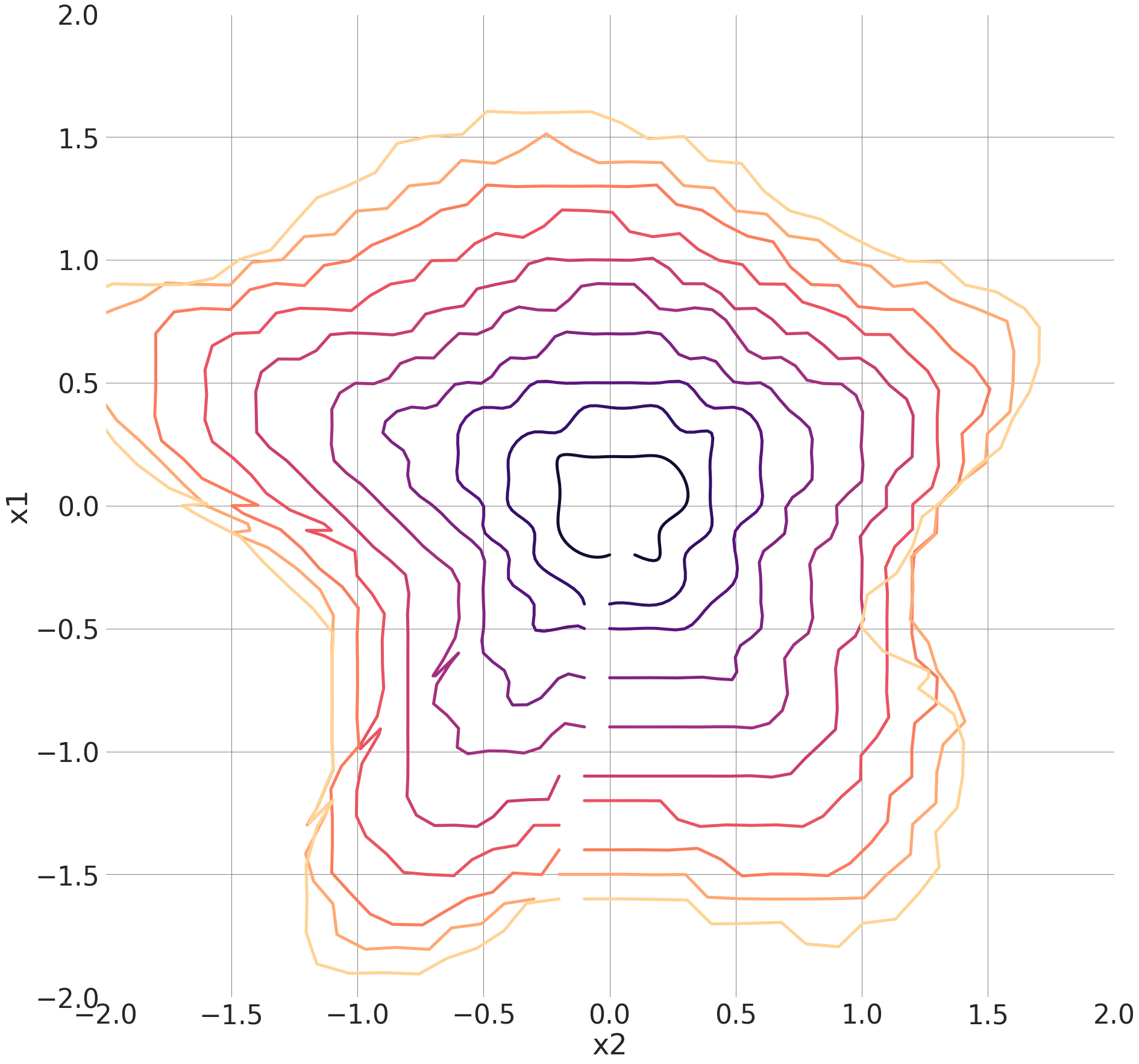}
        \captionsetup{labelformat=empty}
        \captionsetup{justification=centering}
        \caption{IQS}
    \end{minipage}\hfill
    \begin{minipage}[l]{0.2\linewidth}
        \centering
        \includegraphics[trim={0 0 0 0}, clip, width=1\linewidth]{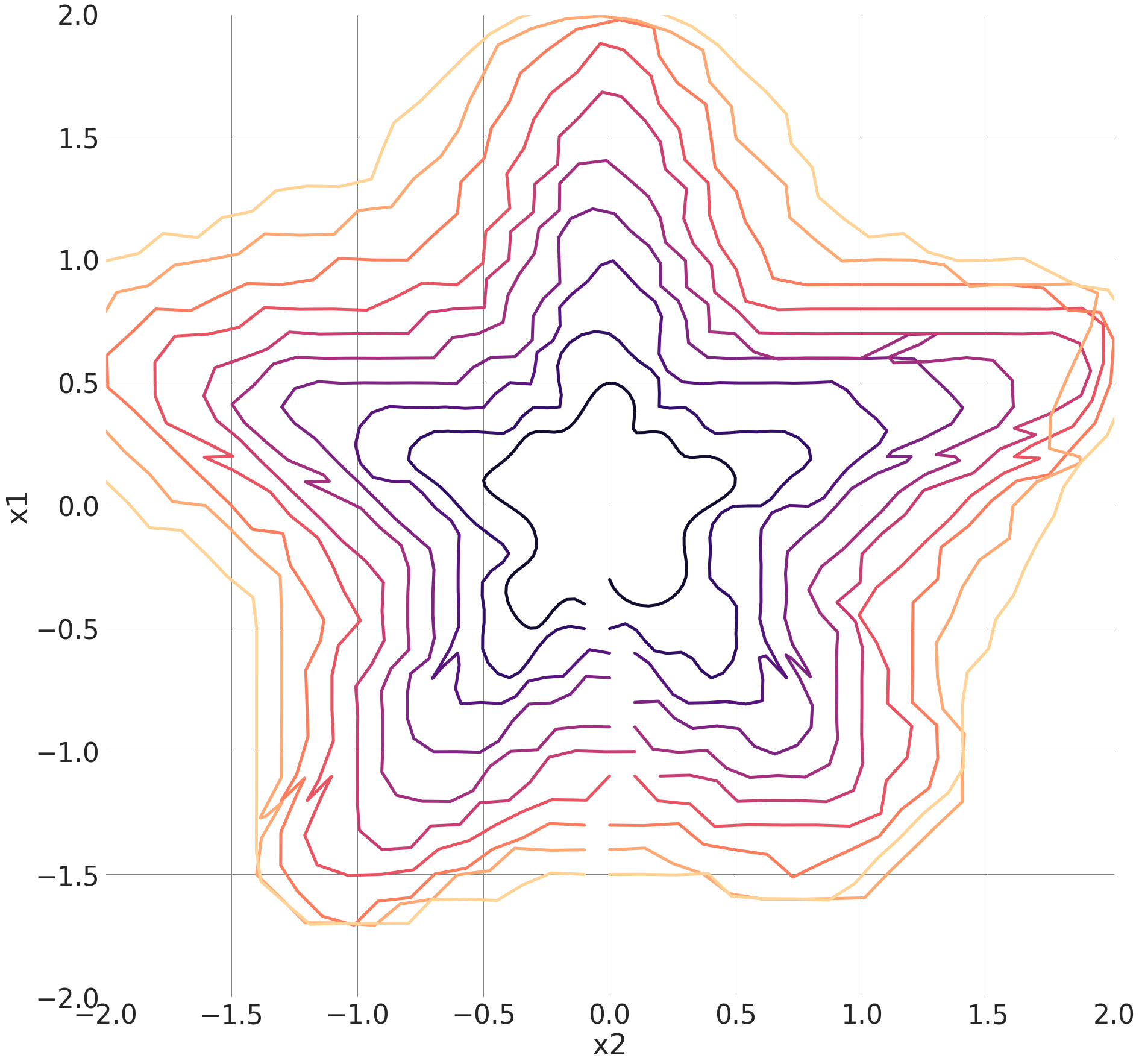}
        \captionsetup{labelformat=empty}
        \captionsetup{justification=centering}
        \caption{IQS-CS-SMOTE}
    \end{minipage}\hfill
    \begin{minipage}[l]{0.2\linewidth}
        \centering
        \includegraphics[trim={0 0 0 0}, clip, width=1\linewidth]{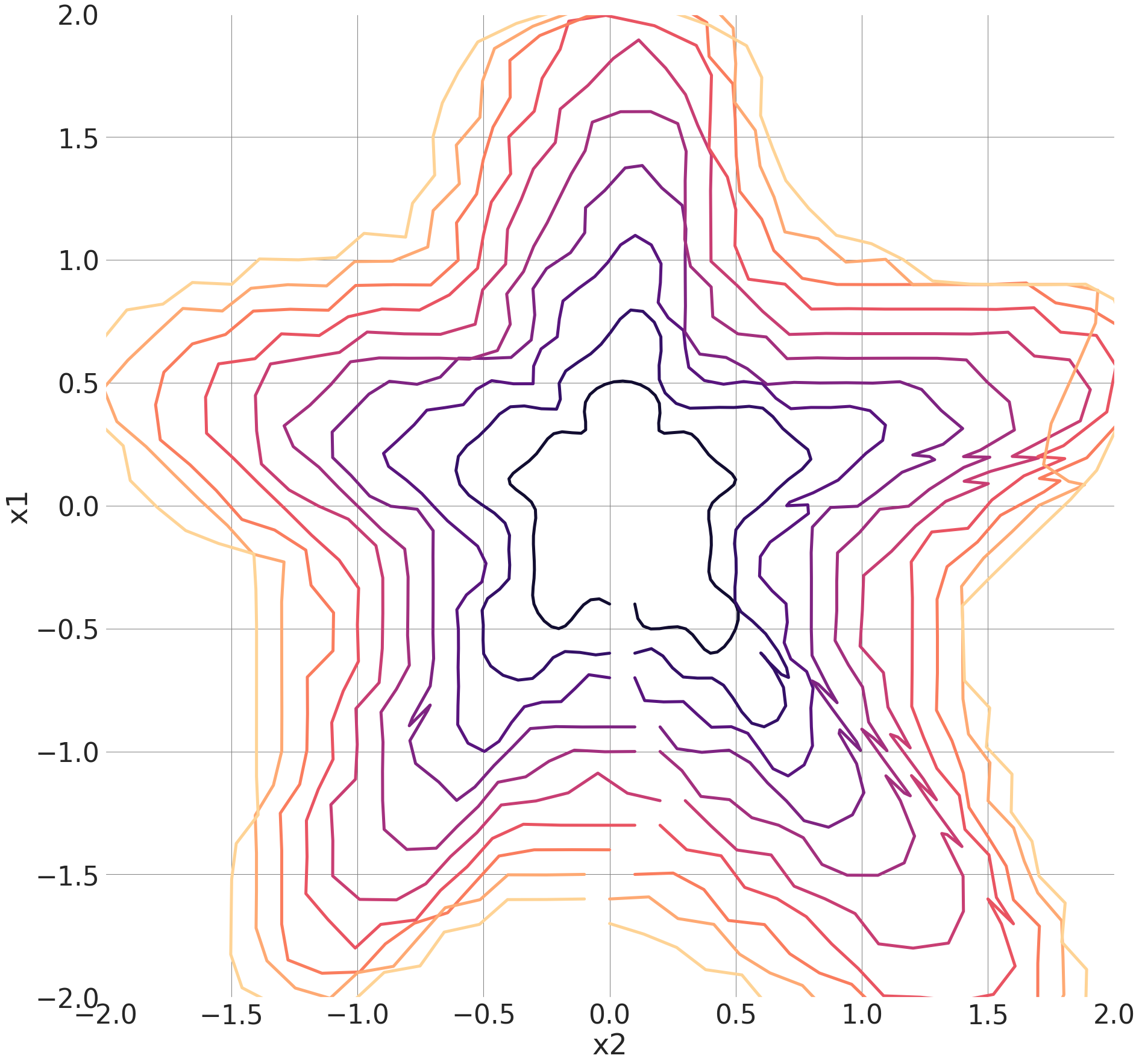}
        \captionsetup{labelformat=empty}
        \captionsetup{justification=centering}
        \caption{Model-based IQS}
    \end{minipage}\hfill
    \begin{minipage}[l]{0.2\linewidth}
        \centering
        \includegraphics[trim={0 0 0 0}, clip, width=1\linewidth]{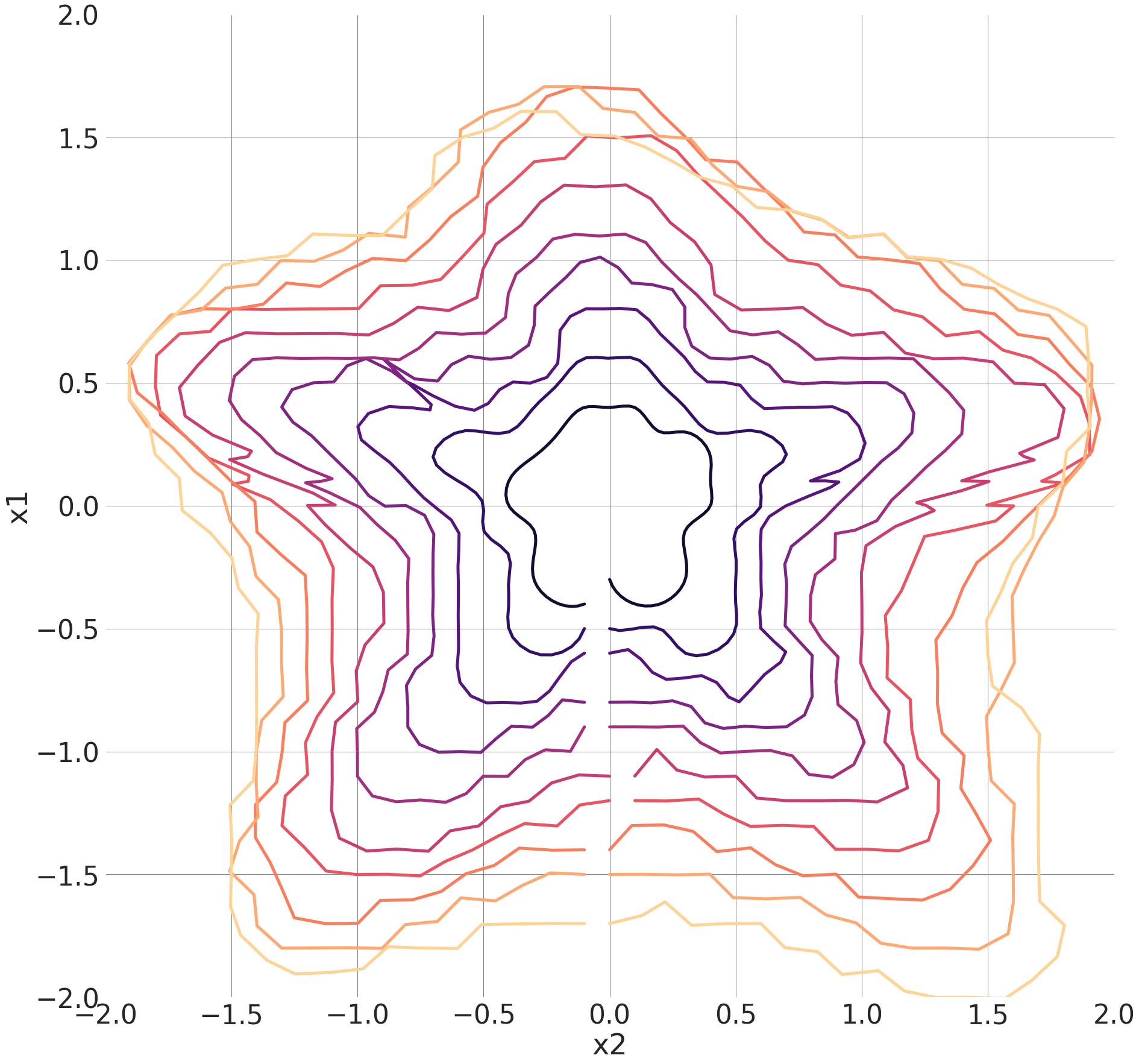}
        \captionsetup{labelformat=empty}
        \captionsetup{justification=centering}
        \caption{Model-based IQS-CS-SMOTE}
    \end{minipage}\hfill

    \begin{minipage}[l]{0.152\linewidth}
        % \centering
        \includegraphics[trim={55cm 0 0cm 0}, clip, width=1\linewidth, angle=270]{images/star_1000_iq_cssmote_heat_q0q1.png}
        \captionsetup{labelformat=empty}
    \end{minipage}
    \setcounter{figure}{13}
    \caption{\textbf{Top to bottom}: Heat-maps of \(Q(s,a=0)\), \(Q(s,a=1)\), \(Q(s,a=0)-Q(s,a=1)\) at \(t=25\), and recovered stopping boundaries per time step for the Star example.}
    \label{fig:star_heatmaps}
    \vskip 5pt
\end{figure*}
\subsection{Robustness to discount factor misspecification}
One of the caveats in solving IOS not discussed so far is that the discount factor used by the expert may not always be available to the inverse learner. We first note that the discount factor misspecification could be intrinsically corrected by IOS, if the time is the part of the state space. To see that, assume that the expert used the discount factor \(\gamma_E\) and a reward function \(r_E\) to simulate stopped trajectories. Define a discount factor as a function of time such that \(\gamma^t=\gamma(t)\).Then \(V_E(s)=\mathbb{E}_s\left[\sum_{t=0}^\infty\gamma_E^t r(s_t,a_t)\right]=\mathbb{E}_s\left[\sum_{t=0}^\infty\gamma_E(t) r(s_t,a_t)\right]=\mathbb{E}_s\left[\sum_{t=0}^\infty\tilde{r}(s_t,a_t)\right]\). Hence, the Q-function recovered by the IOS algorithm should be able to account for the discount factor misspecification.
% To confirm this, we simulate data as in example \ref{} with the discount factor \(\gamma=0.75\) and run our algorithm multiple times setting the discount factor to be \(\left[0.4,0.45,0.5,0.65,0.7,0.75,0.8,0.85,0.9,0.95,1\right]\). We repeat the experiment for 10 random seeds taking values from 0 to 10 and report the results.

\clearpage
\end{document}